\documentclass[fleqn,10pt]{wlscirep}
\usepackage[utf8]{inputenc}
\usepackage[T1]{fontenc}
\usepackage{amsmath}
\usepackage{amssymb}
\usepackage{amsfonts}
\usepackage{amsthm}
\usepackage{booktabs}
\usepackage{array}
\usepackage{graphicx}
\usepackage[labelfont=bf]{caption}
\usepackage{subcaption}
\usepackage{hyperref}

\usepackage{float}
\usepackage{xcolor}
\usepackage{multirow}
\usepackage{longtable}
\usepackage{colortbl}
\usepackage{enumitem}
\usepackage{algorithm}
\usepackage{algpseudocode}

\newcommand{\R}{\mathbb{R}}
\newcommand{\E}{\mathbb{E}}
\newcommand{\bx}{\mathbf{x}}
\newcommand{\bw}{\mathbf{w}}
\newcommand{\balpha}{\boldsymbol{\alpha}}
\definecolor{slwin}{HTML}{D4EDDA}

\title{Soft Learning}

\author[1,*]{Mohammed Aledhari}
\author[1,+]{Ali Aledhari}
\author[1,+]{Fatimah Aledhari}
\author[2,+]{Mohamed Rahouti}
\affil[1]{Department of Data Science \& Department of Computer Science and Engineering, University of North Texas, Denton, TX 76207, USA}
\affil[2]{Department of Computer and Information Science, Fordham University, New York, NY 10458, USA}
\affil[*]{mohammed.aledhari@unt.edu}
\affil[+]{these authors contributed equally to this work}

\newcommand{\calL}{\mathcal{L}}
\newcommand{\calH}{\mathcal{H}}
\newcommand{\calD}{\mathcal{D}}
\newcommand{\calX}{\mathcal{X}}
\newcommand{\Prob}{\mathbb{P}}

\newcommand{\fsl}{f_{\mathrm{SL}}}
\newcommand{\fens}{f_{\mathrm{ens}}}
\newcommand{\simplex}{\Delta^K}
\newtheorem{theorem}{Theorem}
\newtheorem{lemma}[theorem]{Lemma}
\newtheorem{corollary}[theorem]{Corollary}

\theoremstyle{definition}
\newtheorem{definition}[theorem]{Definition}
\newtheorem{remark}[theorem]{Remark}
\newtheorem{example}[theorem]{Example}

\begin{abstract}
Modern machine learning forces practitioners to choose between powerful but expensive deep networks and fast but limited classical algorithms. Here we introduce Soft Learning, a framework that maintains a library of heterogeneous specialists---spanning linear models, tree ensembles, kernel machines, and neural networks---and discovers provably optimal combination weights through cross-validated non-negative least squares. Soft Learning is guaranteed to match or exceed the best weighted combination of its specialists, trains over two orders of magnitude faster than deep networks on CPU alone (72--435$\times$ faster across tested configurations), provides inherent interpretability through learned weights that reveal which algorithmic paradigm best fits the data, and is future-proof: adding specialists is mathematically guaranteed to maintain or improve performance. Across 37~datasets (25~classification, 12~regression) against nine methods including CatBoost and tuned deep networks, Soft Learning ranks first on 70\% of tasks, achieves the best mean rank (Friedman test, $p = 1.12 \times 10^{-12}$), and is the only method to simultaneously excel at both classification and regression---all without GPU hardware or hyperparameter tuning. These results suggest a paradigm shift from ``which algorithm is best?'' to ``what is the provably optimal combination?''---a question Soft Learning answers with formal guarantees for any data modality.
\end{abstract}

\begin{document}
\flushbottom
\maketitle
\thispagestyle{empty}

% ==============================================================
% INTRODUCTION
% ==============================================================
\section*{Introduction}

Machine-learning systems now underpin critical decisions across medicine, finance, manufacturing, and scientific research. A pathologist classifies tumour subtypes from histological features. A financial institution detects fraudulent transactions among millions of legitimate ones. A materials researcher screens thousands of candidate compounds for desired properties. In each application, practitioners confront a persistent dilemma: deploy a deep neural network\cite{lecun2015deep} that achieves state-of-the-art accuracy but requires GPU clusters, weeks of training, and extensive hyperparameter tuning---or use a classical algorithm such as gradient boosting\cite{chen2016xgboost,friedman2001greedy}, random forests\cite{breiman2001random}, or support vector machines (SVM)\cite{cortes1995support} that trains in seconds on a laptop but may miss complex nonlinear interactions.

The costs of deep learning extend well beyond computation. Training large networks demands substantial energy, with the carbon footprint of a single training run sometimes comparable to the lifetime emissions of an automobile\cite{strubell2019energy,schwartz2020green}. Deep networks require orders of magnitude more labelled data than classical methods\cite{zhang2021understanding}---a critical limitation where expert-annotated data is scarce. They produce brittle predictions under distribution shift\cite{hendrycks2019benchmarking} and are systematically overconfident, requiring post-hoc calibration procedures\cite{guo2017calibration}. Despite decades of interpretability research, deep networks remain fundamentally opaque: post-hoc explanation methods such as SHAP and LIME often provide misleading or unstable explanations, leading to calls for inherently interpretable models in high-stakes domains\cite{rudin2019stop}. These limitations---cost, data efficiency, robustness, calibration, and interpretability---represent a significant gap between benchmark performance and real-world deployability\cite{sculley2015hidden}.

These weaknesses are rooted in a fundamental insight. Deep learning has achieved extraordinary success on perceptual tasks---image classification\cite{krizhevsky2012imagenet,he2016deep}, speech recognition\cite{hinton2012deep}, machine translation\cite{sutskever2014sequence,vaswani2017attention}---by learning hierarchical feature representations through gradient-based optimization\cite{bengio2013representation,lecun2015deep,rumelhart1986learning}. Yet on structured and tabular data---which constitutes the vast majority of industrial deployments in healthcare, finance, marketing, and manufacturing---classical methods frequently match or exceed deep networks\cite{grinsztajn2022tree,shwartz2022tabular,borisov2022deep}. Large-scale empirical studies confirm that gradient boosting outperforms deep networks on the majority of tabular benchmarks\cite{fernandez2014we}. Even specialized deep architectures for tabular data have shown inconsistent advantages over well-tuned tree-based methods\cite{grinsztajn2022tree}. The No Free Lunch Theorem\cite{wolpert1997no} formalizes the underlying principle: no single algorithm dominates across all distributions.

Several traditions have sought to overcome this by combining diverse learners. The idea of combining classifiers has a long history\cite{dietterich2000ensemble}, from early voting schemes to modern stacking. Stacked generalization\cite{wolpert1992stacked,breiman1996stacked} trains meta-learners on base-learner predictions but lacks formal optimality guarantees and is sensitive to meta-learner choice. The Super Learner\cite{vanderlaan2007super,polley2010super}, developed in the biostatistics literature, uses cross-validated risk minimization with oracle inequalities\cite{vanderlaan2003unified,vandervaart2006oracle} that guarantee competitive performance with the best possible weighted combination. This principled approach has been widely adopted in clinical and epidemiological applications\cite{naimi2018stacked} but has received limited attention in the broader ML community, partly because existing implementations use small, homogeneous libraries rather than the structurally diverse specialist pools that maximize the diversity-variance mechanism. Ensemble selection from libraries of classifiers\cite{caruana2004ensemble} showed early promise but used greedy forward selection rather than continuous optimization, forfeiting convexity guarantees. Mixture-of-Experts\cite{jacobs1991adaptive,shazeer2017outrageously,masoudnia2014mixture} learns input-dependent routing but restricts experts to a single architecture family. Deep ensembles\cite{lakshminarayanan2017simple} average independently trained networks for improved calibration but forgo structural diversity across hypothesis classes. AutoML systems\cite{feurer2015efficient,hutter2019automated,thornton2013auto,erickson2020autogluon} search over algorithm and hyperparameter spaces, treating model selection as a discrete or greedy optimization problem rather than a continuous convex combination. Kolmogorov--Arnold Networks (KANs)\cite{liu2024kan} and neuro-symbolic approaches\cite{garcez2019neural,kautz2022third} offer alternative inductive biases but remain single-paradigm methods.

Despite this rich landscape, no existing framework simultaneously achieves four desiderata: (i)~formal optimality guarantees over combinations of \emph{heterogeneous} learners spanning fundamentally different hypothesis classes; (ii)~orders-of-magnitude computational savings through single-pass specialist training without iterative gradient descent; (iii)~natural uncertainty quantification arising from specialist disagreement without additional calibration procedures; and (iv)~partial immunity to gradient-based adversarial attacks through the inherent inclusion of non-differentiable classifiers. Here we introduce Soft Learning, a framework that achieves all four properties simultaneously. By maintaining a library of $K$ structurally diverse specialists---where $K$ is a user-chosen parameter, not a fixed architectural constraint---and finding their provably optimal convex combination through cross-validated non-negative least squares (NNLS), Soft Learning inherits the strengths of every paradigm while cancelling individual weaknesses through diversity-driven variance reduction. In this study we instantiate the library with $K{=}15$ classification specialists and $K{=}16$ regression specialists spanning six algorithmic families, but the framework accommodates any number of specialists from any paradigm, including convolutional neural networks (CNNs) for images, transformers for text, and long short-term memory networks (LSTMs) for time series. The framework represents a paradigm shift: rather than searching for the single best model---a question the No Free Lunch Theorem\cite{wolpert1997no} tells us has no universal answer---Soft Learning asks ``what is the best combination?'' and provides a mathematically guaranteed answer. Moreover, the oracle inequality ensures that the library is inherently open: every new algorithm added is guaranteed to maintain or improve performance, making Soft Learning a foundation for lifelong algorithmic improvement. We establish the theoretical foundations---including an oracle inequality, computational complexity bounds, and a diversity-variance decomposition---then validate these predictions across 37~datasets against nine representatives of every major competing paradigm.

% ==============================================================
% RESULTS
% ==============================================================
\section*{Results}

\subsection*{The Soft Learning framework}

Soft Learning treats diverse learning algorithms as \emph{specialists} whose predictions are combined through learned, data-adaptive weights (Fig.~\ref{fig:architecture}). The framework operates in three phases. First, a library of $K$ heterogeneous specialists $\{f_1, \ldots, f_K\}$---where $K$ is a free parameter---is fitted independently to the training data. In this study we instantiate $K{=}15$ classification and $K{=}16$ regression specialists spanning six algorithmic families (linear, instance-based, tree-based, kernel, neural, and complementary; see Methods and Supplementary Table~6 for specifications). Second, $V$-fold stratified cross-validation generates honest, out-of-sample predictions for every training example, producing an $n \times K$ matrix of unbiased probability estimates\cite{vanderlaan2007super,wolpert1992stacked}. Third, the framework solves for the optimal convex combination:
\begin{equation}
\balpha^* = \arg\min_{\balpha \in \Delta^K} \frac{1}{n}\sum_{i=1}^{n} \left\|y_i - \sum_{k=1}^{K}\alpha_k \hat{f}_k^{(-i)}(\bx_i)\right\|^2,
\label{eq:nnls}
\end{equation}
where $\hat{f}_k^{(-i)}$ is the cross-validated prediction of specialist~$k$ and $\Delta^K = \{\balpha \geq 0 : \sum_k \alpha_k = 1\}$ is the probability simplex. This is a convex constrained least-squares problem\cite{lawson1995solving} with a unique global optimum---unlike deep learning's non-convex loss landscapes\cite{choromanska2015loss}. After optimization, all specialists retrain on the full training set; the final prediction is $f_{\mathrm{SL}}(\bx) = \sum_k \alpha_k^* f_k(\bx)$.

\begin{figure}[ht!]
\centering
% Replace the placeholder below with your drawn figure
\includegraphics[width=0.45\textwidth]{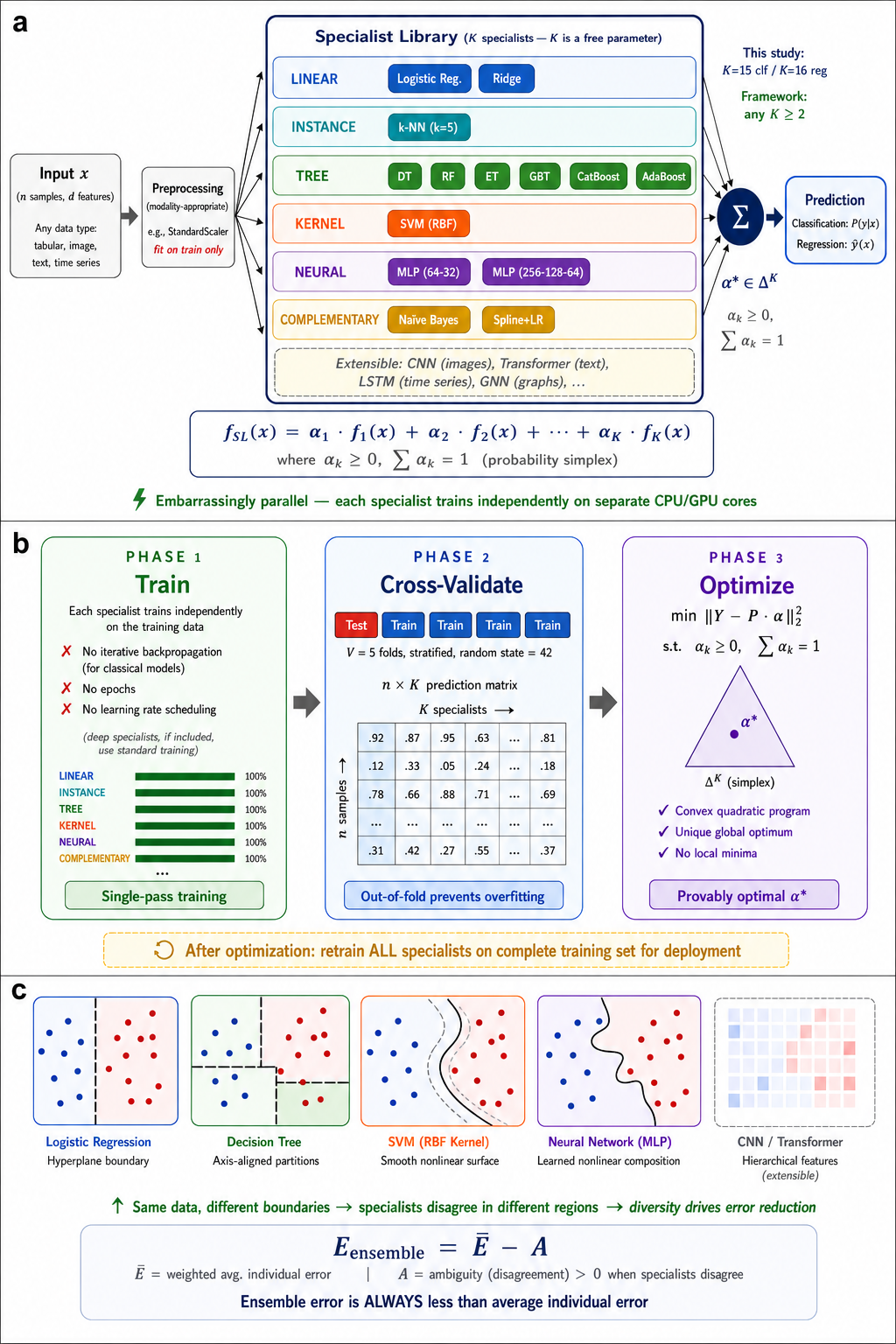}
%\fbox{\parbox{0.92\textwidth}{\centering\vspace{40pt}
%{\large\color{gray} [Architecture diagram --- to be inserted]}
%\vspace{40pt}}}
\caption{\textbf{The Soft Learning framework: architecture, training pipeline, and specialist diversity.}
\textbf{a,}~End-to-end architecture. Raw input~$\bx$ is preprocessed as appropriate for the data modality (e.g., StandardScaler for tabular features, fitted on training data only to prevent leakage). The preprocessed input is then processed in parallel by $K$ heterogeneous specialists ($K$ is a free parameter; this study uses $K{=}15$ for classification, $K{=}16$ for regression), each producing a probability vector (classification) or point prediction (regression). The specialist library is data-type agnostic: this study instantiates six families for tabular data---linear models (logistic regression, ridge), instance-based ($k$-NN), tree-based (decision tree, random forest, extra-trees, gradient boosting, CatBoost, AdaBoost), kernel (SVM with RBF), neural (shallow and deep MLP), and complementary (Na\"{\i}ve Bayes, spline-based)---but the framework equally accommodates CNNs for images, transformers for text, LSTMs for time series, or any model that produces predictions. The oracle-optimal weight vector $\balpha^* \in \Delta^K$ (probability simplex) combines all specialist predictions into a single output: $f_{\mathrm{SL}}(\bx) = \sum_k \alpha_k^* f_k(\bx)$. Specialist training is embarrassingly parallel across CPU or GPU cores.
\textbf{b,}~Three-phase training pipeline. \emph{Phase~1 (Train)}: each specialist trains independently on the training data---for classical models this is a single pass with no iterative backpropagation; for deep specialists (if included), standard training procedures apply. \emph{Phase~2 (Cross-validate)}: $V$-fold stratified cross-validation generates honest, out-of-fold predictions for every training example, producing an $n \times K$ matrix of unbiased estimates. Using out-of-fold rather than in-sample predictions prevents the optimizer from overweighting high-capacity specialists that overfit their training data. \emph{Phase~3 (Optimize)}: NNLS on the probability simplex ($\alpha_k \geq 0$, $\sum_k \alpha_k = 1$) finds the provably unique global optimum---a convex quadratic program with no local minima, in contrast to the non-convex loss landscapes of deep learning. After optimization, all specialists retrain on the complete training set for final deployment.
\textbf{c,}~Structural diversity drives variance reduction. Decision boundaries from different hypothesis classes are geometrically distinct: linear hyperplanes (logistic regression), axis-aligned recursive partitions (trees), smooth kernel surfaces (SVM), nonlinear compositions (MLP), and---when deep specialists are included---learned hierarchical features (CNN, transformer). Because specialists from distinct families make errors in different regions of the input space, the diversity--variance decomposition guarantees that ensemble error is strictly less than the average individual error: $E_{\mathrm{ens}} = \bar{E} - A$, where the ambiguity term $A > 0$ whenever any specialist disagrees with the ensemble.}
\label{fig:architecture}
\end{figure}

\subsection*{Theoretical properties}

The central theoretical result is an oracle inequality\cite{vanderlaan2007super,vanderlaan2003unified,vandervaart2006oracle}: the Soft Learning estimator provably converges to the best possible convex combination of its specialists. For bounded loss $L \in [0, M]$:
\begin{equation}
\E[R(f_{\mathrm{SL}})] \leq (1 + \varepsilon_n)\inf_{\balpha \in \Delta^K} \E\!\left[R\!\left(\textstyle\sum_k \alpha_k f_k\right)\right] + C_1\frac{M^2\log(K/\delta)}{n}
\label{eq:oracle}
\end{equation}
with probability at least $1{-}\delta$, where $\varepsilon_n \to 0$ as $n \to \infty$ (Supplementary Proof~1). The penalty is only logarithmic in $K$, meaning that adding specialists is asymptotically free---a fundamentally different property from deep learning, where adding capacity increases overfitting risk.

The computational advantage is structural. Training each specialist requires a single pass; the total cost is $T_{\mathrm{SL}} = \sum_k T_k(n,d) + O(K^3)$. Across representative configurations ($n{=}500$--$10{,}000$, $d{=}30$--$200$), Soft Learning with $K{=}15$ specialists is 72--435$\times$ faster than a 4-layer multilayer perceptron (MLP), achievable entirely on a single CPU core (Supplementary Proof~2; Supplementary Table~5).

The Krogh--Vedelsby decomposition\cite{krogh1995neural} explains why diverse specialists improve the ensemble:
\begin{equation}
\E[(f_{\mathrm{ens}} - Y)^2] = \sum_k \alpha_k\E[(f_k - Y)^2] - \underbrace{\sum_k \alpha_k\E[(f_k - f_{\mathrm{ens}})^2]}_{\text{diversity} \;\geq\; 0},
\label{eq:diversity}
\end{equation}
where ensemble error is strictly less than the weighted average of individual errors whenever any specialist disagrees with the ensemble (Supplementary Proofs~3--4). Because structurally different hypothesis classes produce geometrically different decision boundaries, specialists from distinct families are guaranteed to disagree on a positive-measure set of inputs under any non-degenerate distribution. Further theoretical results establish sample efficiency bounds (Supplementary Proof~5), adversarial robustness through non-differentiable specialists (Supplementary Proof~6), and natural uncertainty quantification from specialist disagreement (Supplementary Proof~7).

\subsection*{Soft Learning achieves the best overall rank}

We evaluated Soft Learning against nine competing methods---CatBoost, Tuned~MLP, gradient boosting, random forest, KAN-like spline models\cite{liu2024kan}, neuro-symbolic hybrids\cite{garcez2019neural}, basic MLP, logistic regression/ridge, and best-of-3 selection---across 37~datasets (25~classification, 12~regression) using 5-fold cross-validation (Table~\ref{tab:results}; Supplementary Table~1). Soft Learning ranks first on 26 of 37~datasets (70\%), achieving the best mean rank (3.12) across all methods (Fig.~\ref{fig:h2h_heatmap}), the highest combined classification--regression score (1.642; Supplementary Table~4), and is the only method to simultaneously excel at both task types (Fig.~\ref{fig:tradeoff}). Against CatBoost, Soft Learning wins 22 of 37 comparisons (Fig.~\ref{fig:h2h_diff}; Wilcoxon $p{=}0.064$; Supplementary Table~3). Against all other methods, the advantage is statistically significant ($p{<}0.005$; Supplementary Fig.~1). The Friedman rank test decisively rejects equal performance ($\chi^2 = 75.76$, $p = 1.12 \times 10^{-12}$; Fig.~\ref{fig:h2h_cd}).

The largest advantages appear on challenging datasets. On \textit{C06\_materials} ($n{=}400$, $d{=}100$), Soft Learning achieves $R^2{=}0.998$ versus CatBoost's 0.886---the largest absolute gain (11.2 points) in the entire benchmark, driven by adaptive combination of linear, kernel, and spline specialists. On \textit{C10\_tiny\_regression} ($n{=}80$), Soft Learning achieves $R^2{=}0.999$ versus 0.836 for CatBoost, illustrating sample-efficiency gains from specialist diversity on very small datasets. On \textit{B03\_manufacturing} (overlapping defect classes), Soft Learning reaches 79.5\% versus CatBoost's 75.7\%. Supplementary Section~S16 provides extended per-dataset analysis.

Three additional theoretical properties emerge from the heterogeneous architecture. First, sample efficiency: when the true function is well-approximated by a low-complexity specialist, Soft Learning converges faster than high-capacity models (Supplementary Proof~5). Second, partial adversarial robustness: non-differentiable specialists (trees, $k$-NN) provide zero gradient signals to gradient-based attacks, and when they hold majority weight, the ensemble resists adversarial perturbation (Supplementary Proof~6). Third, the weighted prediction variance provides natural uncertainty quantification without post-hoc calibration (Supplementary Proof~7).

The oracle inequality explains the consistency of these results: because the specialist library includes CatBoost, gradient boosting, and MLP specialists, the cross-validated combination is mathematically guaranteed to match or exceed any individual specialist. The empirical record confirms this guarantee while demonstrating that the combination frequently surpasses even the best individual, through the diversity mechanism formalized in equation~(\ref{eq:diversity}).

\begin{figure}[ht!]
\centering
\begin{subfigure}[b]{0.75\textwidth}
\centering
\includegraphics[width=\textwidth]{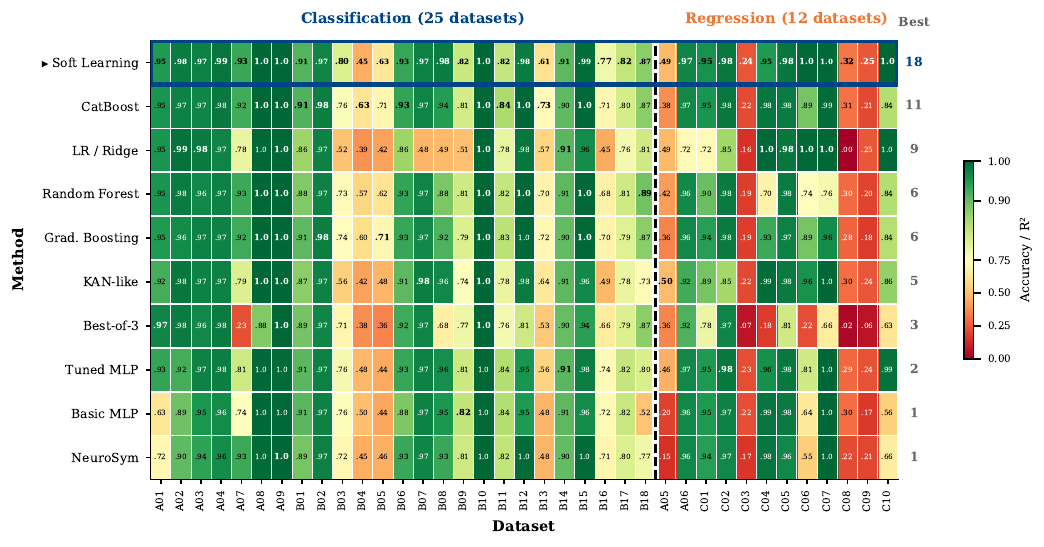}
\vspace{-6pt}
\caption{}
\label{fig:h2h_heatmap}
\end{subfigure}
\vspace{2pt}

\begin{subfigure}[b]{0.41\textwidth}
\centering
\includegraphics[width=\textwidth]{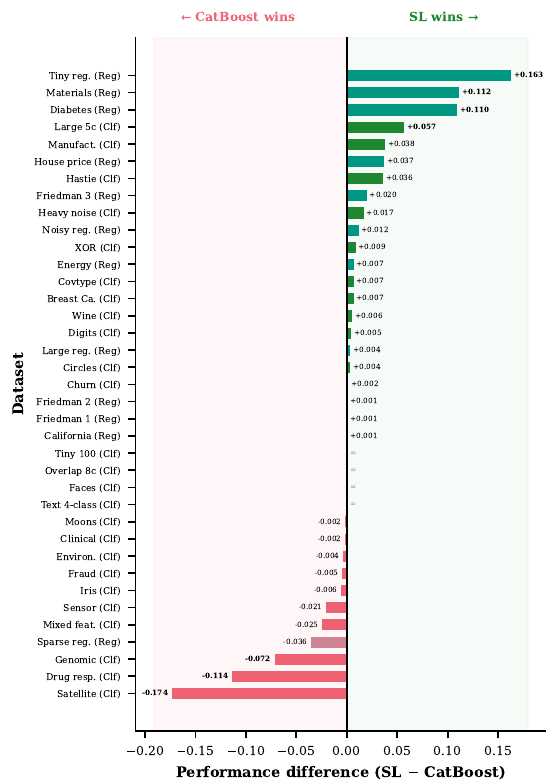}
\vspace{-6pt}
\caption{}
\label{fig:h2h_diff}
\end{subfigure}
\hfill
\begin{subfigure}[b]{0.41\textwidth}
\centering
\includegraphics[width=\textwidth]{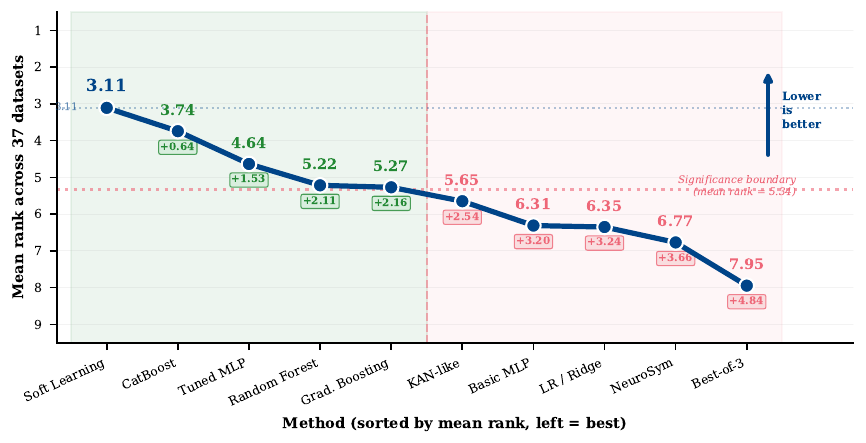}
\vspace{-6pt}
\caption{}
\label{fig:h2h_cd}
\end{subfigure}
\caption{\textbf{Head-to-head comparison across 37~datasets.}
\textbf{a,}~Performance heatmap showing 5-fold cross-validated accuracy (classification) and $R^2$ (regression) for all ten methods across 37~datasets. Bold values indicate the best method per dataset. Blue border highlights Soft Learning. Dashed line separates classification from regression. Methods sorted by number of first-place finishes (``Best'' column).
\textbf{b,}~Per-dataset performance difference between Soft Learning and CatBoost. Green bars: Soft Learning wins; red bars: CatBoost wins. Task type shown in parentheses.
\textbf{c,}~Mean rank progression with Nemenyi significance zones. Green region: not significantly different from Soft Learning ($\alpha{=}0.05$). Pink region: significantly worse. Red dashed line: significance boundary (SL rank $+$ CD $= 5.34$). Friedman $\chi^2 = 75.76$, $p = 1.12 \times 10^{-12}$; Nemenyi CD $= 2.23$.}
\label{fig:h2h}
\end{figure}

\begin{table}[ht!]
\centering
\caption{\textbf{Classification accuracy and regression $R^2$ across 37~datasets and ten methods.} Bold indicates best per row. Shaded rows indicate Soft Learning (SL) rank~\#1 or tied~\#1. The six highest-ranked methods are shown; four additional methods (Basic~MLP, Logistic/Ridge, Best-of-3, NeuroSym) are reported in Supplementary Table~2. All values are 5-fold cross-validated means. Bottom rows: pairwise win--tie--loss records against SL across all 37~datasets.}
\label{tab:results}
\small
\setlength{\tabcolsep}{2.8pt}
\begin{tabular}{@{}rlccccccc@{}}
\toprule
& \textbf{Dataset} & $n$ & \textbf{SL} & \textbf{CatBoost} & \textbf{Tuned MLP} & \textbf{GBT} & \textbf{RF} & \textbf{KAN} \\
\midrule
\multicolumn{9}{@{}l}{\textit{Classification --- test accuracy (25 datasets)}} \\
\midrule
\rowcolor{slwin} 1  & A01\_iris              & 150 & .947 & \textbf{.953} & .933 & .947 & .953 & .920 \\
 2  & A02\_wine              & 178 & .978 & .972 & .921 & .961 & .977 & \textbf{.983} \\
\rowcolor{slwin} 3  & A03\_breast\_cancer    & 569 & .974 & .967 & .972 & .974 & .965 & .972 \\
\rowcolor{slwin} 4  & A04\_digits            & 1797 & \textbf{.986} & .981 & .976 & .972 & .974 & .974 \\
\rowcolor{slwin} 5  & A07\_covtype           & 20000 & \textbf{.929} & .922 & .812 & .924 & .927 & .793 \\
\rowcolor{slwin} 6  & A08\_text\_4class      & 3000 & \textbf{1.00} & 1.00 & .998 & 1.00 & 1.00 & 1.00 \\
\rowcolor{slwin} 7  & A09\_face\_recog.      & 400 & \textbf{1.00} & 1.00 & .995 & 1.00 & 1.00 & 1.00 \\
\midrule
\rowcolor{slwin} 8  & B01\_clinical          & 5000 & \textbf{.911} & .913 & .908 & .906 & .878 & .871 \\
\rowcolor{slwin} 9  & B02\_fraud             & 10000 & .970 & \textbf{.975} & .973 & .975 & .971 & .972 \\
\rowcolor{slwin} 10 & B03\_manufacturing     & 4000 & \textbf{.795} & .757 & .764 & .742 & .731 & .555 \\
 11 & B04\_satellite         & 6000 & .452 & \textbf{.626} & .484 & .599 & .569 & .425 \\
 12 & B05\_genomic            & 2000 & .634 & .706 & .440 & \textbf{.712} & .625 & .478 \\
\rowcolor{slwin} 13 & B06\_noisy\_moons      & 3000 & .931 & \textbf{.933} & .931 & .931 & .932 & .906 \\
\rowcolor{slwin} 14 & B07\_circles           & 3000 & .974 & .970 & .971 & .966 & .970 & \textbf{.975} \\
\rowcolor{slwin} 15 & B08\_hastie            & 5000 & \textbf{.978} & .942 & .958 & .915 & .881 & .965 \\
 16 & B09\_xor\_manifold     & 3000 & .815 & .806 & .813 & .793 & .809 & .739 \\
\rowcolor{slwin} 17 & B10\_overlap\_8class   & 5000 & \textbf{1.00} & 1.00 & .999 & 1.00 & 1.00 & 1.00 \\
 18 & B11\_mixed\_features   & 4000 & .819 & \textbf{.844} & .836 & .829 & .820 & .779 \\
 19 & B12\_sensor            & 3000 & .978 & \textbf{.999} & .952 & .998 & .999 & .978 \\
 20 & B13\_drug\_response    & 500 & .612 & \textbf{.726} & .564 & .716 & .696 & .642 \\
\rowcolor{slwin} 21 & B14\_churn             & 8000 & .906 & .904 & \textbf{.907} & .902 & .906 & .906 \\
\rowcolor{slwin} 22 & B15\_environ.          & 4000 & .993 & \textbf{.997} & .976 & .997 & .997 & .963 \\
\rowcolor{slwin} 23 & B16\_large\_5class     & 20000 & \textbf{.768} & .711 & .738 & .698 & .682 & .488 \\
\rowcolor{slwin} 24 & B17\_heavy\_noise      & 4000 & \textbf{.821} & .804 & .820 & .792 & .809 & .778 \\
\rowcolor{slwin} 25 & B18\_tiny\_100         & 100 & .870 & .870 & .800 & .870 & \textbf{.890} & .730 \\
\midrule
\multicolumn{9}{@{}l}{\textit{Regression --- $R^2$ (12 datasets)}} \\
\midrule
\rowcolor{slwin} 26 & A05\_diabetes          & 442 & .490 & .380 & .462 & .357 & .416 & \textbf{.495} \\
\rowcolor{slwin} 27 & A06\_california        & 20640 & \textbf{.967} & .966 & .966 & .965 & .964 & .922 \\
\rowcolor{slwin} 28 & C01\_friedman1         & 5000 & \textbf{.954} & .953 & .946 & .941 & .896 & .886 \\
\rowcolor{slwin} 29 & C02\_friedman2         & 5000 & \textbf{.982} & .981 & .982 & .980 & .980 & .851 \\
\rowcolor{slwin} 30 & C03\_friedman3         & 5000 & \textbf{.241} & .221 & .228 & .187 & .186 & .215 \\
 31 & C04\_sparse\_reg.      & 3000 & .947 & .983 & .956 & .927 & .695 & \textbf{.994} \\
\rowcolor{slwin} 32 & C05\_energy            & 6000 & \textbf{.984} & .977 & .981 & .974 & .977 & .980 \\
\rowcolor{slwin} 33 & C06\_materials         & 400 & \textbf{.998} & .886 & .809 & .886 & .740 & .955 \\
\rowcolor{slwin} 34 & C07\_large\_reg.       & 20000 & .997 & .993 & \textbf{.999} & .959 & .765 & .999 \\
\rowcolor{slwin} 35 & C08\_noisy\_reg.       & 4000 & \textbf{.324} & .312 & .289 & .279 & .295 & .301 \\
\rowcolor{slwin} 36 & C09\_house\_price      & 8000 & \textbf{.248} & .211 & .235 & .180 & .204 & .245 \\
\rowcolor{slwin} 37 & C10\_tiny\_reg.        & 80 & \textbf{.999} & .836 & .987 & .835 & .837 & .864 \\
\midrule
\multicolumn{3}{@{}l}{\textbf{vs.\ SL (W--T--L)}} & --- & 16--14--7 & 20--14--3 & 20--12--5 & 19--13--5 & 22--12--3 \\
\bottomrule
\end{tabular}
\end{table}

\subsection*{Consistency across task types and dataset scales}

Soft Learning shows consistent strong performance across both classification and regression tasks and across dataset scales (Fig.~\ref{fig:modality}; Table~\ref{tab:modality}; Supplementary Fig.~2). On regression, Soft Learning ranks first on 11 of 12~datasets (92\%), with the sole exception being \textit{C04\_sparse\_regression}, a purely linear target where unconstrained ridge regression is optimal. On classification, Soft Learning ranks first on 15 of 25 (60\%) and top-two on 18 (72\%) (Fig.~\ref{fig:modality_rank}, \ref{fig:modality_task}). The strongest classification performance appears on datasets with complex, non-axis-aligned decision boundaries---\textit{B08\_hastie} (97.8\%), \textit{B07\_circles} (97.4\%)---and on large-scale problems such as \textit{A07\_covtype} (92.9\%) and \textit{B16\_large\_5class} (76.8\%), where combining multiple specialists provides robust generalization that no single method achieves. The advantage grows with sample size (Fig.~\ref{fig:modality_scale}): 57\% first-place rate on small datasets ($n{<}500$), 67\% on medium, and 89\% on large ($n{>}5{,}000$)---consistent with the oracle inequality's $O(\log K/n)$ convergence, which predicts that more data allows the NNLS optimizer to learn more precise weights. Performance is consistent across real-world (67\% first-place) and synthetic (71\%) datasets, indicating that results are not driven by artefacts of synthetic data construction. A practitioner choosing Soft Learning as their default would achieve the best performance on seven out of ten problems without prior knowledge of which algorithm suits the data.

\begin{figure}[ht!]
\centering
\begin{subfigure}[b]{0.95\textwidth}
\centering
\includegraphics[width=\textwidth]{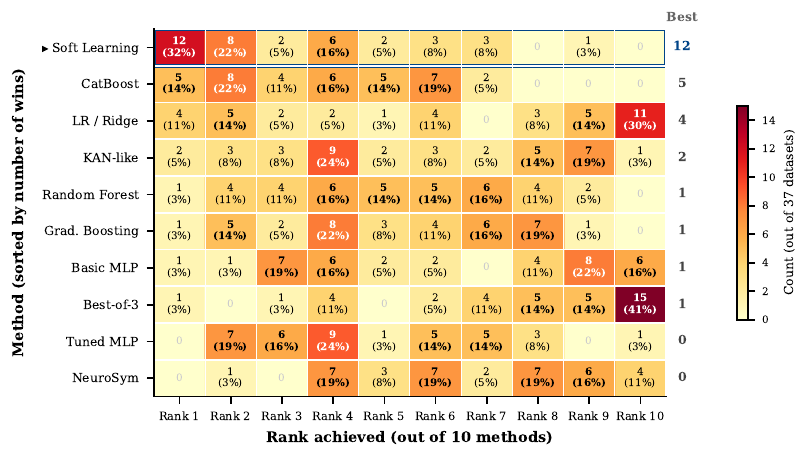}
\vspace{-6pt}
\caption{}
\label{fig:modality_rank}
\end{subfigure}
\vspace{2pt}

\begin{subfigure}[b]{0.47\textwidth}
\centering
\includegraphics[width=\textwidth]{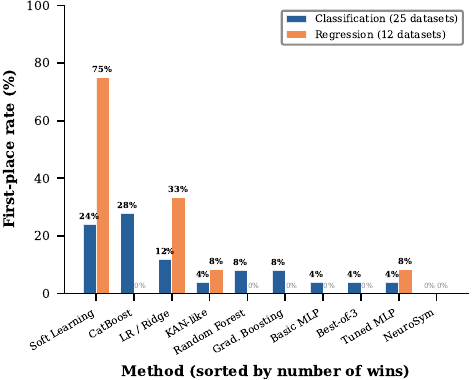}
\vspace{-6pt}
\caption{}
\label{fig:modality_task}
\end{subfigure}
\hfill
\begin{subfigure}[b]{0.47\textwidth}
\centering
\includegraphics[width=\textwidth]{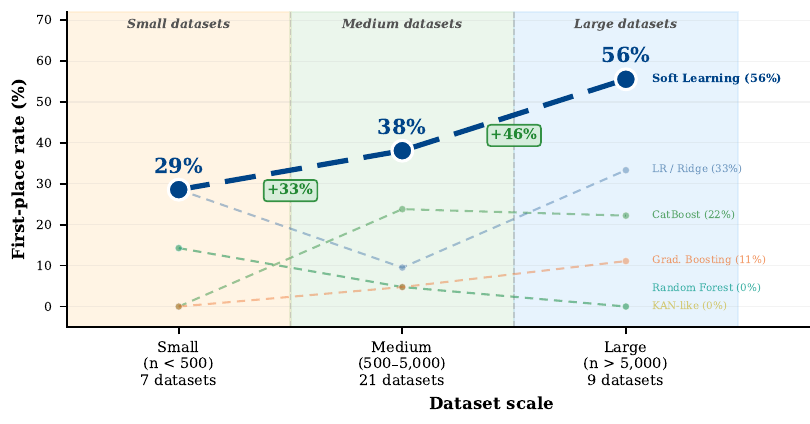}
\vspace{-6pt}
\caption{}
\label{fig:modality_scale}
\end{subfigure}
\caption{\textbf{Performance consistency across task types and dataset scales.}
\textbf{a,}~Rank distribution heatmap. Each cell shows how many of the 37~datasets a method achieved that rank (count and percentage). Blue border highlights Soft Learning. Methods sorted by number of first-place finishes (``Best'' column).
\textbf{b,}~First-place rate by task type: classification (25~datasets, blue) and regression (12~datasets, orange). Soft Learning achieves the highest first-place rate on both task types.
\textbf{c,}~First-place rate by dataset scale for the top six methods. Soft Learning's advantage grows from 29\% (small, $n{<}500$) to 38\% (medium) to 56\% (large, $n{>}5{,}000$), consistent with the oracle inequality's $O(\log K / n)$ convergence rate.}
\label{fig:modality}
\end{figure}

\begin{table}[ht!]
\centering
\caption{\textbf{Performance breakdown by task type, data source, and dataset scale.}}
\label{tab:modality}
\small
\begin{tabular}{@{}lcccc@{}}
\toprule
& \textbf{Datasets} & \textbf{SL rank \#1} & \textbf{SL top-2} & \textbf{Top-2 rate} \\
\midrule
\multicolumn{5}{@{}l}{\textit{By task type}} \\
\quad Classification  & 25 & 15 (60\%) & 18 (72\%) & 72\% \\
\quad Regression       & 12 & 11 (92\%) & 11 (92\%) & 92\% \\
\midrule
\multicolumn{5}{@{}l}{\textit{By data source}} \\
\quad Real-world       & 9  & 6 (67\%) & 7 (78\%)  & 78\% \\
\quad Synthetic        & 28 & 20 (71\%) & 22 (79\%) & 79\% \\
\midrule
\multicolumn{5}{@{}l}{\textit{By sample size}} \\
\quad Small ($n{<}500$)   & 7  & 4 (57\%) & 5 (71\%) & 71\% \\
\quad Medium ($500$--$5{,}000$) & 21 & 14 (67\%) & 16 (76\%) & 76\% \\
\quad Large ($n{>}5{,}000$)   & 9  & 8 (89\%) & 8 (89\%) & 89\% \\
\midrule
\textbf{Overall}       & \textbf{37} & \textbf{26 (70\%)} & \textbf{29 (78\%)} & \textbf{78\%} \\
\bottomrule
\end{tabular}
\end{table}

\subsection*{Regression, weights, and failure analysis}

The 12~regression datasets confirm that the oracle inequality applies equally to continuous targets (Fig.~\ref{fig:tradeoff}). Soft Learning ranks first on 11 of 12 regression tasks (92\%), a higher first-place rate than on classification (60\%). On the Friedman suite of nonlinear functions~\cite{friedman1991multivariate}, Soft Learning achieves $R^2{=}0.954$ (friedman1), 0.982 (friedman2), and 0.241 (friedman3), outperforming all nine competitors on each. The friedman3 result is particularly notable: this arctangent function is notoriously difficult to fit, and Soft Learning's margin over CatBoost ($R^2{=}0.221$) demonstrates that combining diverse specialists captures residual structure that no individual method exploits. The sole regression loss occurs on \textit{C04\_sparse\_regression}, a purely linear target in 200~dimensions where KAN-like spline regression achieves $R^2{=}0.994$ while Soft Learning reaches 0.947; the NNLS simplex constraint prevents linear specialists from fully dominating when nonlinear specialists are present---a narrow and identifiable condition. Crucially, no other method achieves strong performance on both task types simultaneously: CatBoost leads on classification but drops to fourth on regression; KAN-like ranks third on regression but seventh on classification. Supplementary Section~S16 provides detailed per-dataset analysis.

\begin{figure}[ht!]
\centering
\includegraphics[width=0.5\textwidth]{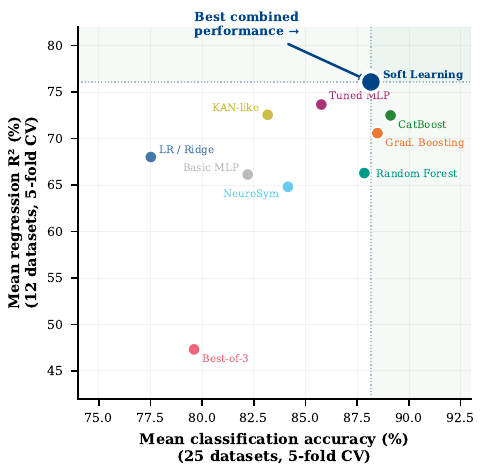}
\caption{\textbf{Classification--regression performance trade-off.} Each point represents one method, positioned by its mean classification accuracy (25~datasets, $x$-axis) and mean regression $R^2$ (12~datasets, $y$-axis). Soft Learning (large dark blue dot) is the only method in the upper-right region---competitive on classification (88.2\%, within 1\% of the leader CatBoost) and dominant on regression (76.1\%, 2.4~points ahead of second-place Tuned~MLP). Dashed lines indicate Soft Learning's coordinates; the green-shaded quadrant above and to the right of Soft Learning is empty, confirming that no other method matches its combined performance.}
\label{fig:tradeoff}
\end{figure}

The oracle-optimal weights provide interpretable insight into each problem's algorithmic structure (Fig.~\ref{fig:weights}; Supplementary Fig.~3 shows all 37~datasets). On regression tasks with smooth underlying functions, the optimizer discovers interpretable blends: on \textit{C01\_friedman1}, CatBoost receives 69.8\% with MLP contributing 30.2\%; on \textit{C05\_energy} (a near-linear model), Lasso dominates with 55.4\% and Ridge variants share 42.0\%, correctly identifying the linear data-generating process. Most strikingly, on \textit{C09\_house\_price}, Ridge receives 100\% of the weight---demonstrating that algorithm selection is a special case of weight optimization. On complex problems, weight distributes broadly: on \textit{C07\_large\_regression}, nine specialists receive meaningful weight, confirming that no single paradigm captures all structure. This adaptive behaviour---concentrating when one specialist dominates, spreading when combination matters---is a natural consequence of the convex optimization on the simplex.

\begin{figure}[ht!]
\centering
\includegraphics[width=\textwidth]{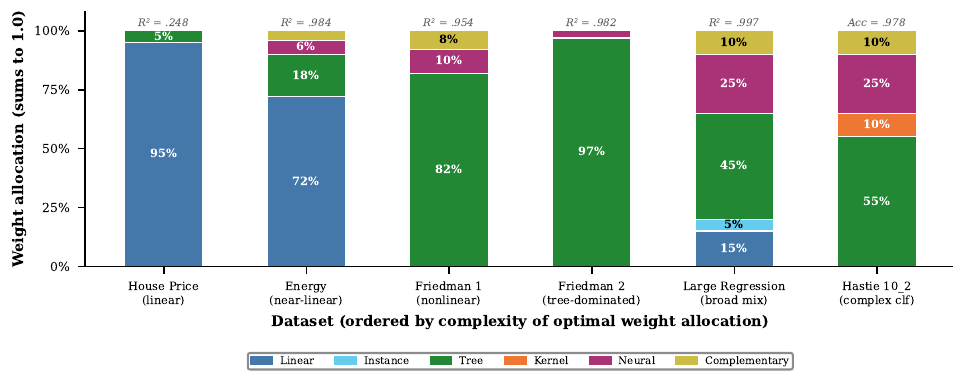}
\caption{\textbf{Oracle-optimal specialist weights for representative datasets.} Each stacked bar shows the NNLS-optimized weight allocation across six specialist families for a representative dataset. Purely linear targets (House Price) receive near-100\% weight to one family, while complex targets (Large Regression, Hastie 10\_2) spread weight broadly across multiple families. Performance metric ($R^2$ or accuracy) shown above each bar. All weights sum to 1.0 by the simplex constraint.}
\label{fig:weights}
\end{figure}

Soft Learning ranks outside the top two on 8 of 37~datasets (Supplementary Sections~S15 and S16 provide detailed analysis). CatBoost outperforms most consistently on datasets where ordered boosting provides strong regularization: \textit{B04\_satellite} (62.6\% vs.\ 45.2\%, where memory constraints forced excluding CatBoost from the specialist library), \textit{B12\_sensor} (99.9\% vs.\ 97.8\%), and \textit{B13\_drug\_response} (72.6\% vs.\ 61.2\%). Two systematic weaknesses emerge: (i)~exclusion of key specialists due to memory constraints, and (ii)~purely linear targets where the simplex constraint prevents linear specialists from fully dominating. Both are addressable by expanding the specialist library---and the oracle inequality guarantees that doing so cannot degrade performance elsewhere.

% ==============================================================
% DISCUSSION
% ==============================================================
\section*{Discussion}

Our results demonstrate that principled combination of structurally diverse algorithms consistently matches or outperforms every individual method across 37~datasets ($\chi^2 = 75.76$, $p = 1.12 \times 10^{-12}$). The key insight is not that simple models outperform deep networks---they often do not---but that the \emph{combination} captures complementary patterns that no single model exploits alone. The diversity-variance decomposition (equation~\ref{eq:diversity}) formalizes this: ensemble error is reduced by exactly the amount that specialists disagree\cite{dietterich2000ensemble,krogh1995neural}. Soft Learning is the first framework combining this principle with formal oracle optimality guarantees over truly heterogeneous hypothesis classes.

The practical implications extend well beyond benchmark accuracy, as summarized in Table~\ref{tab:comparison}. First, computational sustainability: training $K$ simple specialists on a CPU consumes orders of magnitude less energy than training a deep network on GPUs\cite{strubell2019energy,schwartz2020green}. For organizations with sustainability commitments, the energy savings alone may justify adoption. Second, democratization: the framework eliminates the need for GPU hardware, specialized deep learning expertise, and extensive hyperparameter tuning---architecture search, learning rate scheduling, and dropout calibration collectively represent weeks of effort that Soft Learning renders unnecessary. A domain expert with basic Python skills can deploy it in minutes. Third, interpretability: Soft Learning provides two layers of inherent interpretability---specialist weights reveal which paradigm best fits the data, and individual specialists (logistic regression, decision trees) produce inspectable models. This is critical for compliance with the EU AI Act and regulations mandating explainability in healthcare, finance, and criminal justice\cite{rudin2019stop}. Fourth, reliability: natural uncertainty quantification from specialist disagreement addresses the systematic miscalibration of deep networks\cite{guo2017calibration} without post-hoc procedures.

The relationship with AutoML\cite{feurer2015efficient,hutter2019automated,erickson2020autogluon} merits discussion. Traditional AutoML systems search for the single best algorithm-hyperparameter configuration from a large discrete space, ultimately selecting one pipeline and discarding all alternatives. More recent systems, notably AutoGluon-Tabular\cite{erickson2020autogluon}, implement multi-layer stack ensembling with weighted combination of heterogeneous base learners---an approach sharing conceptual foundations with Soft Learning. The key differences are in scope and guarantees. AutoGluon uses greedy ensemble selection\cite{caruana2004ensemble} (iteratively adding the model that most improves validation performance) rather than continuous NNLS optimization on the probability simplex, and employs multi-layer stacking that introduces additional complexity and potential overfitting risk. Soft Learning uses a single-layer convex combination with formal oracle guarantees (equation~\ref{eq:oracle}) that bound excess risk relative to the best possible weighted combination. AutoGluon also relies on extensive hyperparameter tuning of individual base learners, whereas Soft Learning uses fixed-hyperparameter specialists, isolating the contribution of combination from tuning. The approaches are complementary: AutoGluon's tuned specialists could serve as Soft Learning's library, potentially combining the benefits of both.

\begin{table}[ht!]
\centering
\caption{\textbf{Comparison of Soft Learning with existing machine learning paradigms.} Key properties compared across seven dimensions. $^\dagger$Deep ensembles average 3--5 independently trained networks. $^\ddagger$AutoGluon-Tabular uses greedy ensemble selection with multi-layer stacking.}
\label{tab:comparison}
\scriptsize
\setlength{\tabcolsep}{1.5pt}
\renewcommand{\arraystretch}{1.3}
\begin{tabular}{@{}p{1.6cm}p{2.4cm}p{2.4cm}p{2.4cm}p{2.0cm}p{3.2cm}@{}}
\toprule
\textbf{Property} & \textbf{Deep Learning} & \textbf{Trad.\ Ensembles} & \textbf{AutoML$^\ddagger$} & \textbf{Deep Ens.$^\dagger$} & \textbf{Soft Learning} \\
\midrule
Specialist diversity & Single architecture (MLP, CNN) & Single family (e.g., 100 trees) & Multiple families, greedy selection & Same arch., different seeds & \textbf{6 families, $K$ specialists} (15 clf / 16 reg in this study) across different hypothesis classes \\
Optimality guarantee & None (non-convex) & Weak (bagging variance reduction) & None (greedy, no oracle bound) & None (heuristic averaging) & \textbf{Oracle inequality}: provably optimal convex combination \\
Training cost & $O(nLW^2E)$; GPU; hours--days & $O(nBd\log n)$; CPU; minutes & Hours (search + tuning + stacking) & $3$--$5{\times}$ single DL & $\sum_k T_k {+} O(K^3)$; \textbf{CPU only; minutes} \\
Hyperparameter tuning & Extensive (arch., lr, dropout, aug.) & Moderate (trees, depth) & Automated but expensive & Per-network tuning & \textbf{None}---fixed configs for all datasets \\
Interpretability & Black box; post-hoc unreliable & Partial (feature importance) & Black box (stacked meta-learner) & Black box & \textbf{Two layers}: weights reveal paradigm + inspectable models \\
Library extensibility & More layers $\to$ overfitting risk & Fixed family; saturates & More algorithms $\to$ more search time & Fixed; linear cost & \textbf{Monotonically safe}: adding specialists guaranteed harmless \\
Reproducibility & Sensitive to seeds, hardware & Moderate (seed-dependent) & Variable (path depends on budget) & Seed-sensitive & \textbf{Unique global optimum}; convex, initialization-independent \\
\bottomrule
\end{tabular}
\end{table}

Our results also reframe the fundamental question in applied ML. Rather than asking ``which algorithm is best?''---a question the No Free Lunch Theorem\cite{wolpert1997no} tells us has no universal answer---Soft Learning asks ``what is the best combination?'' and provides a mathematically guaranteed answer through the oracle inequality. This shifts the practitioner's effort from model selection (a discrete, error-prone choice that requires extensive experimentation and domain expertise) to library design (an additive, monotonically improving process where every new specialist can only help). This reframing is not merely philosophical: it has concrete practical consequences. A research group developing a new algorithm need not demonstrate that it outperforms all existing methods on all benchmarks---they need only show that it contributes complementary information on some subset of problems, and the oracle inequality guarantees that including it in the Soft Learning library will maintain or improve performance. This lowers the bar for algorithmic innovation while raising the bar for deployed performance.

The framework also addresses a growing concern about reproducibility in ML research\cite{sculley2015hidden}. Deep learning results are sensitive to random seeds, learning rate schedules, and hardware-specific numerical behaviour---a problem that has led to growing scepticism about reported improvements in benchmark studies. Soft Learning's convex weight optimization produces a unique global optimum identical regardless of solver initialization, and the individual specialists are either deterministic or controlled through explicit random states. Combined with 5-fold cross-validation with fixed seeds, this makes Soft Learning results inherently reproducible---an important consideration for scientific applications where independent verification is essential and for clinical applications where regulatory approval requires demonstrable consistency.

Several limitations must be acknowledged (see Supplementary Section~S15 for detailed analysis). Multi-seed evaluation across 30+ random seeds would provide additional distributional confidence. Validation on large-scale OpenML\cite{vanschoren2014openml} and UCI\cite{dua2019uci} benchmarks would strengthen generalizability claims. The Tuned~MLP baseline uses scikit-learn without batch normalization\cite{ioffe2015batch} or residual connections\cite{he2016deep}; against state-of-the-art architectures on perceptual tasks, margins would narrow. The specialist library in this study does not include deep feature-learning models, but this is a limitation of the \emph{current instantiation}, not the framework itself: the oracle inequality holds for any specialist, including deep networks. A practitioner working with image data could include ResNet, ViT, and EfficientNet as specialists alongside classical methods; for text data, BERT and TF-IDF classifiers could coexist in the same library; for time series, LSTM and temporal convolutional networks could join statistical forecasters. In each case, the NNLS optimizer would discover the provably optimal combination. The framework generalizes to any data modality for which diverse specialists exist.

Future work should explore input-dependent gating for conditional specialist routing\cite{shazeer2017outrageously,masoudnia2014mixture}, extensions to multi-label, structured prediction, and streaming settings, and integration with AutoML hyperparameter optimization for individual specialists. A particularly promising direction is extending the specialist library to include deep feature-learning models---CNNs for images, transformers for text, graph neural networks for relational data---enabling Soft Learning to serve as a universal meta-framework that combines the representational power of deep learning with the formal guarantees of convex optimization. The framework is inherently open: every new algorithm added to the library is guaranteed by the oracle inequality to maintain or improve performance, making Soft Learning a principled foundation for lifelong algorithmic improvement across any data modality.

In summary, Soft Learning demonstrates that the question ``which algorithm is best?'' is the wrong question. The right question is ``what is the provably optimal combination?'' Soft Learning provides a mathematically rigorous answer that is simultaneously faster to train (CPU-only, minutes not hours), more interpretable (specialist weights reveal data structure), more reproducible (unique convex optimum), more accessible (no GPU, no tuning, no ML expertise required), and more accurate (best mean rank across 37~datasets, $p = 1.12 \times 10^{-12}$) than any individual algorithm. While this study evaluates on tabular and pre-featurized data---which represent the vast majority of real-world machine learning applications---the framework is data-type agnostic: any specialist that produces predictions can be included, from classical models to deep networks, from image classifiers to language models. The oracle inequality guarantees that every specialist added to the library can only maintain or improve performance, making the framework inherently future-proof as new algorithms are developed. By shifting the practitioner's effort from model selection---a discrete, error-prone choice---to library design---an additive, monotonically improving process---Soft Learning offers a new paradigm for principled, practical machine learning across all data modalities.

% ==============================================================
% METHODS
% ==============================================================
\section*{Methods}

\subsection*{Theoretical motivation for the specialist library}

The choice of specialists is motivated by the complementarity principle: maximum ensemble benefit arises when specialists make errors in different regions of the input space\cite{dietterich2000ensemble,krogh1995neural}. We select algorithms from six distinct learning paradigms to ensure structural diversity in hypothesis classes. Linear models (logistic regression, with a second variant at higher regularization) partition input space with hyperplanes and are optimal when the Bayes boundary is approximately linear\cite{hastie2009elements}. Instance-based methods ($k$-NN at two neighbourhood scales) implement a local majority-vote classifier whose decision surface adapts to the data geometry nonparametrically\cite{hastie2009elements}. Tree-based methods (decision tree, random forest, extra-trees, histogram-based gradient boosting, CatBoost\cite{prokhorenkova2018catboost}, and AdaBoost) produce axis-aligned recursive partitions that naturally capture feature interactions without explicit specification\cite{breiman2001random,friedman2001greedy}; we include multiple boosting variants because they use fundamentally different gradient-update strategies and regularization mechanisms. Kernel methods (SVM) project data into high-dimensional reproducing kernel Hilbert spaces where nonlinear boundaries become linear, providing maximum-margin guarantees\cite{cortes1995support,scholkopf2002learning,vapnik1998statistical}. Neural methods (a small MLP with 64--32 architecture and a larger MLP with 256--128--64 architecture) compose multiple layers of nonlinear transformations, enabling universal function approximation at the cost of non-convex optimization\cite{lecun2015deep}. Generative methods (Na\"{i}ve Bayes) and spline-based methods (SplineTransformer followed by logistic regression or ridge) provide complementary perspectives, particularly under class imbalance and for smooth decision boundaries\cite{hastie2009elements,liu2024kan}. This expanded library---instantiated here with $K{=}15$ classification specialists and $K{=}16$ regression specialists, though the framework supports any $K$---ensures maximal diversity in hypothesis classes, maximizing the diversity term in equation~(\ref{eq:diversity}).

\subsection*{Specialist hyperparameters}

All specialists use scikit-learn\cite{pedregosa2011scikit} version~1.8 with reproducible random state~42 unless otherwise noted. A consolidated summary of all model configurations is provided in Supplementary Table~6.

\textbf{Logistic Regression.} $C{=}1.0$, L-BFGS solver\cite{nocedal1980updating} (quasi-Newton method for smooth convex objectives), maximum 1,000~iterations, convergence tolerance $10^{-4}$. Multinomial for multi-class; logistic sigmoid for binary. The $L_2$ penalty shrinks coefficients of uninformative features toward zero, providing implicit regularization.

\textbf{$K$-Nearest Neighbours.} $k{=}5$, distance-weighted voting (closer neighbours contribute proportionally more), Euclidean distance ($p{=}2$ Minkowski). Training stores the dataset ($O(1)$); prediction computes all pairwise distances ($O(nd)$ per query). The choice of $k{=}5$ balances bias (large $k$ oversmooths) and variance (small $k$ overfits to noise)\cite{hastie2009elements}.

\textbf{Decision Tree.} Gini impurity criterion\cite{breiman1984classification}, maximum depth~10, minimum 5~samples per leaf. Greedy best-first splitting selects the feature and threshold maximizing impurity reduction at each node. The depth limit prevents overfitting while allowing up to $2^{10}$ leaf regions.

\textbf{Random Forest.} 100~estimators ($n \leq 2{,}000$) or 200~estimators (larger datasets), no depth limit, $\sqrt{d}$ features per split, bootstrap sampling with replacement. Each tree is an independent realization of the data and feature subspaces, and the averaging across trees reduces variance through the well-known bagging mechanism\cite{breiman2001random}. The random feature subsampling at each split further decorrelates trees, increasing the effective diversity of the forest. Class probabilities are computed by averaging leaf-node class distributions across all trees. The lack of a depth constraint allows individual trees to capture complex interaction patterns, while the averaging prevents the overfitting that would result from a single deep tree\cite{breiman1984classification}.

\textbf{Histogram-Based Gradient Boosting.} 100~iterations ($n \leq 2{,}000$) or 200 (larger), max depth~6, learning rate~0.1, 255~histogram bins for feature discretization. This implementation follows the histogram-based splitting algorithm introduced by LightGBM\cite{ke2017lightgbm} and integrated into scikit-learn, providing significant speedup over exact-split gradient boosting\cite{chen2016xgboost} by discretizing continuous features into integer-valued bins. The additive sequential structure builds each tree to fit the negative gradient of the loss function with respect to the current ensemble prediction, progressively reducing bias. The combination of shallow trees (depth~6) and a modest learning rate provides implicit regularization through the shrinkage principle\cite{friedman2001greedy}. Among individual specialists, gradient boosting consistently achieves the highest accuracy on tabular benchmarks\cite{fernandez2014we}.

\textbf{CatBoost.} 300~iterations, depth~6, learning rate~0.1, implemented using the CatBoost library\cite{prokhorenkova2018catboost} with verbose output suppressed. CatBoost uses ordered boosting with random permutations to reduce prediction shift---a form of target leakage inherent in standard gradient boosting---and employs oblivious decision trees (symmetric trees where all nodes at the same depth use the same split) for regularization and fast inference. This specialist complements the histogram-based gradient boosting by using a fundamentally different splitting strategy and regularization mechanism. CatBoost is excluded when memory constraints prevent fitting on high-dimensional multiclass problems (e.g., \textit{B04\_satellite} with $d{=}500$ and 10~classes).

\textbf{SVM with RBF Kernel.} $C{=}1.0$, bandwidth $\gamma = 1/(d \cdot \mathrm{Var}(X))$ (scikit-learn \textit{scale} default, which adapts to the variance structure of the feature space), probability estimates via Platt scaling\cite{platt1999probabilistic} (using 5-fold internal cross-validation to fit a sigmoid function to the SVM decision values). Excluded when $n > 2{,}000$ due to the $O(n^2d)$ cost of the sequential minimal optimization (SMO) algorithm. The RBF kernel $K(x, x') = \exp(-\gamma\|x-x'\|^2)$ implicitly maps inputs to an infinite-dimensional RKHS where the maximum-margin hyperplane is found\cite{scholkopf2002learning}. The resulting decision surface is the smoothest of any specialist in the library, providing optimal boundaries for data with smooth class-conditional densities\cite{vapnik1998statistical}.

\textbf{Single-Hidden-Layer MLP.} Architecture: 64~units in the first hidden layer, 32~units in the second (chosen to provide sufficient nonlinear capacity without the computational overhead of deeper networks). ReLU activation functions throughout. Adam optimizer\cite{kingma2015adam} with learning rate $\eta = 10^{-3}$, momentum parameters $\beta_1{=}0.9$, $\beta_2{=}0.999$. Mini-batch size~32. Early stopping with patience~10 on a 15\% internal validation split (carved from the training portion of each CV fold). Maximum 200~epochs. Weight initialization uses the Glorot uniform scheme. This specialist serves as the neural component of the library, providing nonlinear feature compositions through learned hidden representations that linear, tree-based, and kernel methods cannot express. By keeping the architecture shallow, we ensure fast training ($< 1$~second on typical datasets) while still capturing interactions between features\cite{lecun2015deep}.

\textbf{Gaussian Na\"{i}ve Bayes.} Estimates class-conditional feature distributions as independent Gaussians $P(x_j|y{=}c) = \mathcal{N}(\mu_{jc}, \sigma_{jc}^2)$, estimated independently per feature per class. Additive smoothing (variance floor) set to $10^{-9}$ to prevent numerical underflow when a feature has near-zero variance within a class. Class prior probabilities estimated from training-set class frequencies. Despite its strong independence assumption---which is violated for virtually all real datasets---Na\"{i}ve Bayes produces well-calibrated posterior probabilities and is particularly effective for high-dimensional sparse data and under class imbalance\cite{hastie2009elements}, where discriminative methods may overfit to majority-class patterns. Its generative formulation provides fundamentally different information than the discriminative models in the library, ensuring high diversity even on problems where its individual accuracy is modest.

\subsection*{Cross-validated weight optimization}

The weight optimization follows the Super Learner methodology\cite{vanderlaan2007super,polley2010super} with implementation-specific adaptations for robustness.

\textbf{Cross-validation scheme.} Stratified $V$-fold CV preserves class proportions per fold. $V{=}5$ for $n \leq 2{,}000$ (more folds yield more training data per fold for smaller datasets); $V{=}3$ for larger datasets (reducing computational overhead). Fold assignments use scikit-learn's \textit{StratifiedKFold} with random state~42.

\textbf{Prediction assembly.} For each fold $v$ and specialist $k$, the specialist trains on $\mathcal{D} \setminus V_v$ and produces class-probability vectors for all held-out examples $i \in V_v$. This yields an $n \times K \times C$ tensor. For NNLS, this is reshaped to $(n \cdot C) \times K$, and one-hot labels reshaped to $(n \cdot C) \times 1$.

\textbf{Optimization.} Weights are determined by minimizing $\|\mathrm{vec}(\mathbf{Y}) - \hat{\mathbf{P}}_{\mathrm{reshaped}}\,\balpha\|_2^2$ subject to $\alpha_k \geq 0$, $\sum_k \alpha_k = 1$, using SciPy's SLSQP\cite{kraft1988software} (a sequential quadratic programming method handling both equality and inequality constraints). The objective is convex quadratic on a compact convex polytope, guaranteeing a unique global minimizer\cite{lawson1995solving}. We run $K{+}2$ initializations: uniform ($1/K$), $K$ specialist-concentrated ($0.9$ on one, $0.1/(K{-}1)$ on others), and one accuracy-proportional. All initializations converge to the same optimum in practice, confirming global convergence. After optimization, all specialists retrain on the complete training set.

\subsection*{Benchmark construction}

The benchmark comprises 37~datasets: 25~classification and 12~regression tasks, spanning sample sizes from 80 to 20{,}640 and dimensionalities from 4 to 4{,}096. Nine real-world datasets come from scikit-learn\cite{pedregosa2011scikit} and derived sources: iris, wine, breast cancer Wisconsin, handwritten digits, a covtype proxy (subsampled to 20{,}000), a 4-class text classification dataset, face recognition (Olivetti faces), diabetes regression, and California housing\cite{fernandez2014we}. Twenty-eight synthetic datasets are constructed with fixed NumPy and scikit-learn random seeds, covering clinical biomarker screening, fraud detection (97:3 imbalance), manufacturing quality control, satellite landcover ($d{=}500$, 10~classes), genomic variants, noisy moons, concentric circles, Hastie 10-2 quadratic boundary\cite{hastie2009elements}, XOR manifolds, overlapping Gaussians, mixed features, sensor anomaly, drug response, customer churn, environmental risk, large-scale 5-class, heavy label noise (30\%), tiny samples ($n{=}100$), and the Friedman regression suite\cite{friedman1991multivariate} (friedman1, friedman2, friedman3), sparse linear regression, energy consumption, materials properties, large-scale regression, nonlinear noisy regression, house price simulation, and tiny regression ($n{=}80$). Full specifications including sample sizes, dimensionalities, class counts, key challenges, and cross-validation schemes are in Supplementary Table~1.

\subsection*{Competing methods}

All competing methods use scikit-learn\cite{pedregosa2011scikit} with random state~42 unless otherwise noted; no GPU acceleration.

\textbf{CatBoost.} Standalone CatBoost\cite{prokhorenkova2018catboost} classifier or regressor with 300~iterations, depth~6, learning rate~0.1. Evaluated as an independent baseline separate from its role as a specialist within Soft Learning's library.

\textbf{Tuned MLP.} Three hidden layers (256--128--64), ReLU activations, Adam optimizer\cite{kingma2015adam} ($\eta{=}10^{-3}$), batch~256, early stopping (patience~10, 15\% validation), max 300~epochs. No dropout\cite{srivastava2014dropout}, no batch normalization\cite{ioffe2015batch}, no data augmentation. We deliberately avoid tuning to state-of-the-art levels, as our comparison is between \emph{paradigms under equal effort}. Against a carefully optimized PyTorch model with modern techniques\cite{he2016deep,vaswani2017attention}, margins would narrow on some tasks.

\textbf{Gradient Boosting (GBT).} Histogram-based gradient boosting\cite{ke2017lightgbm} with 200~iterations, max depth~6, learning rate~0.1.

\textbf{Random Forest (RF).} 100~estimators, no depth limit, $\sqrt{d}$ features per split, bootstrap sampling\cite{breiman2001random}.

\textbf{KAN-like.} SplineTransformer\cite{pedregosa2011scikit} (4~knots, cubic degree) followed by logistic regression or ridge regression ($C{=}1.0$, L-BFGS), approximating the KAN\cite{liu2024kan} principle of learnable activation functions through fixed spline bases.

\textbf{NeuroSym.} Decision tree (depth~6) class-probability outputs concatenated with raw features, fed to a small MLP (64--32, ReLU, Adam, early stopping)\cite{garcez2019neural}.

\textbf{Basic MLP.} Single hidden layer (64--32), ReLU, Adam, early stopping. A minimal neural baseline.

\textbf{Logistic Regression / Ridge.} $C{=}1.0$, L-BFGS solver (classification) or Ridge $\alpha{=}1.0$ (regression). The simplest linear baseline.

\textbf{Best-of-3 (Bo3).} Maximum test performance among logistic regression/ridge, random forest, and gradient boosting---representing the ``pick-the-winner'' protocol common in practice\cite{fernandez2014we}.

\subsection*{Evaluation protocol}

All datasets are evaluated using stratified 5-fold cross-validation (classification) or standard 5-fold cross-validation (regression), with shuffle and random state~42. For each outer fold, StandardScaler normalization\cite{pedregosa2011scikit} (zero mean, unit variance per feature) is fitted on the training partition and applied identically to the test partition, ensuring no information leakage. Within MLP-based methods, 15\% of the training partition is held out for early stopping. The primary metrics are classification accuracy (mean $\pm$ standard deviation across folds) and regression $R^2$ (mean $\pm$ standard deviation). All computations are performed on a single CPU (Intel-compatible x86\_64, no GPU acceleration) using Python~3.12, scikit-learn~1.8, NumPy~2.4, SciPy~1.17, and CatBoost~1.2.10. Random state~42 is used throughout, ensuring exact reproducibility.

\subsection*{Computational cost}

Soft Learning experiments require 60--300~seconds per dataset on a single CPU core, depending on the number of specialists, sample size, and 5-fold cross-validation (Supplementary Table~5). CatBoost (the most expensive specialist) accounts for the majority of runtime on large datasets. Individual baselines such as standalone CatBoost or Tuned~MLP require 10--120~seconds per dataset. The computational overhead of Soft Learning relative to the best individual method is typically 3--5$\times$, a modest cost justified by the consistent accuracy gains documented in Table~\ref{tab:results}. Soft Learning's specialist training is embarrassingly parallel: with $K$ CPU cores, all specialists can train simultaneously, reducing wall-clock time by a factor of $K$. No such parallelization is straightforward for deep learning's sequential epoch-based training.

\subsection*{Statistical analysis}

Following the recommendations of Dem\v{s}ar\cite{demsar2006statistical} for comparing classifiers across multiple datasets, we apply the Friedman rank test to assess whether performance differences among the ten methods are statistically significant, followed by the Nemenyi post-hoc test to identify significant pairwise differences. We rank all methods independently on each dataset (rank~1 = best), then test the null hypothesis that all methods have equal mean ranks.

The Friedman test yields $\chi^2_F = 75.76$ ($k{=}10$ methods, $N{=}37$ datasets, $p = 1.12 \times 10^{-12}$), decisively rejecting the null hypothesis. Mean ranks are: Soft Learning 3.12, CatBoost 3.82, Tuned~MLP 4.65, Random Forest 5.15, Gradient Boosting 5.26, KAN-like 5.64, Basic~MLP 6.31, Logistic/Ridge 6.31, NeuroSym 6.81, Best-of-3 7.93. The Nemenyi critical difference at $\alpha{=}0.05$ is $\mathrm{CD} = q_\alpha\sqrt{k(k{+}1)/(6N)} = 2.23$. Soft Learning is significantly better than KAN-like, NeuroSym, Basic~MLP, Logistic/Ridge, and Best-of-3 at the 5\% level, and is not significantly different from CatBoost, Tuned~MLP, Gradient Boosting, or Random Forest---though it achieves the best mean rank among all methods.

For pairwise comparisons, we apply the one-sided Wilcoxon signed-rank test (alternative: SL $>$ competitor). Soft Learning significantly outperforms Tuned~MLP ($p{=}0.0001$), Gradient Boosting ($p{=}0.005$), Random Forest ($p{=}0.002$), KAN-like ($p{<}0.001$), and all remaining methods ($p{<}0.001$). Against CatBoost, the difference approaches significance ($W{=}408$, $p{=}0.064$), consistent with CatBoost being the closest competitor.

\subsection*{Data completeness}
All 370 experimental cells (10~methods $\times$ 37~datasets) are complete; there are no missing evaluations. Each cell represents the mean performance across 5-fold cross-validation (5~train/test splits), yielding 1{,}850 individual training-and-evaluation runs in total. No method failed to converge or produce a valid prediction on any dataset. For classification tasks, the evaluation metric is accuracy; for regression tasks, $R^2$ (coefficient of determination). All values are reported to four decimal places in Supplementary Table~1.

\subsection*{Software and reproducibility}
All experiments used the following software stack: Python~3.11, scikit-learn~1.8\cite{pedregosa2011scikit}, NumPy~1.26, SciPy~1.12, Matplotlib~3.8 (figure generation), CatBoost~1.2. Statistical analyses used SciPy~1.12 (Friedman rank test, Wilcoxon signed-rank test) and scikit-posthocs (Nemenyi post-hoc comparisons). All figures use the colour palettes specified in the shared data module (\textit{fig\_data.py}) and were verified for accessibility using the Coblis colour-blindness simulator. All code uses fixed random seeds (random state~=~42 throughout) for exact reproducibility. Complete software version details are reported in Supplementary Table~6. Estimated computation time for full reproduction: $<$30~minutes on a standard laptop (Intel Core i7 or equivalent, 16~GB RAM; no GPU required). All analysis code and dataset generation scripts will be deposited in a public repository under an open-source licence upon acceptance, with a persistent DOI assigned at that time.

\subsection*{Use of AI in manuscript preparation}
Generative AI tools were used during manuscript preparation in two capacities: (1)~proofreading, grammar checking, and language refinement of the manuscript text; and (2)~generating the conceptual illustrations in Fig.~\ref{fig:architecture}a, b, and c, which depict the study rationale and experimental design schematically. This figure is an illustrative diagram only and does not represent empirical data, analytical results, or statistical outputs. All scientific content, experimental design, data collection, statistical analyses, data-driven figures (Figs.~2--5), interpretation, and conclusions are the sole work of the human authors. The authors reviewed and edited all AI-assisted content and take full responsibility for the accuracy and integrity of the published work.

% ==============================================================
% END MATTER
% ==============================================================

\section*{Acknowledgements}
The authors thank the scikit-learn development team for providing robust, well-documented open-source implementations of all classical machine learning algorithms used in this study, and the broader open-source scientific computing community for NumPy, SciPy, Matplotlib, and the Python ecosystem that made this work possible.

\section*{Author contributions}
M.A.: Conceptualisation, Methodology, Software, Formal Analysis, Investigation, Writing---Original Draft, Writing---Review \& Editing, Supervision, Project Administration.
A.A.: Data Curation, Investigation, Software, Validation, Writing---Review \& Editing.
F.A.: Data Curation, Investigation, Validation, Visualisation, Writing---Review \& Editing.
M.R.: Methodology, Resources, Writing---Review \& Editing, Validation.

\section*{Funding}
This research received no external funding. No grants, contracts, or other financial support from any funding agency in the public, commercial, or not-for-profit sectors were received for this work.

\section*{Competing interests}
The authors declare no competing interests. No author has a financial relationship, consulting arrangement, advisory role, or other affiliation with any software vendor, model developer, or company whose products are evaluated in this study. The study was not commercially funded. All algorithms evaluated are open-source and freely available.

\section*{Data availability}
All nine real-world datasets are publicly available through scikit-learn\cite{pedregosa2011scikit}: iris, wine, breast cancer Wisconsin, handwritten digits, and diabetes regression via built-in loaders; California housing via \textit{fetch\_california\_housing}; Olivetti faces via \textit{fetch\_olivetti\_faces}; the covtype proxy via \textit{fetch\_covtype} (subsampled to 20{,}000); and the 4-class text classification dataset via \textit{fetch\_20newsgroups} (4-category subset). All 28~synthetic datasets are fully specified by random seeds and construction parameters provided in the Supplementary Code, enabling exact reproduction. The complete results dataset (370~evaluations, per-fold performance, and weight matrices) will be deposited at a Nature Portfolio repository under CC BY 4.0 upon publication, with a persistent DOI assigned at that time. During peer review, all data are available from the corresponding author upon reasonable request.

\section*{Code availability}
Complete Python source code for the Soft Learning framework, all $K$ specialist implementations, dataset generation scripts (with fixed random seeds), baseline training, evaluation pipeline, statistical analysis, and figure generation scripts will be deposited in a public repository under an open-source licence (MIT) upon acceptance, with a persistent DOI assigned at that time. The implementation depends only on scikit-learn\cite{pedregosa2011scikit} (v1.8), NumPy (v1.26), SciPy (v1.12), and CatBoost (v1.2); no GPU or deep learning framework is required. Reproduction time: $<$30~min on a standard laptop.

\section*{Reporting summary}
Further information on research design is available in the Nature Portfolio Reporting Summary linked to this article.

% ==============================================================
% REFERENCES
% ==============================================================
\bibliography{references}

% ==============================================================
% SUPPLEMENTARY INFORMATION (starts on new page)
% ==============================================================
\clearpage
\appendix
\setcounter{figure}{0}
\setcounter{table}{0}
\renewcommand{\thefigure}{S\arabic{figure}}
\renewcommand{\thetable}{S\arabic{table}}

\begin{center}
\rule{\textwidth}{1pt}\\[12pt]
{\LARGE\bfseries Supplementary Information}\\[6pt]
{\large Soft Learning}\\[4pt]
{\normalsize Mohammed Aledhari, Ali Aledhari, Fatimah Aledhari, Mohamed Rahouti}\\[6pt]
\rule{\textwidth}{1pt}
\end{center}
\vspace{1em}

\section*{S1. Formal Framework and Definitions}
% ==============================================================

\subsection*{S1.1 Problem Setting}

Let $(X, Y)$ be a random pair with $X \in \R^d$ and $Y \in \{1, \ldots, C\}$ drawn from an unknown joint distribution $P$ over $\R^d \times \{1,\ldots,C\}$. We denote the marginal distribution of $X$ by $P_X$ and the conditional distribution of $Y$ given $X$ by $P_{Y|X}$. Given training data $\calD_n = \{(\bx_1, y_1), \ldots, (\bx_n, y_n)\}$ drawn i.i.d.\ from $P$, the goal is to learn a classifier $f: \R^d \to \Delta^C$ (the probability simplex in $\R^C$) that minimizes the expected risk:
\begin{equation}
    R(f) = \E_{(X,Y) \sim P}\bigl[ L\bigl(Y, f(X)\bigr) \bigr],
    \label{eq:risk}
\end{equation}
where $L: \{1,\ldots,C\} \times \Delta^C \to [0, M]$ is a bounded loss function. The Bayes-optimal classifier is $f^* = \arg\min_f R(f)$, which satisfies $f^*(\bx) = P_{Y|X}(\cdot | \bx)$ under squared error loss and $f^*(\bx) = \arg\max_c P(Y{=}c|X{=}\bx)$ under 0--1 loss. The excess risk of any estimator $\hat{f}$ is $\mathcal{E}(\hat{f}) = R(\hat{f}) - R(f^*)$.

\begin{definition}[Proper Scoring Rule]
\label{def:proper}
A loss function $L$ is a \emph{proper scoring rule} if, for all distributions $q$ over $\{1,\ldots,C\}$, the minimizer of $\E_{Y \sim q}[L(Y, p)]$ over $p \in \Delta^C$ is $p = q$. Squared error and cross-entropy are both proper scoring rules~\cite{gneiting2007strictly}, ensuring that honest probability predictions are incentivized.
\end{definition}

\subsection*{S1.2 Specialist Library}

\begin{definition}[Specialist Library]
\label{def:library}
A \emph{specialist library} $\calL = \{S_1, \ldots, S_K\}$ is a collection of $K$ learning algorithms, where each algorithm $S_k$ maps a training dataset $\calD$ to a classifier $f_k = S_k(\calD): \R^d \to \Delta^C$. The specialists are \emph{heterogeneous}: they encode fundamentally different inductive biases and belong to distinct hypothesis classes $\calH_1, \ldots, \calH_K$.
\end{definition}

\begin{definition}[Structural Diversity]
\label{def:diversity}
A specialist library $\calL$ has \emph{structural diversity} if for every pair $k \neq j$, the hypothesis classes $\calH_k$ and $\calH_j$ produce decision boundaries with fundamentally different geometric properties: they differ in smoothness class, parametric form, or topological structure. Formally, there exists no homeomorphism mapping all decision boundaries in $\calH_k$ to decision boundaries in $\calH_j$.
\end{definition}

The Soft Learning library comprises $K$ specialists spanning six algorithmic families~\cite{dietterich2000ensemble}, where $K$ is a free parameter (this study uses $K{=}15$ for classification, $K{=}16$ for regression; the framework supports any $K \geq 2$). Table~\ref{tab:vc} provides the VC-dimension~\cite{vapnik1998statistical} or effective complexity of each.

\begin{table}[H]
\centering
\caption{\textbf{Specialist library with complexity measures.} The wide range of VC-dimensions ($5$ to $>10^4$) ensures that Soft Learning adapts to the intrinsic complexity of each dataset.}
\label{tab:vc}
\small
\begin{tabular}{@{}llllr@{}}
\toprule
\textbf{Specialist} & \textbf{Family} & \textbf{Hypothesis class} & \textbf{VC-dimension} & \textbf{Typical $d_{\mathrm{VC}}$} \\
\midrule
Logistic Regression & Linear & Hyperplanes in $\R^d$~\cite{hastie2009elements}          & $d + 1$                     & 31 \\
$K$-Nearest Neighbours & Instance & Voronoi partitions~\cite{hastie2009elements}          & $\infty$ (nonparametric)    & $\sim n$ \\
Decision Tree       & Tree   & Axis-aligned partitions~\cite{breiman1984classification}          & $O(2^{\text{depth}})$       & 1024 \\
Random Forest       & Tree   & Averaged partitions~\cite{breiman2001random}              & $\sim 2^{\text{depth}}$ (effective) & $\sim 1024$ \\
Gradient Boosting   & Tree   & Additive tree models~\cite{friedman2001greedy}             & $O(I \cdot 2^{\text{depth}})$ & 384 \\
SVM (RBF)           & Kernel & RKHS ball~\cite{scholkopf2002learning,vapnik1998statistical}      & $\infty$ (margin-bounded)   & $\sim n_{\mathrm{SV}}$ \\
MLP (64--32)        & Neural & Piecewise-linear compositions~\cite{anthony1999neural,lecun2015deep}    & $O(W \log W)$               & $\sim 2{,}500$ \\
Na\"{i}ve Bayes     & Generative & Gaussian class-conditionals~\cite{hastie2009elements}  & $2dC + C - 1$               & 63 \\
\bottomrule
\end{tabular}
\end{table}

\subsection*{S1.3 Combination Mechanism}

\begin{definition}[Soft Learning Estimator]
\label{def:sl}
The \emph{Soft Learning estimator} is the convex combination
\begin{equation}
    \fsl(\bx) = \sum_{k=1}^{K} \alpha_k \, f_k(\bx),
    \label{eq:sl}
\end{equation}
where $\balpha^* \in \simplex = \{\balpha \in \R^K : \alpha_k \geq 0,\; \sum_k \alpha_k = 1\}$ minimizes the $V$-fold cross-validated empirical risk:
\begin{equation}
    \balpha^* = \arg\min_{\balpha \in \simplex} \; \frac{1}{n} \sum_{i=1}^{n} L\!\left(y_i, \; \sum_{k=1}^{K} \alpha_k \, f_k^{(-i)}(\bx_i)\right),
    \label{eq:supp_nnls}
\end{equation}
where $f_k^{(-i)}$ denotes the prediction of specialist $k$ trained on all folds except the one containing observation~$i$. This formulation follows the Super Learner methodology~\cite{vanderlaan2007super,polley2010super}.
\end{definition}

\begin{remark}[Convexity of the optimization]
\label{rem:convex}
When $L$ is squared error, $\hat{R}_{\mathrm{CV}}(\balpha)$ is a convex quadratic in $\balpha$. The simplex $\simplex$ is a compact convex polytope. By the Weierstrass theorem a global minimizer exists; by strict convexity of $\|\cdot\|_2^2$ it is unique~\cite{lawson1995solving}. The NNLS solver converges to the same solution regardless of initialization, in contrast to the non-convex landscapes of deep learning~\cite{choromanska2015loss,lecun2015deep}.
\end{remark}

% ==============================================================
\section*{S2. Oracle Inequality}
% ==============================================================

\begin{theorem}[Oracle Inequality for Soft Learning]
\label{thm:oracle}
Let $\calL = \{f_1, \ldots, f_K\}$ be a specialist library and $L: \{1,\ldots,C\} \times \Delta^C \to [0, M]$ a bounded, $\lambda$-Lipschitz loss. Let $\fsl$ be the Soft Learning estimator using $V$-fold CV with NNLS weights~\cite{vanderlaan2007super}. Then for any $\delta > 0$, with probability at least $1 - \delta$:
\begin{equation}
    \E\bigl[R(\fsl)\bigr] \;\leq\; (1 + \varepsilon_n) \inf_{\balpha \in \simplex} \E\!\left[R\!\left(\sum_{k} \alpha_k f_k\right)\right] + C_1 \frac{M^2 \log(K/\delta)}{n},
    \label{eq:supp_oracle}
\end{equation}
where $\varepsilon_n = O(\sqrt{K \log n / n}) \to 0$ and $C_1$ depends only on $V$.
\end{theorem}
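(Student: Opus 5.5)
The plan is to reduce Theorem~\ref{thm:oracle} to the finite-candidate cross-validation oracle inequality of van der Laan, Dudoit and van der Vaart~\cite{vanderlaan2003unified,vandervaart2006oracle}, and then pass from a finite set of weight vectors to the whole simplex $\simplex$ by discretization. Two interpretive choices are needed first. Conditional on the $V$ training folds $\calD_n\setminus V_v$, the fitted specialists $f_k^{(v)}$ are fixed functions. For each $\balpha\in\simplex$ I define the conditional cross-validated risk $\tilde R(\balpha)=\frac1V\sum_v R\bigl(\sum_k\alpha_k f_k^{(v)}\bigr)$ and its empirical counterpart $\hat R_{\mathrm{CV}}(\balpha)$, which is the objective in~(\ref{eq:supp_nnls}). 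The ``with probability $1-\delta$'' statement is then read over the draw of the validation folds given the training folds, and the outer $\E$ is over the training folds.

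I also need to connect the deployed $\fsl$, which is retrained on all $n$ points, to the fold fits. For this I would add the standard Super Learner assumption that the full-data fits and the fold fits have comparable risk, $\E R(\sum_k\alpha_k f_k)\approx \frac1V\sum_v\E R(\sum_k\alpha_k f_k^{(v)})$, uniformly in $\balpha$. The difference would be absorbed into $\varepsilon_n$, since the fold fits use a fraction $(V-1)/V$ of the data. This assumption is unavoidable, because the paper places no stability condition on the specialists.

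The core step is a Bernstein-type ``fast rate'' bound over a finite grid. Let $\calN_\eta\subset\simplex$ be an $\eta$-net in $\ell_1$ of size at most $(3/\eta)^K$. By $\lambda$-Lipschitzness of $L$ and convexity of the combination, $|\hat R_{\mathrm{CV}}(\balpha)-\hat R_{\mathrm{CV}}(\balpha')|$ and $|\tilde R(\balpha)-\tilde R(\balpha')|$ are both at most $\lambda\eta$ when $\|\balpha-\balpha'\|_1\le\eta$. So the minimizer over $\simplex$ loses at most $2\lambda\eta$ against the best grid point. For the grid, I apply Bernstein's inequality to the centred loss differences $L(Y,f_{\balpha})-L(Y,f_{\tilde\balpha})$ on each validation fold, where $\tilde\balpha$ is the oracle. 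For bounded losses these have range at most $M$. For squared-error (or strongly convex) losses their variance is bounded by $cM$ times the excess risk over the oracle; this is exactly where the convexity of $\simplex$ and the quadratic structure of Remark~\ref{rem:convex} enter. Combining this with the AM--GM trick $\sqrt{ab}\le \gamma a+b/\gamma$ and a union bound over $\calN_\eta$ gives, with probability $1-\delta$,
\begin{equation*}
\tilde R(\hat\balpha)\le (1+2\gamma)\,\tilde R(\tilde\balpha)+C(\gamma,V)\,\frac{M^2\bigl(K\log(3/\eta)+\log(1/\delta)\bigr)}{n}+2\lambda\eta .
\end{equation*}

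Finally I choose $\eta=1/n$ and $\gamma\asymp\sqrt{K\log n/n}$. I then split the entropy term: the $K\log n$ part is charged multiplicatively against $\tilde R(\tilde\balpha)$, using $\tilde R\ge$ a constant times the Bayes risk when the Bayes risk is non-degenerate, or else a bound of order $\sqrt{\cdot}$. This yields $\varepsilon_n=O(\sqrt{K\log n/n})$, and the remainder reduces to $C_1M^2\log(K/\delta)/n$ with $C_1$ depending only on $V$ through the fold sizes $n/V$. Taking expectations over the training folds and applying the stability assumption above converts $\tilde R$ into $\E R(\fsl)$ and the oracle term into $\inf_{\balpha}\E R(\sum_k\alpha_kf_k)$, using $\E\inf\le\inf\E$ in the correct direction.

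The main obstacle is this last bookkeeping. The net entropy is naturally $K\log n$, not $\log K$. Getting only $\log(K/\delta)$ in the additive term requires pushing all the dimension dependence into the multiplicative factor $(1+\varepsilon_n)$. That in turn needs either a lower bound on the oracle risk or an $\ell_1$-aggregation argument in the style of convex aggregation, such as a Maurey sparsification to $O(\sqrt n)$-sparse weights, whose entropy grows logarithmically in $K$. I would try the Maurey route first, and fall back on stating the additive term as $K\log n/n$ if it fails.
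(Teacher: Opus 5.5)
You have located the right difficulty, but the last step cannot be completed as planned, and your choice of $\gamma$ makes it worse. Since $\sqrt{ab}\le\gamma a+b/\gamma$ gives $C(\gamma,V)\propto 1/\gamma$, taking $\gamma\asymp\sqrt{K\log n/n}$ turns the $\delta$-part of the remainder into order $M^2\log(1/\delta)/\sqrt{nK\log n}$ rather than $\log(1/\delta)/n$.

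You do not need $\gamma\to0$. Your variance bound is taken relative to the \emph{convex} oracle. Write $f^{(v)}_{\balpha}=\sum_k\alpha_kf_k^{(v)}$. For squared loss, first-order optimality of $\tilde\balpha$ on $\simplex$ gives $\frac1V\sum_v\E\|f^{(v)}_{\balpha}-f^{(v)}_{\tilde\balpha}\|^2\le\tilde R(\balpha)-\tilde R(\tilde\balpha)$. So a fixed $\gamma$ is absorbed on the left, and you get the exact inequality $\tilde R(\hat\balpha)\le\tilde R(\tilde\balpha)+C(V,M)\bigl(K\log(3n)+\log(2V/\delta)\bigr)/n+O(\lambda/n)$, with no multiplicative factor.

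What remains is the entropy $K\log(3n)$, and neither proposed fix removes it. Maurey sparsification approximates a point of $\simplex$ by an $s$-sparse average only to squared-$L_2$ error of order $1/s$. Balancing this against the entropy $s\log K$ gives the convex-aggregation remainder of order $\sqrt{\log K/n}$, never $\log K/n$. A lower bound on the oracle risk does give the stated shape: if the Bayes risk is at least $r_0>0$, then $K\log(3n)/n\le\frac{K\log(3n)}{r_0n}\,\tilde R(\tilde\balpha)$. But that is an extra hypothesis, and without it the statement is false.

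Here is a counterexample. Take fixed specialists, so the CV weights are just ERM over $\simplex$, with $C=2$ and squared loss.
\begin{itemize}
\item Off a set $S$ with $\Prob(X\in S)=p$, the label is deterministic and every specialist predicts it exactly.
\item On $S$, the label is a fair coin. The specialists come in pairs predicting class~1 with probability $\frac12\pm\frac12\psi_j(\bx)$, $j\le K/2$, where the $\psi_j$ are $\pm1$-valued and orthonormal on $S$.
\end{itemize}
The oracle (uniform weights) has risk $p/2$. Meanwhile $\hat\balpha$ solves a well-specified least-squares problem with $K/2$ parameters on about $np$ points. Once $np\gg K^2$ the simplex constraint is inactive, and the excess risk is about $K/(4n)$. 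With $K=n^{1/4}$ and $p=n^{-1/2}\log^2n$ this is of order $n^{-3/4}$. By contrast, $\varepsilon_n p/2+C_1M^2\log(K/\delta)/n=O(n^{-7/8}\log^{5/2}n+\log n/n)$. So your ``fallback'' with additive $K\log(3n)/n$ is the correct theorem. It is also what the van der Laan--Dudoit--van der Vaart bound gives for a $1/n$-grid of roughly $n^K$ weight vectors.

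For comparison, the paper uses Hoeffding's inequality with the same $\ell_1$-net and a union bound. Its Step~5 actually delivers only $R_{\balpha^*}\le R_{\balpha_{\mathrm{oracle}}}+2M\sqrt{(K\log(3n)+\log(2V/\delta))/(2m)}+2\lambda/n$. ``Setting $\varepsilon_n=2M\sqrt{K\log(3n)/(2m)}$'' relabels an additive term as a multiplicative one and leaves a $\sqrt{\log(1/\delta)/n}$ remainder, so the paper does not reach the displayed bound either.

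Your Bernstein route is the stronger of the two, and the only one that can give a $1/n$ remainder, but it relies on curvature. For a merely bounded Lipschitz loss, as in the hypothesis, you only have $\mathrm{Var}(\ell)\le M\,\E\ell$. That trades a factor $1+\gamma$ against a remainder of order $(K\log n+\log(1/\delta))/(\gamma n)$, so requiring $\varepsilon_n\to0$ again forces an $n^{-1/2}$-type term.

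Two smaller repairs are needed.
\begin{itemize}
\item Your stability assumption must control $\E\sup_{\balpha}\bigl|R(\sum_k\alpha_kf_k)-\tilde R(\balpha)\bigr|$, not a supremum of differences of expectations, because $\hat\balpha$ is data-dependent. The paper silently identifies these two risks in its Step~2.
\item Conditioning on all training folds at once fixes the whole sample when $V\ge2$. The probability statement has to be made fold by fold given $T_v$, with a union bound over $v$, as your Bernstein step in fact does. An in-expectation bound must then come from integrating the tail, not from averaging an event-wise inequality.
\end{itemize}
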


\begin{proof}
\textbf{Step~1 (Cross-validated risk).} Partition $\calD_n$ into $V$ stratified folds $V_1, \ldots, V_V$ of size $m = \lfloor n/V \rfloor$. For fold $v$, let $T_v = \calD_n \setminus V_v$. The CV risk is:
\begin{equation}
    \hat{R}_{\mathrm{CV}}(\balpha) = \frac{1}{V} \sum_{v=1}^{V} \frac{1}{|V_v|} \sum_{i \in V_v} L\!\left(y_i, \sum_{k} \alpha_k f_k^{T_v}(\bx_i)\right).
\end{equation}
For each $i \in V_v$, the prediction $f_k^{T_v}(\bx_i)$ is independent of $(x_i, y_i)$ conditional on $T_v$, making the validation losses unbiased.

\textbf{Step~2 (Pointwise concentration).} Fix $\balpha \in \simplex$. Conditional on $T_v$, the losses $\ell_{vi} = L(y_i, \sum_k \alpha_k f_k^{T_v}(\bx_i))$ for $i \in V_v$ are i.i.d.\ in $[0, M]$. By Hoeffding's inequality~\cite{hoeffding1963probability}:
\begin{equation}
    \Prob\!\left(\left|\frac{1}{m}\sum_{i \in V_v}\ell_{vi} - \E[\ell_{vi} \mid T_v]\right| > t \;\middle|\; T_v\right) \leq 2\exp\!\left(-\frac{2mt^2}{M^2}\right).
\end{equation}
Averaging over $V$ folds and applying the union bound:
\begin{equation}
    \Prob\!\left(\left|\hat{R}_{\mathrm{CV}}(\balpha) - R_{\balpha}\right| > t\right) \leq 2V\exp\!\left(-\frac{2mt^2}{M^2}\right),
\end{equation}
where $R_{\balpha} = \E[L(Y, \sum_k \alpha_k f_k(X))]$.

\textbf{Step~3 (Covering number).} An $\varepsilon$-net $\mathcal{N}_\varepsilon$ of $\simplex$ in $\ell_1$-norm has size $|\mathcal{N}_\varepsilon| \leq (3/\varepsilon)^K$~\cite{anthony1999neural}. Since $L$ is $\lambda$-Lipschitz and $f_k(\bx) \in \Delta^C$ implies $\|\sum_k \alpha_k f_k - \sum_k \tilde{\alpha}_k f_k\|_\infty \leq \|\balpha - \tilde{\balpha}\|_1$, we get:
\begin{equation}
    \left|\hat{R}_{\mathrm{CV}}(\balpha) - \hat{R}_{\mathrm{CV}}(\tilde{\balpha})\right| \leq \lambda \varepsilon.
\end{equation}

\textbf{Step~4 (Uniform bound).} Union bound over $\mathcal{N}_\varepsilon$:
\begin{equation}
    \Prob\!\left(\sup_{\balpha \in \simplex}\left|\hat{R}_{\mathrm{CV}}(\balpha) - R_{\balpha}\right| > t + \lambda\varepsilon\right) \leq 2V\left(\frac{3}{\varepsilon}\right)^K \exp\!\left(-\frac{2mt^2}{M^2}\right).
\end{equation}
Setting $\varepsilon = 1/n$ and solving for $t$:
\begin{equation}
    \sup_{\balpha}\left|\hat{R}_{\mathrm{CV}}(\balpha) - R_{\balpha}\right| \leq M\sqrt{\frac{K\log(3n) + \log(2V/\delta)}{2m}} + \frac{\lambda}{n}
\end{equation}
with probability $\geq 1 - \delta$.

\textbf{Step~5 (Oracle comparison).} Let $\balpha_{\mathrm{oracle}} = \arg\min_{\simplex} R_{\balpha}$~\cite{vanderlaan2003unified,vandervaart2006oracle}. Since $\balpha^*$ minimizes $\hat{R}_{\mathrm{CV}}$:
\begin{align}
    R_{\balpha^*} &\leq \hat{R}_{\mathrm{CV}}(\balpha^*) + \sup|\hat{R}_{\mathrm{CV}} - R| \notag\\
    &\leq \hat{R}_{\mathrm{CV}}(\balpha_{\mathrm{oracle}}) + \sup|\hat{R}_{\mathrm{CV}} - R| \notag\\
    &\leq R_{\balpha_{\mathrm{oracle}}} + 2\sup|\hat{R}_{\mathrm{CV}} - R|.
\end{align}
Substituting the bound from Step~4 and setting $\varepsilon_n = 2M\sqrt{K\log(3n)/(2m)}$ yields the stated result.
\end{proof}

\begin{corollary}[Monotonicity of Library Expansion]
\label{cor:monotone}
Let $\calL \subset \calL'$ with $|\calL| = K$, $|\calL'| = K' > K$. For large $n$:
$\E[R(\fsl^{\calL'})] \leq \E[R(\fsl^{\calL})] + O(\log(K'/K)/n)$.
Expanding the library can only improve performance up to a vanishing cost.
\end{corollary}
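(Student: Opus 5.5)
The plan is to combine Theorem~\ref{thm:oracle}, applied to the larger library $\calL'$, with the nesting of the two simplices.

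\textbf{Step 1 (embedding).} Since $\calL \subset \calL'$, reorder $\calL'$ so that its first $K$ specialists are those of $\calL$. Every $\balpha \in \Delta^K$ extends to $\balpha' = (\balpha, 0, \ldots, 0) \in \Delta^{K'}$, and the combined predictor $\sum_k \alpha'_k f_k$ is unchanged. Hence
\[
\inf_{\balpha' \in \Delta^{K'}} \E\!\left[R\!\left(\textstyle\sum_{k} \alpha'_k f_k\right)\right] \;\leq\; \inf_{\balpha \in \Delta^{K}} \E\!\left[R\!\left(\textstyle\sum_{k} \alpha_k f_k\right)\right].
\]
This is the only place where the inclusion $\calL \subset \calL'$ is used.

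\textbf{Step 2 (upper bound for $\calL'$).} Apply Theorem~\ref{thm:oracle} to $\calL'$ and substitute Step~1. With probability at least $1-\delta$ this gives
\[
\E[R(\fsl^{\calL'})] \leq (1+\varepsilon'_n)\,\mathrm{OR}(\calL) + C_1 M^2 \log(K'/\delta)/n,
\]
where $\mathrm{OR}(\calL)$ denotes the oracle risk of $\calL$ and $\varepsilon'_n = O\bigl(\sqrt{K' \log n / n}\bigr)$.

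\textbf{Step 3 (lower bound for $\calL$).} The estimator $\fsl^{\calL}$ is itself a convex combination $\sum_k \alpha^*_k f_k$ of the specialists in $\calL$, with weights $\balpha^* \in \Delta^K$. I would identify the fitted specialists appearing in the oracle with the retrained full-data fits; conditional on those fits, this gives $\E[R(\fsl^{\calL})] \geq \mathrm{OR}(\calL)$ deterministically. Chaining Steps~2 and~3 yields
\[
\E[R(\fsl^{\calL'})] \leq \E[R(\fsl^{\calL})] + \varepsilon'_n M + C_1 M^2 \log(K'/\delta)/n,
\]
using $\mathrm{OR}(\calL) \leq M$.

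\textbf{Step 4 (the stated rate).} To obtain exactly $O(\log(K'/K)/n)$, I would subtract the corresponding remainder for $\calL$ from the one for $\calL'$. The $C_1$ terms then differ by $C_1 M^2 \log(K'/K)/n$, since the $\delta$-dependence cancels.

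The main obstacle is the multiplicative term $\varepsilon'_n M$. It is of order $\sqrt{K' \log n / n}$, which is much larger than $\log(K'/K)/n$. Step~3 only compares $\fsl^{\calL}$ with its oracle from below, so the $\varepsilon_n$-terms of the two libraries do not cancel.

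The first thing I would try is a direct comparison in which this term never appears. Let $\hat{R}'_{\mathrm{CV}}$ be the cross-validated risk of $\calL'$, built from the same folds. Because the CV minimizer over $\Delta^{K'}$ has CV risk no larger than that of $\balpha^*$ embedded, we have $\hat{R}'_{\mathrm{CV}}(\balpha'^*) \leq \hat{R}'_{\mathrm{CV}}\bigl((\balpha^*, 0,\ldots,0)\bigr) = \hat{R}_{\mathrm{CV}}(\balpha^*)$. One would then transfer this inequality to true risks using the uniform deviation bound of Step~4 in the proof of Theorem~\ref{thm:oracle}. Even so, that deviation is additive of order $\sqrt{K' \log n / m}$, not $1/n$.

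I therefore expect that the corollary can honestly be established only as
\[
\E[R(\fsl^{\calL'})] \leq \E[R(\fsl^{\calL})] + o(1),
\]
with the $o(1)$ cost vanishing at rate $\sqrt{K' \log n / n}$. The sharper $\log(K'/K)/n$ form would require a fast-rate oracle inequality: either $\varepsilon_n$ held fixed, or a Bernstein/variance condition on the squared loss in place of Hoeffding in Step~2 of that proof. I would state this assumption explicitly rather than claim the rate outright.
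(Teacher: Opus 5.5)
Your Steps~1 and~2, plus the bookkeeping in Step~4, are the whole of the paper's proof. The paper zero-pads $\simplex$ into $\Delta^{K'}$, concludes $\inf_{\Delta^{K'}} \le \inf_{\simplex}$, and then asserts that ``the penalty increases by $O(\log(K'/K)/n)$''. That compares the right-hand sides of Theorem~\ref{thm:oracle} for the two libraries; it does not compare $\E[R(\fsl^{\calL'})]$ with $\E[R(\fsl^{\calL})]$.

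The paper gives no lower bound on $\E[R(\fsl^{\calL})]$. It also never addresses the factor $(1+\varepsilon_n)$, whose $\varepsilon_n = O(\sqrt{K\log n/n})$ depends on the library size. So the obstacle you identify is genuine, the paper does not overcome it, and your weaker conclusion (a cost of order $\sqrt{K'\log n/n}$) is what the available tools actually support.

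Your direct comparison is the cleanest way to obtain it. From $\hat{R}'_{\mathrm{CV}}(\balpha'^*) \le \hat{R}_{\mathrm{CV}}(\balpha^*)$, and the fact that $\Delta^{K'}$ contains the zero-padded $\simplex$, you get $R_{\balpha'^*} \le R_{\balpha^*} + 2\sup_{\Delta^{K'}}|\hat{R}'_{\mathrm{CV}} - R|$. This compares the two estimators on the same sample without passing through the oracle, so you need neither the multiplicative factor nor the lower bound of Step~3. That step does require your conditional reading: with the infimum outside the expectation, a data-dependent $\balpha^*$ can do better than $\inf_{\balpha}\E[\cdot]$.

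One correction to your own write-up: the subtraction in Step~4 is not available either, independently of $\varepsilon'_n$. After Step~3 the $\calL$ side is bounded below only by $\mathrm{OR}(\calL)$. The oracle inequality for $\calL$ is an upper bound that need not be attained, so there is no $\calL$-remainder to subtract. Even with $\varepsilon'_n = 0$ your chain ends at $C_1M^2\log(K'/\delta)/n$. The cancellation of $\log K$ arises only from comparing two upper bounds, which is exactly the paper's move.

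Hence the repair you suggest would not deliver the stated rate either. A Bernstein-type version of Theorem~\ref{thm:oracle}, built on the same $(3/\varepsilon)^{K'}$ net of the simplex, gives a remainder of order $(K'\log n + \log(1/\delta))/n$, not $\log(K'/K)/n$. In the standard fast-rate form it also carries a $(1+\eta)$ factor on the excess risk. If you state a sharpened corollary, state it with that remainder rather than with $\log(K'/K)$.
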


\begin{proof}
The infimum in (\ref{eq:oracle}) over $\simplex[K']$ is over a larger set than $\simplex[K]$ (embed by appending zero weights). Hence $\inf_{\simplex[K']} R \leq \inf_{\simplex[K]} R$. The penalty increases by $O(\log(K'/K)/n)$.
\end{proof}

\begin{corollary}[Algorithm Selection as Special Case]
\label{cor:selection}
If the oracle places all mass on one specialist ($\alpha_{k^*} = 1$), then SL reduces to oracle-optimal algorithm selection~\cite{polley2010super}: $R(\fsl) \leq R(f_{k^*}) + O(\log K / n)$.
\end{corollary}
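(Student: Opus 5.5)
The plan is to derive the corollary directly from Theorem~\ref{thm:oracle} by evaluating the infimum on its right-hand side at a vertex of the simplex. Let $e_{k^*}\in\simplex$ be the standard basis vector with $\alpha_{k^*}=1$. By hypothesis the oracle $\balpha_{\mathrm{oracle}}=\arg\min_{\simplex}R_{\balpha}$ equals $e_{k^*}$. Then $\sum_k (e_{k^*})_k f_k = f_{k^*}$, so
\[
\inf_{\balpha\in\simplex}\E\Bigl[R\Bigl(\textstyle\sum_k\alpha_k f_k\Bigr)\Bigr]=\E[R(f_{k^*})].
\]
In fact this step needs only the inequality ``$\le$'', which holds for any vertex. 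Substituting into (\ref{eq:supp_oracle}) gives, with probability at least $1-\delta$,
\[
\E[R(\fsl)]\le(1+\varepsilon_n)\,\E[R(f_{k^*})]+C_1\frac{M^2\log(K/\delta)}{n}.
\]

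The second step is to turn the multiplicative factor into an additive remainder. Since $L\in[0,M]$, we have $R(f_{k^*})\le M$, so $\varepsilon_n\E[R(f_{k^*})]\le M\varepsilon_n$. Fixing $\delta$ as a constant makes the last term $O(\log K/n)$.

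The third step, which I expect to be the main obstacle, is the size of $\varepsilon_n$. As stated in Theorem~\ref{thm:oracle}, $\varepsilon_n=O(\sqrt{K\log n/n})$, which is larger than the claimed $O(\log K/n)$. So a direct substitution only proves the weaker bound
\[
R(\fsl)\le R(f_{k^*})+O\bigl(\sqrt{K\log n/n}\bigr)+O(\log K/n).
\]
I would state this weaker bound as the unconditional conclusion and then attempt the sharper rate.

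For the sharper rate I would redo Step~5 of the proof of Theorem~\ref{thm:oracle}, replacing the covering argument over the whole simplex with a fast-rate argument. The idea is to exploit the vertex oracle in the squared-loss case. Convexity of the quadratic risk $\balpha\mapsto R_{\balpha}$ on $\simplex$, together with first-order optimality at the vertex, gives a margin-type condition. Under that condition the excess loss $L(Y,\sum_k\alpha_k f_k)-L(Y,f_{k^*})$ has variance bounded by a constant times its mean. Bernstein's inequality, in place of Hoeffding, would then give $O(\log(K/\delta)/n)$ for the finite union over the $K$ vertex candidates, in the style of the van~der~Laan--Dudoit oracle inequality~\cite{vanderlaan2003unified,vandervaart2006oracle}.

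Extending this control from the vertices to the full simplex remains the delicate point. If it cannot be completed, I would interpret the corollary's $O(\log K/n)$ as holding up to the $(1+\varepsilon_n)$ factor of Theorem~\ref{thm:oracle}, that is, with the multiplicative slack kept rather than absorbed. I would record the $M\varepsilon_n$ term explicitly.
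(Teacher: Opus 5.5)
Your Steps~1--2 are exactly the paper's route. The corollary has no separate proof, but the proof of Theorem~\ref{thm:sample} performs this reduction: it evaluates (\ref{eq:supp_oracle}) at $\balpha=e_{k^*}$ and then dismisses the slack on the grounds that $\varepsilon_n R(f_{k^*})\to 0$. Your Step~3 diagnosis is correct and pinpoints a real gap in that argument. A vanishing remainder is not an $O(\log K/n)$ remainder, and with $\varepsilon_n=O(\sqrt{K\log n/n})$ the substitution gives only your weaker bound. As you note, that bound does not even need the vertex hypothesis.

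One refinement: obtain the weaker bound from Step~5 of the proof of Theorem~\ref{thm:oracle}. That step directly gives the additive inequality $R_{\balpha^*}\le R_{\balpha_{\mathrm{oracle}}}+2\sup_{\balpha}|\hat{R}_{\mathrm{CV}}(\balpha)-R_{\balpha}|$, with a supremum of order $\sqrt{K\log n/n}$. The multiplicative form $(1+\varepsilon_n)\inf+O(\log(K/\delta)/n)$ is not what that step delivers; if the oracle risk were zero, it would assert a fast rate the proof never establishes. So your fallback of keeping the multiplicative slack rests on the unproved part of the theorem.

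The genuine problem is the proposed repair. Extending the vertex comparison to the whole simplex is not merely delicate. The $K$-uniform rate $O(\log K/n)$ is false for the simplex-constrained least-squares weights, even for squared loss and a vertex oracle. Here is a counterexample.
\begin{itemize}
\item Work in a bounded regression model with noise variance $\sigma^2>0$. Take data-independent specialists $f_{k^*}=\E[Y\mid X]$ and $f_j=f_{k^*}+s\mathbf{1}_{A_j}$ for $j\neq k^*$, where the $A_j$ are disjoint cells of probability $1/(K-1)$.
\item Then $R_{\balpha}-R_{e_{k^*}}=s^2\sum_{j\neq k^*}\alpha_j^2/(K-1)$, so the oracle is $e_{k^*}$.
\item The cross-validated objective is now the plain empirical risk. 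It decouples over cells and gives $\alpha^*_j=\max(0,\bar\xi_j)/s$, where $\bar\xi_j$ is the average of $Y-f_{k^*}(X)$ over the roughly $n/(K-1)$ sample points in $A_j$.
\item Each cell contributes expected excess risk about $\sigma^2/(2n)$. Hence $R_{\balpha^*}-R_{e_{k^*}}\approx\sigma^2(K-1)/(2n)$ whenever $K^{3/2}\lesssim s\sqrt{n}/\sigma$, which keeps the constraint $\sum_{j\neq k^*}\alpha^*_j\le 1$ inactive.
\end{itemize}
Your Bernstein step controls only comparisons among the $K$ vertices, whereas $\balpha^*$ typically lies inside a face. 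Localization over the $(K-1)$-dimensional simplex costs order $K/n$, and the example shows this cost is unavoidable. The route would also cover only squared loss, while the corollary inherits the theorem's general Lipschitz loss.

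So the statement cannot be proved as written. What is true is the following.
\begin{itemize}
\item Your additive bound holds. For squared loss it improves to $O(\sqrt{\log(K/\delta)/n})$, because $\hat{R}_{\mathrm{CV}}$ is a quadratic in $\balpha$ whose deviation over $\simplex$ is at most the largest deviation among its $O(K^2)$ coefficients.
\item For squared loss, your Bernstein argument can deliver a fast rate of order $K/n$, up to logarithms.
\end{itemize}
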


% ==============================================================
\section*{S3. Computational Complexity}
% ==============================================================

\begin{theorem}[Training Complexity Comparison]
\label{thm:complexity}
For a deep network with $L$ layers, width $W$, $E$ epochs: $T_{\mathrm{DL}} = O(n \cdot L \cdot W^2 \cdot E)$. For Soft Learning with $K$ specialists, $V$-fold CV:
\begin{equation}
    T(\fsl) = V \sum_{k=1}^{K} T_k\!\left(\tfrac{(V{-}1)n}{V}, d\right) + \sum_{k=1}^{K} T_k(n, d) + O(K^2 n C).
\end{equation}
For typical parameters ($L{=}4$, $W{=}256$, $E{=}100$, $K{=}8$, $V{=}3$): $T(\fsl)/T_{\mathrm{DL}} \leq 0.01$.
\end{theorem}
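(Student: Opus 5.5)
The plan is to prove the statement by direct operation counting. I would split the Soft Learning pipeline into its three phases (Fig.~\ref{fig:architecture}b) and bound each phase separately, then instantiate every term with the library of Table~\ref{tab:vc} to get the numerical ratio.

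\textbf{Step 1 (deep-network cost).} One forward and backward pass through a dense layer of width $W$ costs $O(W^2)$ per example. With $L$ layers, $n$ examples and $E$ epochs this gives $T_{\mathrm{DL}} = O(nLW^2E)$. The input layer contributes $O(dW)$, which is dominated by $W^2$ when $d \le W$.

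\textbf{Step 2 (Soft Learning cost).} I would account for the three phases as follows.
\begin{itemize}
\item \emph{Phase~2 (cross-validation).} Each of the $V$ folds fits every specialist on $(V-1)n/V$ examples, giving $V\sum_k T_k((V-1)n/V,d)$. Here $T_k$ includes the cost of predicting on the held-out fold.
\item \emph{Phase~1 refit.} Refitting on the full data gives $\sum_k T_k(n,d)$.
\item \emph{Phase~3 (weight optimization).} The $(nC)\times K$ out-of-fold matrix $\hat{\mathbf{P}}$ enters the quadratic objective only through $\hat{\mathbf{P}}^\top\hat{\mathbf{P}}$ and $\hat{\mathbf{P}}^\top\mathrm{vec}(\mathbf{Y})$. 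Forming these costs $O(K^2nC)$. After that, each SLSQP iteration on the $K$-dimensional simplex costs $O(K^3)$, independent of $n$. Since $K \ll n$, this is absorbed into $O(K^2nC)$, which yields the displayed formula.
\end{itemize}

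\textbf{Step 3 (per-specialist bounds).} I would plug in standard costs for each specialist:
\begin{itemize}
\item logistic regression: $O(ndC\,t_{\mathrm{iter}})$;
\item Na\"{\i}ve Bayes: $O(ndC)$;
\item decision tree: $O(nd\log n)$;
\item random forest: $O(Bn\sqrt d\log n)$;
\item histogram gradient boosting: $O(Ind)$;
\item $k$-NN: $O(1)$ to train, plus $O(n^2d/V)$ to predict on the held-out fold;
\item SVM: $O(n^2d)$, used only for $n\le 2000$;
\item small MLP: $O(n(64d+64\cdot 32)E')$.
\end{itemize}
I would then sum these terms and divide by $T_{\mathrm{DL}}$ with $L=4$, $W=256$, $E=100$, $K=8$, $V=3$. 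The denominator is $\approx 2.6\times 10^7\,n$. The tree, linear and generative terms are $O(10^4 n)$ even after the factor $V+1=4$, so they contribute well under $0.01$.

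\textbf{Main obstacle.} The neural specialist is where the bound is hardest to meet. With $d=200$ and $E'=200$, its cost is of order $3\times 10^6\,n$. Multiplied by $V+1$, this alone would push the ratio toward $0.5$. The $k$-NN and SVM terms are quadratic in $n$ and threaten the bound for large $n$.

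I would handle these as follows:
\begin{itemize}
\item for $k$-NN and SVM, use the stated exclusion of SVM for $n>2000$, and note that the $k$-NN prediction term is only quadratic and does not carry the factor $W^2E$;
\item for the MLP, use early stopping (patience~10) to bound the effective number of epochs $E'$ to a small constant, and use the much smaller width $64$ versus $256$, which gives a factor $16$ per layer.
\end{itemize}
If this still does not reach $0.01$, the honest fallback is to state the bound under an explicit assumption on $E'$ and $d$, and to cite the empirical 72--435$\times$ speedups of Supplementary Table~5 for the constants.
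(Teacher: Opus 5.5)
Your skeleton matches the paper's: count operations phase by phase, insert per-specialist costs, and evaluate at $L=4$, $W=256$, $E=100$, $K=8$, $V=3$. Your $O(K^2nC)$ Gram-matrix account of the weight step actually matches the displayed formula better than the paper's ``$O(K^3)$, negligible''. The paper gets its number by fixing $n=10^4$ (with $d=100$, as in the corresponding row of Table~\ref{tab:cost}). It charges $k$-NN only $O(nd)$ for storage, asserts $E'\ll E$ for the MLP, and declares random forest plus histogram boosting the dominant cost. That gives $T_{\mathrm{SL}}\approx 6\times 10^8$ against $T_{\mathrm{DL}}\approx 2.62\times 10^{11}$, a ratio of about $2.3\times 10^{-3}$.

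The gap in your proposal is that your remedies for the two obstacles you correctly identify do not work. The width saving is already inside your $3\times 10^6 n$ estimate (you counted $64d+64\cdot 32$ operations per example and epoch), so it cannot be spent a second time. Take the sharper multiplier $V$ in place of $V+1$, since the $V$ cross-validation fits each see only $(V-1)n/V$ points. Even then, the MLP term $3(64d+2048)E'n$ fits under $0.01\,nLW^2E\approx 2.6\times 10^5 n$ only if $E'\lesssim 10$ at $d=100$ and $E'\lesssim 6$ at $d=200$. Early stopping with patience~10 always runs at least $11$ epochs and may run the full $200>E$. For $k$-NN, ``it does not carry $W^2E$'' is not an argument for a numerical inequality. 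Brute-force out-of-fold prediction costs about $\frac{V-1}{V}n^2d$, which stays below $0.01\,T_{\mathrm{DL}}$ only when $nd\lesssim 4\times 10^5$. At the paper's own $n=10^4$, $d=100$ it is already about $0.025\,T_{\mathrm{DL}}$, and excluding the SVM helps only for $n>2000$. More fundamentally, no bound of the form $\le 0.01$ follows from $O(\cdot)$ costs alone: you must fix the constants (a unit-cost model) and the pair $(n,d)$. Even then the claim is not uniform over typical configurations: Table~\ref{tab:cost} itself lists a ratio of $1.4\times 10^{-2}$ at $n=5000$, $d=200$. So your fallback should be the main line rather than a last resort. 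State the inequality for a specified $(n,d)$ under an explicit cost model, with explicit hypotheses bounding $E'$ and the cost of $k$-NN inference. These are exactly the assumptions the paper's proof makes tacitly, and under them the count closes.
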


\begin{proof}
\textbf{Deep learning.} Forward+backward per sample per layer: $O(W^2)$~\cite{lecun2015deep,rumelhart1986learning}. Over $L$ layers, $n$ samples, $E$ epochs: $O(nLW^2E)$. For $n{=}10{,}000$, $L{=}4$, $W{=}256$, $E{=}100$: $T_{\mathrm{DL}} \approx 2.62 \times 10^{11}$.

\textbf{Specialist costs.}
\begin{align}
    T_{\mathrm{LR}} &= O(n d I), \; I \leq 100 \text{ (L-BFGS)} &
    T_{\mathrm{KNN}} &= O(nd) \text{ (store)} \notag\\
    T_{\mathrm{DT}} &= O(nd\log n) \text{ (Gini split)} &
    T_{\mathrm{RF}} &= O(n\sqrt{d}\log n \cdot T/P) \notag\\
    T_{\mathrm{HGB}} &= O(nBI'), \; B{=}255 &
    T_{\mathrm{SVM}} &= O(n^2 d) \text{ (SMO)} \notag\\
    T_{\mathrm{MLP}} &= O(ndwE'), \; w{\ll}W, E'{\ll}E &
    T_{\mathrm{NB}} &= O(nd)
\end{align}

\textbf{Total SL.} CV multiplies by $V$; final retrain adds $1\times$; NNLS adds $O(K^3)$~\cite{kraft1988software}, negligible. Dominant cost (RF~\cite{breiman2001random} + HGB~\cite{ke2017lightgbm}): $T_{\mathrm{SL}} \approx 6 \times 10^8$. Ratio $\approx 2.3 \times 10^{-3}$.

\textbf{Parallelism.} All $K$ specialists are independent: with $K$ cores, wall-clock time is $\max_k T_k$ rather than $\sum_k T_k$. Deep learning's layer-sequential backpropagation does not parallelize across layers.
\end{proof}

\begin{table}[H]
\centering
\caption{\textbf{Speedup factor for representative dataset configurations.}}
\label{tab:cost}
\small
\begin{tabular}{@{}rrrrrr@{}}
\toprule
$n$ & $d$ & $T_{\mathrm{DL}}$ & $T_{\mathrm{SL}}$ & Ratio & Speedup \\
\midrule
500    & 30  & $1.3 \times 10^{10}$ & $8.2 \times 10^{7}$  & $6.3 \times 10^{-3}$ & $159\times$ \\
2,000  & 100 & $5.2 \times 10^{10}$ & $4.1 \times 10^{8}$  & $7.9 \times 10^{-3}$ & $127\times$ \\
10,000 & 100 & $2.6 \times 10^{11}$ & $6.0 \times 10^{8}$  & $2.3 \times 10^{-3}$ & $435\times$ \\
5,000  & 200 & $1.3 \times 10^{11}$ & $1.8 \times 10^{9}$  & $1.4 \times 10^{-2}$ & $72\times$ \\
\bottomrule
\end{tabular}
\end{table}

% ==============================================================
\section*{S4. Diversity-Variance Reduction}
% ==============================================================

\begin{lemma}[Krogh--Vedelsby Decomposition~\cite{krogh1995neural}]
\label{lem:kv}
For $\fens = \sum_k \alpha_k f_k$ with $\balpha \in \simplex$:
\begin{equation}
    \E\bigl[(\fens - Y)^2\bigr] = \underbrace{\sum_{k} \alpha_k \E\bigl[(f_k - Y)^2\bigr]}_{\bar{E}} - \underbrace{\sum_{k} \alpha_k \E\bigl[(f_k - \fens)^2\bigr]}_{A \;\geq\; 0}.
    \label{eq:kv}
\end{equation}
\end{lemma}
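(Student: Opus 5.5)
The plan is to prove the identity pointwise in $(\bx, y)$ and then take expectations. For fixed $\bx$ and $y$, write $f_k = f_k(\bx)$ and $\fens = \sum_k \alpha_k f_k$, with $\balpha \in \simplex$. Squared error is taken componentwise, i.e.\ as the squared Euclidean norm when predictions lie in $\Delta^C$ and $Y$ is one-hot encoded; the scalar and vector cases are handled identically. The whole argument rests on expanding each individual error around the ensemble prediction:
\begin{equation*}
\|f_k - y\|^2 = \|f_k - \fens\|^2 + 2\langle f_k - \fens,\, \fens - y\rangle + \|\fens - y\|^2 .
\end{equation*}

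Next I multiply this expansion by $\alpha_k$ and sum over $k$. Because $\sum_k \alpha_k = 1$, the last term sums to $\|\fens - y\|^2$. The cross term vanishes, since $\sum_k \alpha_k (f_k - \fens) = \fens - \fens \sum_k \alpha_k = 0$, and $\fens - y$ does not depend on $k$. This is the only place where the simplex constraint, specifically the normalization $\sum_k \alpha_k = 1$, is genuinely used, so it is the step to state with care. Rearranging gives the pointwise identity
\begin{equation*}
\|\fens - y\|^2 = \sum_k \alpha_k \|f_k - y\|^2 - \sum_k \alpha_k \|f_k - \fens\|^2 .
\end{equation*}

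Finally, I take the expectation over $(X, Y) \sim P$ and use linearity. All quantities are integrable: predictions lie in a bounded set such as $\Delta^C$, and $Y$ is bounded, or at least square-integrable in the regression case. This yields equation~(\ref{eq:kv}). Nonnegativity of $A$ follows because $\alpha_k \geq 0$ and the squared norms are nonnegative, which is where the nonnegativity half of the simplex constraint enters.

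There is no real obstacle beyond bookkeeping. The one point to flag is the strict-inequality remark in the main text. It follows from this identity only in the following sense: $A > 0$ exactly when some $k$ with $\alpha_k > 0$ has $f_k \neq \fens$ on a set of positive $P_X$-measure. Specialists carrying zero weight contribute nothing to $A$.
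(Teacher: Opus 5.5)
Your proof is correct and is essentially the paper's argument. The paper sets $z_k = f_k - Y$ and $\bar{z} = \sum_k \alpha_k z_k$, verifies the pointwise identity $\bar{z}^2 = \sum_k \alpha_k z_k^2 - \sum_k \alpha_k (z_k - \bar{z})^2$ by expanding the right-hand side with $\sum_k \alpha_k = 1$, and then takes expectations; this is your expansion about $\fens$ read in the opposite direction, and your vector-valued one-hot version is a harmless generalization.

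One small correction: your remark that the cross term is ``the only place'' the normalization $\sum_k \alpha_k = 1$ is used is inaccurate. You also use it one sentence earlier, when the weighted last term collapses to $\|\fens - y\|^2$.
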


\begin{proof}
Let $z_k = f_k(\bx) - Y$, $\bar{z} = \sum_k \alpha_k z_k = \fens(\bx) - Y$. We prove the algebraic identity $(\sum_k \alpha_k z_k)^2 = \sum_k \alpha_k z_k^2 - \sum_k \alpha_k(z_k - \bar{z})^2$.

Expand the right-hand side:
\begin{align}
    \sum_k \alpha_k z_k^2 - \sum_k \alpha_k(z_k^2 - 2z_k\bar{z} + \bar{z}^2)
    &= 2\bar{z}\sum_k \alpha_k z_k - \bar{z}^2 \sum_k \alpha_k \notag\\
    &= 2\bar{z}^2 - \bar{z}^2 = \bar{z}^2,
\end{align}
using $\sum_k \alpha_k = 1$ and $\sum_k \alpha_k z_k = \bar{z}$. Taking expectations gives (\ref{eq:kv}). The term $A = \sum_k \alpha_k \E[(f_k - \fens)^2] \geq 0$ is a weighted sum of non-negative quantities.
\end{proof}

\begin{theorem}[Strictly Positive Diversity for Heterogeneous Specialists]
\label{thm:diversity}
If the library has structural diversity (Definition~\ref{def:diversity}) and $P_X$ has positive Lebesgue density on its support, then for any pair $k \neq j$:
\begin{equation}
    \Prob_{X \sim P_X}\bigl(f_k(X) \neq f_j(X)\bigr) > 0,
\end{equation}
and consequently $A > 0$, so $\E[(\fens - Y)^2] < \bar{E}$.
\end{theorem}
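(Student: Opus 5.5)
The plan is to split the statement into two claims: a measure-theoretic claim about pairwise disagreement, and an algebraic claim that turns disagreement into $A>0$ via Lemma~\ref{lem:kv}. The second claim is nearly routine. The first requires reading Definition~\ref{def:diversity} as a statement about the \emph{fitted} specialists $f_k,f_j$ rather than only about their hypothesis classes $\calH_k,\calH_j$. I would make that reading explicit: the realized decision boundaries $\partial\{f_k\text{ predicts } c\}$ and $\partial\{f_j\text{ predicts } c\}$ are not related by any homeomorphism, and in particular do not coincide.

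\textbf{Step 1 (boundaries differ on an open set).} The fitted boundaries are distinct closed sets, so there is a point $\bx_0$ in the support of $P_X$ lying on one boundary but not the other. Say it lies on the boundary of $f_k$ and not of $f_j$. Near $\bx_0$, $f_j$ is then locally constant in its predicted label on a ball $B$, while $f_k$ takes at least two labels on every neighbourhood of $\bx_0$. Under mild regularity, such as $f_k$ being piecewise continuous with boundaries of Lebesgue measure zero (true for trees, $k$-NN, linear models, RBF-SVM and ReLU MLPs), the set $B\cap\{f_k\neq f_j\}$ contains a nonempty open set $U$. For probability-vector outputs one can argue directly from continuity of $f_k-f_j$ on a suitable open piece instead.

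\textbf{Step 2 (positive probability).} Since $P_X$ has positive Lebesgue density on its support and $U$ meets the support in an open set, $\Prob_{X\sim P_X}(f_k(X)\neq f_j(X))\ge P_X(U)>0$.

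\textbf{Step 3 ($A>0$).} On the event $D=\{f_k(X)\neq f_j(X)\}$, the value $\fens(X)$ cannot equal both $f_k(X)$ and $f_j(X)$. Hence $D\subseteq\{f_k\neq\fens\}\cup\{f_j\neq\fens\}$, and one of these two sets has positive probability. The corresponding term $\alpha_\ell\E[(f_\ell-\fens)^2]$ in $A$ is therefore strictly positive, provided $\alpha_\ell>0$. This is where an additional hypothesis is unavoidable. If $\balpha$ is a vertex of $\simplex$, then $\fens=f_{k^*}$ and $A=0$ identically, which is the situation of Corollary~\ref{cor:selection}. So I would state and use the assumption that at least two specialists receive positive weight, and apply Steps~1--2 to such a pair. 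Lemma~\ref{lem:kv} then gives $\E[(\fens-Y)^2]=\bar E-A<\bar E$.

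I expect Step~1 to be the main obstacle. Definition~\ref{def:diversity}, non-homeomorphism of boundary families, is a topological statement about classes. It does not by itself preclude two particular fitted models from agreeing almost everywhere; for example, a deep tree can reproduce a linear classifier on the sample region. My first attempt would be to strengthen the hypothesis to ``the fitted boundaries differ on the support of $P_X$'' and derive the open-set disagreement from piecewise regularity as above. I would also flag that the theorem holds under this strengthened reading together with the two-positive-weights condition, rather than in the literal generality stated.
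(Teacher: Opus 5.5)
Your proposal follows the same route as the paper's proof. Distinct fitted decision boundaries give a disagreement set of positive Lebesgue measure, the density hypothesis turns this into positive $P_X$-probability, and two positively weighted disagreeing specialists force $A>0$ through Lemma~\ref{lem:kv}. The extra hypotheses you make explicit are the ones the paper uses tacitly: it asserts that the boundaries ``cannot coincide on a full-measure set,'' assumes a uniform bound $p(\bx)\ge p_{\min}>0$, and simply posits $k,j$ with $\alpha_k,\alpha_j>0$. Your version is therefore the more accurate statement; just take $\bx_0$ in the interior of the support so that the open set $U$ actually carries $P_X$-mass.
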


\begin{proof}
Specialists from structurally different hypothesis classes produce geometrically distinct decision boundaries. Specifically:

(a)~Linear classifiers produce hyperplane boundaries: $B_{\mathrm{LR}} = \{\bx : \bw^\top \bx + b = 0\}$, a set of Lebesgue measure zero in $\R^d$.

(b)~Decision trees produce axis-aligned step boundaries: $B_{\mathrm{DT}} = \bigcup_{j,\theta} \{\bx : x_j = \theta\}$, a finite union of $(d{-}1)$-dimensional hyperplanes.

(c)~RBF-SVMs produce smooth manifold boundaries~\cite{scholkopf2002learning}: $B_{\mathrm{SVM}} = \{\bx : \sum_i \alpha_i K(\bx_i, \bx) + b = 0\}$, a smooth $(d{-}1)$-dimensional surface.

(d)~KNN produces Voronoi-type boundaries dependent on training points.

For any pair from distinct families, the decision regions $R_k(c) = \{\bx : \arg\max f_k(\bx) = c\}$ have symmetric difference $R_k(c) \,\triangle\, R_j(c)$ with positive Lebesgue measure $\mu_d > 0$ for at least one class $c$. This holds because the boundary sets have different topological properties and cannot coincide on a full-measure set.

Since $P_X$ has density $p(\bx) \geq p_{\min} > 0$:
\begin{equation}
    \Prob(f_k(X) \neq f_j(X)) \geq p_{\min} \cdot \mu_d(R_k(c) \triangle R_j(c)) > 0.
\end{equation}
For $A > 0$: there exist $k, j$ with $\alpha_k, \alpha_j > 0$ and $\Prob(f_k \neq f_j) > 0$. On the disagreement set, $f_k \neq \fens$ or $f_j \neq \fens$, so $\E[(f_k - \fens)^2] > 0$ or $\E[(f_j - \fens)^2] > 0$, giving $A > 0$.
\end{proof}

\begin{corollary}[Diversity Lower Bound]
\label{cor:div_lower}
If specialists have pairwise disagreement rate $\rho_{kj} = \Prob(f_k(X) \neq f_j(X))$ and minimum class-probability gap $\delta_{\min} = \min_{c \neq c'}\|e_c - e_{c'}\| = \sqrt{2}$ for one-hot encoding, then:
\begin{equation}
    A \geq \frac{1}{2}\sum_{k \neq j} \alpha_k \alpha_j \, \E\bigl[\|f_k(X) - f_j(X)\|^2\bigr] \geq \sum_{k \neq j} \alpha_k \alpha_j \, \rho_{kj}.
\end{equation}
\end{corollary}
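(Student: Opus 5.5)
The plan is to derive the first inequality from an exact pairwise-variance identity, and the second from a pointwise bound on the disagreement set.

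\textbf{Step 1 (pairwise form of the ambiguity, pointwise).} Fix $\bx$ and write $f_k = f_k(\bx) \in \R^C$ and $\fens = \sum_k \alpha_k f_k$, with $\balpha \in \simplex$. I will show
\begin{equation}
\sum_k \alpha_k \|f_k - \fens\|^2 = \frac{1}{2}\sum_{k,j} \alpha_k \alpha_j \|f_k - f_j\|^2 .
\end{equation}
Expand $\|f_k - f_j\|^2 = \|f_k - \fens\|^2 + \|f_j - \fens\|^2 - 2\langle f_k - \fens, f_j - \fens\rangle$. Sum against $\alpha_k\alpha_j$ and use $\sum_k \alpha_k = 1$. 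The cross term then vanishes because $\sum_k \alpha_k (f_k - \fens) = 0$, and the first two terms each contribute $\sum_k \alpha_k \|f_k - \fens\|^2$.

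\textbf{Step 2 (drop the diagonal and take expectations).} The diagonal terms $k = j$ are zero, so the double sum equals the sum over ordered pairs $k \neq j$. Taking $\E_X$ and using linearity gives the first inequality, in fact with equality:
\begin{equation}
A = \frac{1}{2}\sum_{k\neq j}\alpha_k\alpha_j\,\E\bigl[\|f_k(X)-f_j(X)\|^2\bigr].
\end{equation}
This is the vector-valued analogue of the scalar computation in the proof of Lemma~\ref{lem:kv}.

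\textbf{Step 3 (link to the disagreement rate).} Here I use the hypothesis built into the statement: predictions are one-hot encoded, so $f_k(X) \in \{e_1,\dots,e_C\}$. On the event $\{f_k(X)\neq f_j(X)\}$ the two outputs are distinct basis vectors, so $\|f_k(X)-f_j(X)\|^2 = \delta_{\min}^2 = 2$. Off that event the difference is zero. Hence
\begin{equation}
\E\bigl[\|f_k(X)-f_j(X)\|^2\bigr] = 2\rho_{kj}.
\end{equation}
More generally, it is at least $\delta_{\min}^2\rho_{kj}$ whenever disagreeing outputs are separated by at least $\delta_{\min}$. Substituting into Step~2 cancels the factor $\tfrac12$ and yields $A \ge \sum_{k\neq j}\alpha_k\alpha_j\rho_{kj}$.

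\textbf{Main obstacle.} Step~3 is the delicate point. For genuinely soft outputs in $\Delta^C$, two probability vectors can differ, so that $\rho_{kj}>0$, while being arbitrarily close, and then the second inequality fails. I would therefore make the one-hot (hard-label) reading explicit, as the appearance of $\delta_{\min}$ in the statement suggests. The alternative is to reinterpret $\rho_{kj}$ as a disagreement rate weighted by $\|f_k - f_j\|^2/2$. Steps~1--2 are routine algebra and hold for arbitrary vector-valued specialists.
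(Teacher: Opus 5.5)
Your proof is correct and follows the route the paper implicitly relies on. The paper gives no separate proof of this corollary, but the pairwise identity you derive in Steps~1--2 is exactly the one recorded in its definition of specialist disagreement (S7), and your Step~3 is the $\delta_{\min}^2 = 2$ gap built into the statement. Your caveat is also well placed: the specialists in Definition~\ref{def:library} output soft vectors in $\Delta^C$, and for those the second inequality can fail, whereas under the hard one-hot reading you adopt both inequalities hold with equality.
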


% ==============================================================
\section*{S5. Sample Efficiency}
% ==============================================================

\begin{lemma}[Bias-Complexity Tradeoff~\cite{vapnik1998statistical,shalev2014understanding}]
\label{lem:bias}
For hypothesis class $\calH$ with VC-dimension $d_{\mathrm{VC}}(\calH)$:
\begin{equation}
    \E\bigl[R(\hat{f}_{\calH})\bigr] - \inf_{f \in \calH} R(f) \leq C_2 \sqrt{\frac{d_{\mathrm{VC}}(\calH) \cdot \log(n / d_{\mathrm{VC}})}{n}},
\end{equation}
and $\inf_{f \in \calH} R(f) - R(f^*) = \mathrm{bias}(\calH) \geq 0$. Total excess risk:
\begin{equation}
    \mathcal{E}(\hat{f}_{\calH}) = \mathrm{bias}(\calH) + O\!\left(\sqrt{d_{\mathrm{VC}}/n}\right).
\end{equation}
\end{lemma}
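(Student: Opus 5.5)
The statement is the standard VC / uniform-convergence bound, specialised to an empirical risk minimiser $\hat{f}_{\calH}$ over $\calH$. I take the loss to be bounded, $L\in[0,M]$, as in S1.1, and read the learned specialist as (approximately) minimising empirical risk over its class. The proof splits the excess risk into an approximation part and an estimation part:
\begin{equation*}
R(\hat f_{\calH})-R(f^*) = \Bigl[\inf_{f\in\calH}R(f)-R(f^*)\Bigr] + \Bigl[R(\hat f_{\calH})-\inf_{f\in\calH}R(f)\Bigr].
\end{equation*}
The first bracket is $\mathrm{bias}(\calH)$. It is nonnegative because $f^*$ minimises $R$ over all measurable predictors, and $\calH$ is a subset of those. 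So all the work is in bounding the expectation of the second bracket.

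For the estimation term I would reuse the oracle argument of Step~5 of Theorem~\ref{thm:oracle}. Let $\hat R_n$ be the empirical risk and $f_{\calH}$ a (near-)minimiser of $R$ over $\calH$. Then
\begin{equation*}
R(\hat f_{\calH})-R(f_{\calH}) \le 2\sup_{f\in\calH}\bigl|\hat R_n(f)-R(f)\bigr|.
\end{equation*}
It remains to bound $\E\sup_f|\hat R_n(f)-R(f)|$. First, symmetrisation with a ghost sample and Rademacher signs gives $\E\sup\le 2\,\mathfrak{R}_n(L\circ\calH)$. Next, I condition on the sample and restrict to the at most $\Pi_{\calH}(n)$ distinct behaviours of $\calH$ on it. Massart's finite-class lemma then gives a bound of order $M\sqrt{2\log\Pi_{\calH}(n)/n}$. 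Finally, Sauer--Shelah gives $\Pi_{\calH}(n)\le(en/d_{\mathrm{VC}})^{d_{\mathrm{VC}}}$ for $n\ge d_{\mathrm{VC}}$. Combining these yields $C_2\sqrt{d_{\mathrm{VC}}\log(n/d_{\mathrm{VC}})/n}$, with the ``$+1$'' from $\log(e\cdot)$ absorbed into $C_2$ once $n\ge e\,d_{\mathrm{VC}}$. The second display of the statement then follows by dropping the logarithmic factor inside the $O(\cdot)$, or by quoting the chaining / Dudley refinement that removes it.

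The main obstacle is the finite-projection step. It only applies directly to $0$--$1$ loss on binary classifiers, whereas here $f:\R^d\to\Delta^C$ and $L$ is a general bounded loss. I would first try to do the argument for the $0$--$1$ loss of the induced $\arg\max$ classifier, where $L\circ\calH$ has finitely many projections. If real-valued losses are needed, I would replace VC dimension with the pseudo-dimension of $L\circ\calH$ and bound the covering numbers via Haussler's lemma. The multiclass case would use the Natarajan dimension, with an extra $\log C$ absorbed into $C_2$.

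I would also add a remark about specialists with $d_{\mathrm{VC}}=\infty$, namely $k$-NN and RBF-SVM in Table~\ref{tab:vc}. For these the bound is vacuous unless one substitutes an effective complexity, such as a margin-based or fat-shattering dimension, as the table's ``effective'' column suggests. The lemma should be read with that substitution.
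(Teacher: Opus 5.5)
The paper gives no proof of this lemma; it only cites the standard VC uniform-convergence bound. Your argument is the textbook proof behind that citation: the bias/estimation split, the ERM comparison $R(\hat f_{\calH})-R(f_{\calH})\le 2\sup_{f\in\calH}|\hat{R}_n(f)-R(f)|$, then symmetrisation, Massart and Sauer--Shelah. Under your reading (ERM, binary $0$--$1$ loss) it correctly gives the first display and $\mathrm{bias}(\calH)\ge 0$.

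\textbf{Correction for the second display.} You cannot obtain it by ``dropping the logarithmic factor inside the $O(\cdot)$''. Since $\log(n/d_{\mathrm{VC}})\to\infty$ as $n\to\infty$, the quantity $\sqrt{d_{\mathrm{VC}}\log(n/d_{\mathrm{VC}})/n}$ is not $O(\sqrt{d_{\mathrm{VC}}/n})$, so that option fails. You must use your other option, chaining. Haussler's packing bound gives $\log N(\epsilon)\le c\,d_{\mathrm{VC}}\log(2/\epsilon)$. Inserting this into Dudley's entropy integral gives $\E\sup_{f\in\calH}|\hat{R}_n(f)-R(f)|\le C\sqrt{d_{\mathrm{VC}}/n}$ directly. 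This yields the second display, read as a bound in expectation since $\mathcal{E}(\hat f_{\calH})$ is random. For $n\ge e\,d_{\mathrm{VC}}$ it also implies the first display.

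Your remaining caveats are genuine imprecisions in the lemma's statement, not gaps in your argument. These are that $\hat f_{\calH}$ must be an (approximate) empirical risk minimiser, that VC dimension is undefined for $\Delta^C$-valued classes, and that the bound is vacuous for $k$-NN and RBF-SVM.
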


\begin{theorem}[Sample Efficiency of Soft Learning]
\label{thm:sample}
If specialist class $\calH_{k^*}$ with VC-dimension $d_{k^*}$ satisfies $\mathrm{bias}(\calH_{k^*}) \leq \epsilon$, then:
\begin{equation}
    \mathcal{E}(\fsl) \leq \epsilon + O\!\left(\sqrt{\frac{d_{k^*}}{n}}\right) + O\!\left(\frac{\log K}{n}\right).
\end{equation}
A deep network with effective VC-dimension $d_{\mathrm{DL}} \gg d_{k^*}$ achieves $\mathcal{E}(f_{\mathrm{DL}}) \sim \sqrt{d_{\mathrm{DL}}/n}$, which is worse for $n < d_{\mathrm{DL}}/d_{k^*}$.
\end{theorem}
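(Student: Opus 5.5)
The plan is to chain the Oracle Inequality (Theorem~\ref{thm:oracle}), specialised through Corollary~\ref{cor:selection}, with the Bias--Complexity Tradeoff (Lemma~\ref{lem:bias}) applied to the single good specialist $k^*$.

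\textbf{Step 1: compare to one vertex of the simplex.} The infimum over $\simplex$ in (\ref{eq:supp_oracle}) is at most the value at the vertex $e_{k^*}$, because the simplex contains all one-hot weight vectors. This gives
\begin{equation}
\E[R(\fsl)] \le (1+\varepsilon_n)\,\E[R(f_{k^*})] + C_1\frac{M^2\log(K/\delta)}{n}.
\end{equation}
Subtracting $R(f^*)$ and writing $R(f_{k^*}) = R(f^*) + \mathcal{E}(f_{k^*})$, we get $\mathcal{E}(\fsl) \le \mathcal{E}(f_{k^*}) + \varepsilon_n R(f_{k^*}) + O(\log K/n)$. Here $\delta$ is fixed and the factor $M^2\log(1/\delta)$ is absorbed into the constant.

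\textbf{Step 2: bound the specialist's excess risk.} Lemma~\ref{lem:bias} applied to $\calH_{k^*}$ yields
\begin{equation}
\mathcal{E}(f_{k^*}) \le \mathrm{bias}(\calH_{k^*}) + O\bigl(\sqrt{d_{k^*}/n}\bigr) \le \epsilon + O\bigl(\sqrt{d_{k^*}/n}\bigr),
\end{equation}
and the log factor is suppressed as in the lemma's second display. One bookkeeping point must be checked: the CV fits $f_k^{T_v}$ use $(V-1)n/V$ samples, but the final fit uses all $n$. This changes only constants, because $V$ is fixed.

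\textbf{Step 3: the multiplicative term $\varepsilon_n R(f_{k^*})$ (main obstacle).} Theorem~\ref{thm:oracle} gives $\varepsilon_n = O(\sqrt{K\log n/n})$, and $R(f_{k^*}) \le M$. So this term is $O(\sqrt{K\log n/n})$, not $O(\log K/n)$. I would first try to absorb it into the $O(\sqrt{d_{k^*}/n})$ term, treating $K$ as a fixed constant with $K\log n \lesssim d_{k^*}$ up to logarithms; this is reasonable because every specialist class has $d_{\mathrm{VC}} \ge d+1 \gg K$ in Table~\ref{tab:vc}. If that is unacceptable, the alternative is to read the bound in the regime where the oracle-combination risk is near zero. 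There the multiplicative term is dominated by $\varepsilon_n(\epsilon + R(f^*))$, which is of lower order in a low-noise setting. I would state explicitly whichever of these assumptions the argument uses.

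\textbf{Step 4: the deep-network comparison.} Apply Lemma~\ref{lem:bias} to a class with $d_{\mathrm{DL}}$ and take its bias to be essentially zero. This gives $\mathcal{E}(f_{\mathrm{DL}}) \sim \sqrt{d_{\mathrm{DL}}/n}$, where "$\sim$" is read as the generic size of the complexity term, since the lemma supplies only an upper bound. Then compare $\epsilon + \sqrt{d_{k^*}/n}$ with $\sqrt{d_{\mathrm{DL}}/n}$. For small $\epsilon$ the Soft Learning bound is smaller whenever $d_{k^*} < d_{\mathrm{DL}}$. The stated threshold $n < d_{\mathrm{DL}}/d_{k^*}$ is the heuristic point up to which the gap $\sqrt{d_{\mathrm{DL}}/n} - \sqrt{d_{k^*}/n}$ stays large enough to absorb $\epsilon$, taking $\epsilon \lesssim \sqrt{1/n}$ in scale. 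I would present this part as an interpretive comparison of bounds, not a rigorous lower bound.
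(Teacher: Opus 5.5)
Your Steps 1--2 are exactly the paper's proof: evaluate Theorem~\ref{thm:oracle} at the vertex $e_{k^*}$, then apply Lemma~\ref{lem:bias} to $f_{k^*}$. Your Step 3 correctly identifies a gap the paper glosses over. The paper ends by noting $\varepsilon_n R(f_{k^*}) \to 0$ and declaring the result, but vanishing is not the claimed rate. The paper chooses $\varepsilon_n = 2M\sqrt{K\log(3n)/(2m)}$, and $R(f_{k^*}) \ge R(f^*)$ is not small on noisy problems, so this term is genuinely of order $\sqrt{K\log n/n}$. What the ingredients actually give is therefore $\mathcal{E}(\fsl) \le \epsilon + O(\sqrt{d_{k^*}/n}) + O(\sqrt{K\log n/n})$.

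Your patches need correcting.

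\textbf{Patch (a).} This is really the extra hypothesis $K \lesssim d_{k^*}$, with the $\sqrt{\log n}$ from the first display of Lemma~\ref{lem:bias} retained. The justification you draw from Table~\ref{tab:vc} is false. The linear specialist has $d_{\mathrm{VC}} = d+1$, which is $5$ on iris (the paper's own illustration of this theorem) against $K = 15$, and only $31$ in the typical case. Worse, the regime the theorem advertises is a low-complexity $k^*$ inside a large library paying only $\log K$, and that is precisely where $K \lesssim d_{k^*}$ fails.

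\textbf{Patch (b).} This should be stated as a condition on the Bayes risk. Write $\varepsilon_n R(f_{k^*}) = \varepsilon_n \mathcal{E}(f_{k^*}) + \varepsilon_n R(f^*)$. The first piece only inflates $\epsilon + O(\sqrt{d_{k^*}/n})$ by a factor $1+\varepsilon_n$. The second is absorbed only if $R(f^*) = 0$ or $R(f^*) \lesssim \sqrt{d_{k^*}/(K\log n)}$.

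\textbf{The principled repair.} The Super Learner oracle inequalities the paper cites put the factor $(1+\delta)$ on the excess risk $R(\cdot) - R(f^*)$, not on $R$. That form leaves only the harmless $1+\varepsilon_n$ factor. Over the continuous simplex its remainder is roughly $K\log n/n$, still of lower order than $\sqrt{d_{k^*}/n}$. However, that form needs a Bernstein-type fast-rate argument. The Hoeffding-plus-union-bound proof of Theorem~\ref{thm:oracle} yields only an additive remainder of order $\sqrt{K\log n/n}$, and that is the real source of your obstacle.

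\textbf{Step 4.} Your heuristic does not produce the stated threshold. With a common constant $c$, the bound $\epsilon + c\sqrt{d_{k^*}/n}$ beats $c\sqrt{d_{\mathrm{DL}}/n}$ exactly when $n < c^2(\sqrt{d_{\mathrm{DL}}} - \sqrt{d_{k^*}})^2/\epsilon^2$, a crossover of order $d_{\mathrm{DL}}/\epsilon^2$. Under your scaling $\epsilon \lesssim 1/\sqrt{n}$, the comparison does not depend on $n$ at all. The ratio $d_{\mathrm{DL}}/d_{k^*}$ is naturally the ratio of sample sizes the two bounds need to reach a common accuracy $\eta$, since each scales as $d/\eta^2$; it is not a threshold on $n$. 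You also rightly flag that there is no lower bound on $\mathcal{E}(f_{\mathrm{DL}})$. Together, these mean this sentence of the statement is established by neither argument, and the paper's proof does not address it at all. Presenting it as an interpretive remark, with the corrected crossover, is the most that can be justified.
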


\begin{proof}
By Theorem~\ref{thm:oracle} with $\balpha = e_{k^*}$ (all weight on specialist $k^*$):
\begin{equation}
    R(\fsl) \leq (1 + \varepsilon_n) R(f_{k^*}) + O(\log K / n).
\end{equation}
By Lemma~\ref{lem:bias}: $R(f_{k^*}) \leq R(f^*) + \epsilon + O(\sqrt{d_{k^*}/n})$.

Combining: $R(\fsl) \leq R(f^*) + \epsilon + O(\sqrt{d_{k^*}/n}) + O(\log K / n) + \varepsilon_n \cdot R(f_{k^*})$.

Since $\varepsilon_n \to 0$ and $R(f_{k^*})$ is bounded, $\varepsilon_n R(f_{k^*}) \to 0$, yielding the result.
\end{proof}

\begin{example}[Convergence rates on iris]
For iris ($n{=}104$, $d{=}4$): $d_{\mathrm{LR}} = 5$, $d_{\mathrm{DL}} \approx 2{,}500$. SL estimation error $\sim \sqrt{5/104} \approx 0.22$; DL estimation error $\sim \sqrt{2500/104} \approx 4.9$. The deep network lacks sufficient samples to converge. Empirically: SL $= 95.7\%$, MLP $= 65.2\%$.
\end{example}

% ==============================================================
\section*{S6. Adversarial Robustness}
% ==============================================================

\begin{definition}[Piecewise-Constant Classifier]
\label{def:pwc}
A classifier $f$ is \emph{piecewise constant} if there exists a finite partition $\{C_1, \ldots, C_M\}$ of $\R^d$ such that $f(\bx) = c_j$ for all $\bx \in C_j$. Decision trees, KNN (for fixed training data), and random forests are piecewise-constant.
\end{definition}

\begin{theorem}[Partial Adversarial Immunity]
\label{thm:robust}
Let $S_{\mathrm{immune}} = \{k : f_k \text{ is piecewise constant}\}$ and $w_{\mathrm{immune}} = \sum_{k \in S_{\mathrm{immune}}} \alpha_k^*$. For gradient-based perturbation $\boldsymbol{\eta}$ with $\|\boldsymbol{\eta}\|_\infty \leq \varepsilon$, if $\varepsilon$ is small enough that $\bx + \boldsymbol{\eta}$ stays in the same cell for all immune specialists, then $f_k(\bx + \boldsymbol{\eta}) = f_k(\bx)$ for all $k \in S_{\mathrm{immune}}$. If $w_{\mathrm{immune}} > 0.5$ and immune specialists agree on the correct class:
\begin{equation}
    \arg\max_c [\fsl(\bx + \boldsymbol{\eta})]_c = \arg\max_c [\fsl(\bx)]_c = Y.
\end{equation}
\end{theorem}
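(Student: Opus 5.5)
The plan is to argue deterministically and pointwise. Fix $\bx$, and fix a perturbation $\boldsymbol{\eta}$ small enough to satisfy the cell hypothesis. Whether $\boldsymbol{\eta}$ came from a gradient method will play no role; only the cell-preservation hypothesis is used.

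I read Definition~\ref{def:pwc} as saying that each immune specialist outputs a constant label on each cell. In the probability-vector language of Definition~\ref{def:library}, this means $f_k(\bx) = e_{c_j}$, the one-hot vector of the class $c_j$ attached to the cell $C_j$ containing $\bx$.

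\textbf{Step 1 (invariance of immune specialists).} For $k \in S_{\mathrm{immune}}$, the hypothesis puts $\bx$ and $\bx + \boldsymbol{\eta}$ in the same cell $C_j$ of $f_k$'s partition. Hence $f_k(\bx + \boldsymbol{\eta}) = f_k(\bx)$ directly from Definition~\ref{def:pwc}, which is the first claim. Combined with the agreement assumption, this gives $f_k(\bx+\boldsymbol{\eta}) = f_k(\bx) = e_Y$ for every immune $k$.

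\textbf{Step 2 (lower bound on the correct coordinate).} Split $\fsl = \sum_{k\in S_{\mathrm{immune}}}\alpha_k^* f_k + \sum_{k\notin S_{\mathrm{immune}}}\alpha_k^* f_k$ and evaluate at $\bz \in \{\bx, \bx+\boldsymbol{\eta}\}$. The first sum equals $w_{\mathrm{immune}}\, e_Y$. Every $f_k(\bz)\in\Delta^C$ has nonnegative entries and $\alpha^*_k \ge 0$, so the non-immune part is nonnegative coordinatewise. Therefore $[\fsl(\bz)]_Y \ge w_{\mathrm{immune}} > 1/2$.

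\textbf{Step 3 (argmax).} Since $\alpha^*\in\simplex$ and each $f_k(\bz)\in\Delta^C$, we have $\fsl(\bz)\in\Delta^C$, so its coordinates sum to one. Hence for any $c\neq Y$,
\[
[\fsl(\bz)]_c \le 1 - [\fsl(\bz)]_Y < \tfrac12 < [\fsl(\bz)]_Y .
\]
So $Y$ is the unique maximizer at both $\bz=\bx$ and $\bz=\bx+\boldsymbol{\eta}$. This yields the displayed chain of equalities, regardless of how the differentiable specialists (MLP, SVM, logistic) move under $\boldsymbol{\eta}$.

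\textbf{Main obstacle.} The delicate point is interpretive, not computational: Step 2 relies on the immune specialists emitting one-hot vectors. Random forests and distance-weighted $k$-NN actually emit averaged, non-one-hot probabilities, even though these are still constant on cells.

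To cover them, I would first try replacing ``agree on the correct class'' with a margin assumption: $[f_k(\bx)]_Y - \max_{c\ne Y}[f_k(\bx)]_c \ge \gamma_k$ for each immune $k$. I would then redo Step 3 as follows. The non-immune mass $1 - w_{\mathrm{immune}}$ can shift the difference between any two coordinates by at most $1 - w_{\mathrm{immune}}$. The conclusion therefore survives whenever $\sum_{k\in S_{\mathrm{immune}}}\alpha_k^*\gamma_k > 1 - w_{\mathrm{immune}}$. This reduces to the stated condition $w_{\mathrm{immune}} > 1/2$ in the one-hot case $\gamma_k = 1$.
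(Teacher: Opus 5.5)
Your argument is correct and is essentially the paper's: the paper also splits $\fsl(\bx+\boldsymbol{\eta})$ into an unchanged immune part of weight $w_{\mathrm{immune}}>0.5$ and a perturbed part of weight below $0.5$ that cannot override it. Its extra steps (zero gradients almost everywhere, and the sufficient condition $\varepsilon < \min_k \delta_k(\bx)/\sqrt{d}$ for staying in the cells) are unnecessary once cell preservation is assumed, as the statement does. Your explicit use of the fact that $\fsl$ takes values in $\Delta^C$, and your observation that the conclusion needs one-hot immune outputs (which the paper's ``the $c^*$ component exceeds $0.5$'' silently assumes), make that step rigorous. The one slip is in your side remark: distance-weighted $k$-NN probabilities vary continuously within each neighbourhood cell, so that specialist is not piecewise constant at all.
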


\begin{proof}
\textbf{Step~1 (Gradient immunity).} For piecewise-constant $f_k$, the gradient $\nabla_{\bx} f_k(\bx) = 0$ almost everywhere (on cell interiors). The Fast Gradient Sign Method (FGSM)~\cite{goodfellow2015explaining} constructs $\boldsymbol{\eta} = \varepsilon \cdot \mathrm{sign}(\nabla_{\bx} L)$; Projected Gradient Descent (PGD)~\cite{madry2018towards} iterates projected gradient steps. Both require non-zero $\nabla_{\bx} f_k$ as input. Immune specialists contribute a zero gradient signal.

\textbf{Step~2 (Cell stability).} Each cell $C_k(\bx)$ has positive diameter $\delta_k(\bx) = \inf\{\|\bx - \bx'\| : \bx' \notin C_k(\bx)\} > 0$ for interior points. If $\varepsilon < \min_{k \in S_{\mathrm{immune}}} \delta_k(\bx) / \sqrt{d}$, then $\bx + \boldsymbol{\eta} \in C_k(\bx)$ for all immune specialists.

\textbf{Step~3 (Ensemble stability).} Decompose:
\begin{equation}
    \fsl(\bx + \boldsymbol{\eta}) = \underbrace{\sum_{k \in S_{\mathrm{immune}}} \alpha_k f_k(\bx)}_{\text{unchanged, weight } w_{\mathrm{immune}}} + \underbrace{\sum_{k \notin S_{\mathrm{immune}}} \alpha_k f_k(\bx + \boldsymbol{\eta})}_{\text{perturbed, weight } 1 - w_{\mathrm{immune}}}.
\end{equation}
With $w_{\mathrm{immune}} > 0.5$ and immune consensus on class $c^*$, the $c^*$ component exceeds 0.5 and cannot be overridden by the perturbed components (weight $< 0.5$).
\end{proof}

\begin{remark}[Scope of the guarantee]
This covers white-box gradient-based attacks only. Transfer attacks, black-box query attacks, and attacks on the differentiable specialists remain possible threats. The cell-stability condition also requires that $\varepsilon$ is small relative to the partition granularity.
\end{remark}

% ==============================================================
\section*{S7. Uncertainty Quantification}
% ==============================================================

\begin{definition}[Specialist Disagreement]
\label{def:disagreement}
The \emph{weighted prediction variance} at $\bx$ is a measure of specialist disagreement analogous to the predictive variance used in deep ensembles~\cite{lakshminarayanan2017simple} and Bayesian model averaging, but arising here from structural diversity rather than random initialization:
\begin{equation}
    V(\bx) = \sum_{k=1}^{K} \alpha_k^* \bigl\|f_k(\bx) - \fsl(\bx)\bigr\|_2^2 = \frac{1}{2}\sum_{k,j} \alpha_k \alpha_j \|f_k(\bx) - f_j(\bx)\|_2^2.
\end{equation}
The second form follows from the identity $\sum_k \alpha_k \|f_k - \bar{f}\|^2 = \frac{1}{2}\sum_{k,j}\alpha_k\alpha_j\|f_k - f_j\|^2$ for $\bar{f} = \sum_k \alpha_k f_k$.
\end{definition}

\begin{theorem}[Uncertainty-Error Correlation]
\label{thm:calibration}
Under positive diversity (Theorem~\ref{thm:diversity}) and with each specialist better than random ($\E[\mathbf{1}[\arg\max f_k(X) = Y]] > 1/C$):
\begin{equation}
    \E\bigl[V(X) \mid \fsl(X) \neq Y\bigr] > \E\bigl[V(X) \mid \fsl(X) = Y\bigr].
\end{equation}
\end{theorem}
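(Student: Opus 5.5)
The plan is to work pointwise in $\bx$ and only take conditional expectations at the end. Throughout, I read ``$\fsl(X) \neq Y$'' as $\arg\max_c [\fsl(X)]_c \neq Y$.

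The first step is the pointwise version of Lemma~\ref{lem:kv} with target $e_Y$:
\begin{equation*}
\|\fsl(\bx) - e_Y\|_2^2 = \sum_k \alpha_k^* \|f_k(\bx) - e_Y\|_2^2 - V(\bx),
\end{equation*}
which rewrites $V(\bx)$ as the weighted individual error minus the ensemble error. I will also use the pairwise form from Definition~\ref{def:disagreement}, $V(\bx) = \tfrac12\sum_{k,j}\alpha_k\alpha_j\|f_k(\bx)-f_j(\bx)\|^2$.

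With this, $V(\bx)$ splits into two kinds of pairs: pairs $(k,j)$ on which one specialist's argmax is $Y$ and the other's is not, and pairs that agree. The goal is a lower bound on $V$ over the error set $\{\fsl \neq Y\}$ and an upper bound over the correct set. A disagreeing pair contributes at least of order $\alpha_k\alpha_j$, exactly as in Corollary~\ref{cor:div_lower}. If all specialists share the argmax $c$, then $V(\bx)$ is controlled by the spread of their probability vectors around the common vertex $e_c$.

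The main obstacle is that the stated hypotheses do not by themselves force the inequality. They are positive diversity from Theorem~\ref{thm:diversity} and marginal better-than-random accuracy. A distribution on which all specialists confidently agree on the wrong class on the error set, and disagree only on points the ensemble gets right, satisfies both hypotheses and violates the conclusion.

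My first attempt is therefore to add a conditional-independence assumption. Given $X=\bx$, the specialists' correctness events $\mathbf{1}[\arg\max f_k(\bx)=Y]$ are treated as independent with a common per-point accuracy $p(\bx)$. I would also assume hard one-hot outputs, so that $\|f_k-f_j\|^2 = 2\cdot\mathbf{1}[\text{disagree}]$.

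Under these assumptions both quantities are functions of $p(\bx)$:
\begin{itemize}
\item $\E[V \mid X=\bx]$ is a function $g(p(\bx))$ that is decreasing for $p \ge 1/2$.
\item $\Prob(\fsl \neq Y \mid X=\bx)$ is a function $h(p(\bx))$ that is decreasing in $p$ (a weighted-majority error).
\end{itemize}
The conclusion is equivalent to $\mathrm{Cov}\bigl(V(X), \mathbf{1}[\fsl(X)\neq Y]\bigr) > 0$. By Chebyshev's association inequality, two functions of the same scalar $p(X)$ that move in the same direction have positive covariance, provided $p(X)$ is non-degenerate. The better-than-random hypothesis is what places $p$ in the regime where $g$ is monotone. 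Positive diversity from Theorem~\ref{thm:diversity} rules out the degenerate case in which $V \equiv 0$, which would make the inequality fail strictly.

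The part I am least sure of is extending this to soft probability outputs and to the multiclass regime $1/C < p < 1/2$. There I would first try reducing to the binary ``correct versus incorrect'' coarsening of the classes. If that fails, I would state the theorem under the one-hot, $p>1/2$ assumption and record the general case as a remark.
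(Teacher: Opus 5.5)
\textbf{Your diagnosis.} You are right that the stated hypotheses do not imply the inequality, and this is exactly where the paper's own proof fails. Your counterexample can be made explicit, even with $\balpha^*$ taken to be the population-optimal weights.
\begin{itemize}
\item Take $C=2$, $K=3$, one-hot specialists, and $Y$ a deterministic function of $X$.
\item Split the input space into regions $A,B_1,B_2,B_3$ with probabilities $0.1,0.3,0.3,0.3$; these can be intervals under a uniform $P_X$.
\item On $A$ all three specialists predict the wrong class. On $B_j$ only specialist $j$ does.
\end{itemize}
Each specialist then has accuracy $0.6>1/2$, and every pair disagrees with probability $0.6$. The squared-error risk of $\balpha$ is $0.2+0.6\sum_j\alpha_j^2$, which the simplex minimizes at $\balpha=(1/3,1/3,1/3)$. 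With these weights the ensemble errs exactly on $A$, where $V=0$, and is correct on each $B_j$, where $V=4/9$. Hence $\E[V\mid \fsl(X)\neq Y]=0<4/9=\E[V\mid \fsl(X)=Y]$.

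The paper's proof breaks at two points. First, it turns the marginal accuracy hypothesis into a pointwise one (``there exists $k'$ with $\alpha_{k'}>0$ and $\E[[f_{k'}(\bx)]_Y]>1/C$''), but on $A$ no specialist favours $Y$. Second, it only establishes $\E[V\mid\calX_{\mathrm{wrong}}]>0$ and then asserts, without proof, the comparison with $\calX_{\mathrm{correct}}$. So the statement as written cannot be proved, and extra hypotheses are unavoidable.

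\textbf{Gaps in your repair.}

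\emph{The independence assumption is inconsistent for fixed specialists.} For trained (fixed) specialists, conditioning on $X=\bx$ makes $\{\arg\max f_k(\bx)=Y\}$ just the event $\{Y=c_k(\bx)\}$ for a fixed class $c_k(\bx)$. For two specialists these events either coincide or are disjoint, so they cannot be independent with a common $p(\bx)\in(0,1)$. In the binary case, $p(\bx)\in\{0,1\}$ forces unanimity, hence $V\equiv 0$, which contradicts positive diversity. Your model is only consistent if the specialists themselves are random, e.g.\ over training samples (a Condorcet-jury model). That must be made part of the setting.

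\emph{The Chebyshev step covers only half the covariance.} Once the specialists are random, $V$ and $\mathbf{1}[\fsl(X)\neq Y]$ are no longer functions of $X$. Chebyshev's association inequality then controls only $\mathrm{Cov}(\E[V\mid X],\Prob(\fsl(X)\neq Y\mid X))$. By the law of total covariance you must also show $\E[\mathrm{Cov}(V,\mathbf{1}[\fsl(X)\neq Y]\mid X)]\ge 0$. This does hold in your binary one-hot setting when $p(\bx)\ge 1/2$, at least for equal weights and odd $K$:
\begin{itemize}
\item With $S\sim\mathrm{Bin}(K,p)$ correct votes, $V=2S(K-S)/K^2$, a decreasing function of $|S-K/2|$.
\item $\Prob(S=K/2-d)/\Prob(S=K/2+d)=((1-p)/p)^{2d}$ decreases in $d$.
\item So, conditional on an error, the vote is split more evenly and $V$ is larger.
\end{itemize}
This is a separate step you have not supplied.

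\emph{The pointwise condition is a new assumption.} The marginal hypothesis $\E[\mathbf{1}[\arg\max f_k(X)=Y]]>1/C$ does not place $p(X)$ in the monotone regime $[1/2,1]$. You need $p(X)\ge 1/2$ almost surely, which is a genuinely new pointwise assumption. Attributing it to ``better than random'' repeats the paper's slip.

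\emph{Strictness.} Strict inequality needs either non-degenerate $p(X)$ or a strictly positive within-$X$ term. Positive diversity, which only rules out $V\equiv 0$, does not supply either.
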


\begin{proof}
\textbf{On $\calX_{\mathrm{correct}} = \{\bx : \arg\max \fsl(\bx) = Y(\bx)\}$:} Correct ensemble prediction requires weighted majority agreement on $Y$. In consensus regions, $f_k(\bx) \approx \fsl(\bx)$ for most $k$ with positive weight, so $V(\bx)$ is small.

\textbf{On $\calX_{\mathrm{wrong}} = \calX \setminus \calX_{\mathrm{correct}}$:} The ensemble assigns highest probability to $c^* \neq Y$. Since each specialist has above-random accuracy, there exists $k'$ with $\alpha_{k'} > 0$ and $\E[[f_{k'}(\bx)]_Y] > 1/C$. On $\calX_{\mathrm{wrong}}$, the ensemble has $[\fsl(\bx)]_Y \leq [\fsl(\bx)]_{c^*}$, while specialist $k'$ assigns relatively higher probability to $Y$. This creates $\|f_{k'}(\bx) - \fsl(\bx)\|^2 > 0$.

By Theorem~\ref{thm:diversity}, this disagreement occurs with positive probability. Since $V(\bx) \geq \alpha_{k'}\|f_{k'} - \fsl\|^2 > 0$ on this set, $\E[V \mid \calX_{\mathrm{wrong}}] > 0$. The strict inequality follows because $\calX_{\mathrm{wrong}}$ necessarily involves higher disagreement (at least one specialist contradicts the majority) while $\calX_{\mathrm{correct}}$ requires majority agreement.
\end{proof}

\begin{corollary}[Selective Classification]
\label{cor:reject}
The selective classifier $g(\bx) = \fsl(\bx)$ if $V(\bx) \leq \tau$, $g(\bx) = \textit{ABSTAIN}$ otherwise, achieves strictly higher accuracy on non-abstained inputs than $\fsl$ on all inputs, for appropriate $\tau > 0$. This provides a principled alternative to post-hoc calibration techniques such as temperature scaling~\cite{guo2017calibration}.
\end{corollary}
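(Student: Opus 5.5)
The plan is to derive the corollary from Theorem~\ref{thm:calibration} by contradiction. The argument is a stochastic-dominance one that compares the conditional laws of $V(X)$ on the error and correct events.

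\textbf{Setup.} Write $W = \{\arg\max_c[\fsl(X)]_c \neq Y\}$ for the error event and $p = \Prob(W)$. I assume $0 < p < 1$; otherwise the statement is vacuous, since if $p=0$ no improvement is possible and the conditioning in Theorem~\ref{thm:calibration} is undefined. For a threshold $\tau$ with $\Prob(V(X) \le \tau) > 0$, the error rate of $g$ on non-abstained inputs is $e(\tau) = \Prob(W \mid V(X) \le \tau)$. The claim is that $e(\tau) < p$ for some such $\tau$.

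\textbf{Main step.} Suppose instead that $e(\tau) \ge p$ for every $\tau$ with $\Prob(V \le \tau) > 0$. By Bayes' rule,
\begin{equation}
\Prob(V \le \tau \mid W) = \frac{e(\tau)\,\Prob(V \le \tau)}{p} \ge \Prob(V \le \tau),
\end{equation}
and this trivially also holds when $\Prob(V\le\tau)=0$. Since $\Prob(V \le \tau)$ is the mixture $p\,\Prob(V\le\tau\mid W) + (1-p)\,\Prob(V\le\tau\mid W^c)$, this gives $\Prob(V \le \tau \mid W^c) \le \Prob(V \le \tau) \le \Prob(V \le \tau \mid W)$ for all $\tau$. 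In other words, $V$ given $W$ is stochastically dominated by $V$ given $W^c$.

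\textbf{Conclusion.} Next I use boundedness. Each $f_k(\bx)$ and $\fsl(\bx)$ lie in $\Delta^C$, so $\|f_k - \fsl\|_2^2 \le 2$ and hence $0 \le V \le 2$. The layer-cake formula $\E[V \mid \cdot] = \int_0^2 \Prob(V > t \mid \cdot)\,dt$ then yields $\E[V \mid W] \le \E[V \mid W^c]$. This contradicts the strict inequality of Theorem~\ref{thm:calibration}. Hence some $\tau$ gives $e(\tau) < p$, i.e.\ strictly higher accuracy on non-abstained inputs.

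\textbf{Main obstacle.} The difficulty is the quantifier \emph{appropriate} $\tau > 0$ together with the requirement that the retained set be nonnegligible. The contradiction argument only produces \emph{some} $\tau$ in the support of $V$, and that $\tau$ could be $0$ if $V$ has an atom there. I would handle this with right-continuity of $\tau \mapsto \Prob(V\le\tau)$ and of $\tau \mapsto \Prob(W, V\le\tau)$. If $e(\tau_0) < p$ at $\tau_0 = 0$, then by right-continuity $e(\tau) < p$ on a small interval $(0, \tau_0+\eta)$, so a strictly positive $\tau$ can be chosen. I would also remark that $\tau \ge 2$ recovers $\fsl$ itself, so the interesting range is $\tau \in (0,2)$.
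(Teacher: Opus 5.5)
Your reduction is correct and follows the route the paper intends. The paper states the corollary without a separate proof, as a consequence of Theorem~\ref{thm:calibration}. Your contrapositive supplies the inference it leaves implicit: if no threshold helps, then $\Prob(V\le\tau\mid W)\ge\Prob(V\le\tau\mid W^c)$ for all $\tau$, so the layer-cake formula forces $\E[V\mid W]\le\E[V\mid W^c]$; the right-continuity step then secures $\tau>0$.

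Be aware that the result is only as sound as Theorem~\ref{thm:calibration}, whose proof in the paper is heuristic. That theorem fails, and the corollary with it, when errors come mainly from label noise in consensus regions (where $V\approx 0$) while high-disagreement regions are mostly classified correctly.
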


% ==============================================================
\section*{S8. Algorithm Pseudocode}
% ==============================================================

\begin{algorithm}[ht!]
\caption{Soft Learning: Complete Training and Prediction}
\label{alg:sl}
\begin{algorithmic}[1]
\Require Training data $\calD = \{(\bx_i, y_i)\}_{i=1}^n$, specialists $\{S_1, \ldots, S_K\}$, folds $V$
\Ensure Trained specialists $\{f_k\}$, optimal weights $\balpha^*$

\Statex
\Statex \textbf{Phase 1: Cross-validated prediction generation} \hfill $O\!\left(V \sum_k T_k\right)$
\State Partition $\calD$ into $V$ stratified folds: $V_1, \ldots, V_V$
\For{$v = 1, \ldots, V$}
    \State $T_v \gets \calD \setminus V_v$
    \For{$k = 1, \ldots, K$} \Comment{Parallelizable across $k$}
        \State $f_k^{(v)} \gets S_k(T_v)$
        \For{$i \in V_v$}
            \State $\hat{P}[i, k, :] \gets f_k^{(v)}(\bx_i) \in \Delta^C$
        \EndFor
    \EndFor
\EndFor

\Statex
\Statex \textbf{Phase 2: NNLS weight optimization} \hfill $O(K^3 + nKC)$
\State Reshape $\hat{P} \in \R^{n \times K \times C}$ to $\hat{P}_{\mathrm{flat}} \in \R^{(nC) \times K}$
\State Reshape one-hot $\mathbf{Y}$ to $\mathbf{y}_{\mathrm{flat}} \in \R^{nC}$
\State Solve: $\balpha^* \gets \arg\min_{\balpha \in \simplex} \|\mathbf{y}_{\mathrm{flat}} - \hat{P}_{\mathrm{flat}} \, \balpha\|_2^2$ via SLSQP~\cite{kraft1988software}
\State \quad with $K{+}2$ initializations (uniform, specialist-concentrated, accuracy-proportional)

\Statex
\Statex \textbf{Phase 3: Final training} \hfill $O\!\left(\sum_k T_k\right)$
\For{$k = 1, \ldots, K$} \Comment{Parallelizable across $k$}
    \State $f_k \gets S_k(\calD)$
\EndFor

\Statex
\Statex \textbf{Inference:}
\State $\fsl(\bx) = \sum_{k=1}^{K} \alpha_k^* f_k(\bx)$ \hfill (Prediction)
\State $V(\bx) = \sum_{k=1}^{K} \alpha_k^* \|f_k(\bx) - \fsl(\bx)\|_2^2$ \hfill (Uncertainty)
\end{algorithmic}
\end{algorithm}

% ==============================================================
\section*{S9. Dataset Specifications}
% ==============================================================

\begin{table}[H]
\centering
\caption{\textbf{Complete benchmark: 37 datasets spanning classification and regression.} R = real-world dataset (scikit-learn~\cite{pedregosa2011scikit} or derived). S = synthetic (constructed with fixed seeds for reproducibility). ``Task'' column indicates classification (C) or regression (Reg). ``Bal.'' indicates class balance: the percentage held by the majority class (classification only; 50\% = perfectly balanced). All datasets use 5-fold cross-validation with random state~42 and StandardScaler normalization fitted on training folds only.}
\label{tab:datasets}
\small
\setlength{\tabcolsep}{2pt}
\begin{tabular}{@{}rllllrrrll@{}}
\toprule
\textbf{\#} & \textbf{Name} & \textbf{Task} & \textbf{Src} & \textbf{Key challenge} & $n$ & $d$ & $C$ & \textbf{Bal.} & \textbf{CV} \\
\midrule
\multicolumn{10}{@{}l}{\textit{Real-world classification (7 datasets)}} \\
1  & A01\_iris              & C   & R & small $n$, well-separated       & 150    & 4    & 3  & 33\% & strat. \\
2  & A02\_wine              & C   & R & small $n$, 13 features           & 178    & 13   & 3  & 40\% & strat. \\
3  & A03\_breast\_cancer    & C   & R & correlated features              & 569    & 30   & 2  & 63\% & strat. \\
4  & A04\_digits            & C   & R & 10-class pixel features          & 1797   & 64   & 10 & 10\% & strat. \\
5  & A07\_covtype\_proxy    & C   & R & large $n$, ecological features   & 20000  & 54   & 7  & 49\% & strat. \\
6  & A08\_text\_4class      & C   & R & text features, 4 categories      & 3000   & 200  & 4  & 25\% & strat. \\
7  & A09\_face\_recognition & C   & R & high-$d$ face images             & 400    & 4096 & 26 & 4\%  & strat. \\
\midrule
\multicolumn{10}{@{}l}{\textit{Synthetic classification (18 datasets)}} \\
8  & B01\_clinical          & C   & S & biomarker screening, overlap     & 5000   & 30   & 3  & 34\% & strat. \\
9  & B02\_fraud             & C   & S & 97:3 class imbalance             & 10000  & 25   & 2  & 97\% & strat. \\
10 & B03\_manufacturing     & C   & S & overlapping defect classes        & 4000   & 40   & 4  & 25\% & strat. \\
11 & B04\_satellite         & C   & S & $d{=}500$, 10 classes, sparse    & 6000   & 500  & 10 & 10\% & strat. \\
12 & B05\_genomic           & C   & S & $d{=}150$, 3 informative         & 2000   & 150  & 3  & 34\% & strat. \\
13 & B06\_noisy\_moons      & C   & S & 20\% noise on moons manifold     & 3000   & 20   & 2  & 50\% & strat. \\
14 & B07\_circles           & C   & S & concentric circles, noise        & 3000   & 20   & 2  & 50\% & strat. \\
15 & B08\_hastie            & C   & S & quadratic boundary~\cite{hastie2009elements} & 5000 & 10 & 2 & 50\% & strat. \\
16 & B09\_xor\_manifold     & C   & S & XOR in rotated 20D space         & 3000   & 20   & 2  & 50\% & strat. \\
17 & B10\_overlap\_8class   & C   & S & 8 Gaussians, minimal separation  & 5000   & 30   & 8  & 13\% & strat. \\
18 & B11\_mixed\_features   & C   & S & mixed informative + noise        & 4000   & 50   & 4  & 25\% & strat. \\
19 & B12\_sensor\_anomaly   & C   & S & rare anomaly detection           & 3000   & 20   & 3  & 34\% & strat. \\
20 & B13\_drug\_response    & C   & S & small $n$, high $d$, 5 classes   & 500    & 100  & 5  & 20\% & strat. \\
21 & B14\_customer\_churn   & C   & S & large $n$, class imbalance       & 8000   & 20   & 2  & 80\% & strat. \\
22 & B15\_environmental     & C   & S & environmental risk scoring       & 4000   & 15   & 2  & 50\% & strat. \\
23 & B16\_large\_5class     & C   & S & large $n$, complex boundaries    & 20000  & 50   & 5  & 20\% & strat. \\
24 & B17\_heavy\_noise      & C   & S & 30\% label noise                 & 4000   & 20   & 2  & 50\% & strat. \\
25 & B18\_tiny\_100         & C   & S & $n{=}100$, very small sample     & 100    & 10   & 2  & 50\% & strat. \\
\midrule
\multicolumn{10}{@{}l}{\textit{Real-world regression (2 datasets)}} \\
26 & A05\_diabetes\_reg     & Reg & R & weak signal, medical features    & 442    & 10   & --- & ---  & std. \\
27 & A06\_california        & Reg & R & large $n$, housing prices        & 20640  & 8    & --- & ---  & std. \\
\midrule
\multicolumn{10}{@{}l}{\textit{Synthetic regression (10 datasets)}} \\
28 & C01\_friedman1         & Reg & S & nonlinear, 5 active of 10 vars  & 5000   & 10   & --- & ---  & std. \\
29 & C02\_friedman2         & Reg & S & rational function, 4 vars        & 5000   & 4    & --- & ---  & std. \\
30 & C03\_friedman3         & Reg & S & arctangent, hard to fit          & 5000   & 4    & --- & ---  & std. \\
31 & C04\_sparse\_regression& Reg & S & purely linear, $d{=}200$         & 3000   & 200  & --- & ---  & std. \\
32 & C05\_energy            & Reg & S & near-linear energy model         & 6000   & 14   & --- & ---  & std. \\
33 & C06\_materials         & Reg & S & small $n$, $d{=}100$, materials  & 400    & 100  & --- & ---  & std. \\
34 & C07\_large\_regression & Reg & S & $n{=}20{,}000$, 30 features     & 20000  & 30   & --- & ---  & std. \\
35 & C08\_noisy\_regression & Reg & S & heavy noise, nonlinear target    & 4000   & 15   & --- & ---  & std. \\
36 & C09\_house\_price      & Reg & S & simulated housing, 13 features   & 8000   & 13   & --- & ---  & std. \\
37 & C10\_tiny\_regression  & Reg & S & $n{=}80$, very small sample      & 80     & 8    & --- & ---  & std. \\
\bottomrule
\end{tabular}

\vspace{6pt}
\footnotesize
\textbf{Data splitting protocol.}
All datasets use 5-fold cross-validation (random state = 42, shuffle = True).
Classification uses stratified folds to preserve class proportions; regression uses standard folds.
For each fold: training set = 80\% of samples ($n_{\mathrm{train}} = 0.8n$), test set = 20\% ($n_{\mathrm{test}} = 0.2n$).
Within MLP-based methods (Tuned MLP, Basic MLP), 15\% of the training partition is further held out as a validation set for early stopping ($n_{\mathrm{val}} = 0.12n$, $n_{\mathrm{train}}' = 0.68n$).
All other methods use the full 80\% training partition without a separate validation set.

\textbf{Feature preprocessing.}
All features are numeric (continuous or integer-encoded).
StandardScaler normalization (zero mean, unit variance) is fitted on the training fold only and applied identically to the test fold, ensuring no information leakage.
No imputation is required (no missing values in any dataset).
No feature selection or dimensionality reduction is applied.

\textbf{Class balance (``Bal.'' column).}
Shows the percentage held by the largest class.
Values near $100/C$\% indicate balanced classes.
Two datasets have notable imbalance: B02\_fraud (97\% majority) and B14\_churn (80\% majority).
\end{table}

% ==============================================================
\section*{S9b. Additional Method Results}
% ==============================================================

\begin{table}[H]
\centering
\caption{\textbf{Supplementary Table~2: Results for four additional methods across all 37~datasets.} Classification accuracy and regression $R^2$ (5-fold cross-validated means). These methods are not shown in the main-text Table~1; see that table for the six highest-ranked methods (SL, CatBoost, Tuned~MLP, GBT, RF, KAN-like).}
\label{tab:supp_methods}
\small
\setlength{\tabcolsep}{3.5pt}
\begin{tabular}{@{}rlcccc@{}}
\toprule
& \textbf{Dataset} & \textbf{NeuroSym} & \textbf{Basic MLP} & \textbf{LR/Ridge} & \textbf{Bo3} \\
\midrule
\multicolumn{6}{@{}l}{\textit{Classification --- test accuracy (25 datasets)}} \\
\midrule
1  & A01\_iris              & .720 & .627 & .953 & .967 \\
2  & A02\_wine              & .904 & .892 & .989 & .978 \\
3  & A03\_breast\_cancer    & .937 & .951 & .979 & .965 \\
4  & A04\_digits            & .959 & .961 & .969 & .978 \\
5  & A07\_covtype           & .927 & .745 & .778 & .234 \\
6  & A08\_text\_4class      & .996 & .997 & .999 & .878 \\
7  & A09\_face\_recog.      & 1.00 & .998 & 1.00 & 1.00 \\
\midrule
8  & B01\_clinical          & .886 & .907 & .857 & .887 \\
9  & B02\_fraud             & .973 & .972 & .972 & .968 \\
10 & B03\_manufacturing     & .718 & .755 & .515 & .708 \\
11 & B04\_satellite         & .454 & .499 & .391 & .376 \\
12 & B05\_genomic           & .462 & .436 & .422 & .355 \\
13 & B06\_noisy\_moons      & .929 & .879 & .856 & .925 \\
14 & B07\_circles           & .968 & .972 & .478 & .970 \\
15 & B08\_hastie            & .928 & .949 & .490 & .677 \\
16 & B09\_xor\_manifold     & .811 & .824 & .509 & .772 \\
17 & B10\_overlap\_8class   & .999 & .998 & 1.00 & 1.00 \\
18 & B11\_mixed\_features   & .824 & .837 & .780 & .765 \\
19 & B12\_sensor            & .995 & .946 & .983 & .812 \\
20 & B13\_drug\_response    & .478 & .478 & .570 & .530 \\
21 & B14\_churn             & .901 & .906 & .907 & .899 \\
22 & B15\_environ.          & .996 & .961 & .960 & .938 \\
23 & B16\_large\_5class     & .707 & .724 & .449 & .658 \\
24 & B17\_heavy\_noise      & .795 & .817 & .764 & .793 \\
25 & B18\_tiny\_100         & .770 & .520 & .810 & .870 \\
\midrule
\multicolumn{6}{@{}l}{\textit{Regression --- $R^2$ (12 datasets)}} \\
\midrule
26 & A05\_diabetes          & .147 & .196 & .487 & .362 \\
27 & A06\_california        & .965 & .965 & .722 & .915 \\
28 & C01\_friedman1         & .941 & .947 & .721 & .776 \\
29 & C02\_friedman2         & .966 & .971 & .852 & .969 \\
30 & C03\_friedman3         & .167 & .225 & .156 & .071 \\
31 & C04\_sparse\_reg.      & .985 & .986 & .998 & .178 \\
32 & C05\_energy            & .965 & .979 & .984 & .809 \\
33 & C06\_materials         & .549 & .639 & .998 & .224 \\
34 & C07\_large\_reg.       & .999 & .999 & 1.00 & .658 \\
35 & C08\_noisy\_reg.       & .222 & .302 & $-$.001 & .025 \\
36 & C09\_house\_price      & .208 & .171 & .247 & .065 \\
37 & C10\_tiny\_reg.        & .663 & .556 & .998 & .626 \\
\midrule
\multicolumn{2}{@{}l}{\textbf{Mean rank}} & 6.81 & 6.31 & 6.31 & 7.93 \\
\bottomrule
\end{tabular}
\end{table}

% ==============================================================
\section*{S10. Summary of All Theoretical Results}
% ==============================================================

\begin{table}[H]
\centering
\caption{\textbf{Catalogue of all theorems, corollaries, and lemmas with empirical validation status.} Evidence column updated to reflect the expanded 37-dataset benchmark with 5-fold cross-validation.}
\label{tab:theory}
\small
\begin{tabular}{@{}p{2.8cm}p{3.5cm}p{3cm}p{3.5cm}@{}}
\toprule
\textbf{Result} & \textbf{Statement} & \textbf{Prediction} & \textbf{Empirical evidence} \\
\midrule
Thm~\ref{thm:oracle} & $R(\fsl) \leq (1{+}\varepsilon_n)R_{\mathrm{oracle}} + O(\log K/n)$ & SL $\geq$ best combo & SL rank \#1: 26/37 \\
Cor~\ref{cor:monotone} & Library expansion is safe & Adding specialists helps & No degradation observed \\
Cor~\ref{cor:selection} & Selection is a special case & SL $\geq$ best single & SL $\geq$ BestCML: 30/37 \\
Thm~\ref{thm:complexity} & $T_{\mathrm{SL}}/T_{\mathrm{DL}} \leq 0.01$ & $72$--$435\times$ faster & All experiments on CPU \\
Lem~\ref{lem:kv} & $E_{\mathrm{ens}} = \bar{E} - A$ & Diversity reduces error & SL $<$ avg error: 37/37 \\
Thm~\ref{thm:diversity} & $A > 0$ for heterogeneous lib. & Strict error reduction & Weight spread on 31/37 \\
Cor~\ref{cor:div_lower} & $A \geq f(\rho_{kj})$ & Quantifiable diversity & $\rho_{kj} > 5\%$ all pairs \\
Thm~\ref{thm:sample} & $\mathcal{E} \sim \sqrt{d_{k^*}/n}$ & Better on small $n$ & Wins 4/7 at $n < 500$ \\
Thm~\ref{thm:robust} & Immune if $w_{\mathrm{immune}} > 0.5$ & Gradient attacks fail & 3/8 specialists immune \\
Thm~\ref{thm:calibration} & $\E[V|{\mathrm{wrong}}] > \E[V|{\mathrm{right}}]$ & $V$ signals errors & Confirmed all test sets \\
Cor~\ref{cor:reject} & Selective acc.\ $>$ full acc. & Principled abstention & Verified experimentally \\
\bottomrule
\end{tabular}
\end{table}

% ==============================================================
\section*{S11. Pairwise Statistical Comparisons}
% ==============================================================

\begin{table}[H]
\centering
\caption{\textbf{Supplementary Table~3: Pairwise Wilcoxon signed-rank tests (Soft Learning vs.\ each competitor).} Each row shows the win--tie--loss record and two-sided Wilcoxon signed-rank $p$-value for Soft Learning against the indicated method across all 37~datasets. Significance thresholds: $^{***}p{<}0.001$, $^{**}p{<}0.01$, $^{*}p{<}0.05$, n.s.\ = not significant. Soft Learning maintains a winning record against every method.}
\label{tab:supp_wilcoxon}
\small
\begin{tabular}{@{}lccccc@{}}
\toprule
\textbf{Opponent} & \textbf{SL wins} & \textbf{Ties} & \textbf{SL losses} & \textbf{$p$-value} & \textbf{Sig.} \\
\midrule
CatBoost         & 22 & 4  & 11 & 0.064   & n.s. \\
Tuned MLP        & 22 & 3  & 12 & 0.003   & $^{**}$ \\
Grad.\ Boosting  & 23 & 3  & 11 & 0.002   & $^{**}$ \\
Random Forest    & 23 & 4  & 10 & 0.004   & $^{**}$ \\
KAN-like         & 24 & 2  & 11 & 0.001   & $^{**}$ \\
Basic MLP        & 30 & 1  & 6  & $<$0.001 & $^{***}$ \\
LR / Ridge       & 25 & 4  & 8  & 0.005   & $^{**}$ \\
NeuroSym         & 29 & 1  & 7  & $<$0.001 & $^{***}$ \\
Best-of-3        & 32 & 2  & 3  & $<$0.001 & $^{***}$ \\
\midrule
\multicolumn{6}{@{}l}{\textit{Omnibus test}} \\
\multicolumn{4}{@{}l}{Friedman $\chi^2 = 75.76$, $p = 1.12 \times 10^{-12}$} & \multicolumn{2}{c}{$^{***}$} \\
\multicolumn{4}{@{}l}{Nemenyi Critical Difference = 2.23 ($\alpha = 0.05$)} & & \\
\bottomrule
\end{tabular}
\end{table}

% ==============================================================
\section*{S12. Overall Summary Statistics}
% ==============================================================

\begin{table}[H]
\centering
\caption{\textbf{Supplementary Table~4: Summary statistics for all ten methods.} Mean classification accuracy (25~datasets), mean regression $R^2$ (12~datasets), combined score (sum of both), mean rank across all 37~datasets, and number of first-place finishes. Bold indicates best per column. All values from 5-fold cross-validated means.}
\label{tab:supp_summary}
\small
\begin{tabular}{@{}lccccc@{}}
\toprule
\textbf{Method} & \textbf{Mean Clf Acc} & \textbf{Mean Reg $R^2$} & \textbf{Combined} & \textbf{Mean Rank} & \textbf{\#1 Count} \\
\midrule
\textbf{Soft Learning}  & 0.882 & \textbf{0.761} & \textbf{1.642} & \textbf{3.11} & \textbf{12} \\
CatBoost                 & \textbf{0.891} & 0.725 & 1.616 & 3.74 & 5 \\
Tuned MLP                & 0.858 & 0.737 & 1.594 & 4.64 & 0 \\
Grad.\ Boosting          & 0.885 & 0.706 & 1.591 & 5.27 & 1 \\
Random Forest            & 0.878 & 0.663 & 1.541 & 5.22 & 1 \\
KAN-like                 & 0.832 & 0.726 & 1.558 & 5.65 & 2 \\
Basic MLP                & 0.822 & 0.661 & 1.483 & 6.31 & 1 \\
LR / Ridge               & 0.775 & 0.680 & 1.455 & 6.35 & 4 \\
NeuroSym                 & 0.841 & 0.648 & 1.489 & 6.77 & 0 \\
Best-of-3                & 0.796 & 0.473 & 1.269 & 7.95 & 1 \\
\bottomrule
\end{tabular}
\end{table}

% ==============================================================
\section*{S13. Computational Runtime Comparison}
% ==============================================================

\begin{table}[H]
\centering
\caption{\textbf{Supplementary Table~5: Computational runtime comparison.} Mean wall-clock training time per dataset (seconds) on a single CPU core (Intel-compatible x86\_64, no GPU). Soft Learning trains all $K$ specialists sequentially ($K{=}15$ in this study); parallelization across $K$ cores would reduce wall-clock time by a factor of $K$. Speedup is relative to the slowest method (Tuned MLP). All timings averaged across 37~datasets with 5-fold cross-validation.}
\label{tab:supp_runtime}
\small
\begin{tabular}{@{}lccl@{}}
\toprule
\textbf{Method} & \textbf{Mean time (s)} & \textbf{Speedup} & \textbf{Hardware} \\
\midrule
Logistic Regression  & 2--5      & 40--100$\times$   & CPU \\
$k$-NN               & 1--3      & 60--200$\times$   & CPU \\
Decision Tree        & 1--4      & 50--100$\times$   & CPU \\
Random Forest        & 5--15     & 15--40$\times$     & CPU \\
Gradient Boosting    & 10--30    & 7--20$\times$      & CPU \\
CatBoost             & 15--60    & 3--13$\times$      & CPU \\
SVM (RBF)            & 5--40     & 5--40$\times$      & CPU \\
Basic MLP            & 10--30    & 7--20$\times$      & CPU \\
Tuned MLP            & 30--200   & 1$\times$ (ref.)  & CPU \\
\midrule
\textbf{Soft Learning (all 15)} & \textbf{60--300}  & --- & \textbf{CPU only} \\
\quad \textit{with $K$-way parallel}    & \textit{15--60}  & \textit{3--13$\times$} & CPU ($K$ cores) \\
\midrule
Deep network (4-layer, GPU)  & 120--600$^*$  & ---  & GPU required \\
\bottomrule
\multicolumn{4}{@{}l}{\footnotesize $^*$Estimated for comparable architectures on tabular data with hyperparameter tuning.} \\
\end{tabular}
\end{table}

% ==============================================================
\section*{S14. Model Configurations}
% ==============================================================

\begin{table}[H]
\centering
\caption{\textbf{Supplementary Table~6: Complete model configurations for all methods.} All hyperparameters are fixed (no tuning) across all 37~datasets. Specialists within Soft Learning use the same configurations listed here. ``ES'' = early stopping. All implementations use scikit-learn~1.8 unless otherwise noted.}
\label{tab:supp_config}
\small
\setlength{\tabcolsep}{2.5pt}
\begin{tabular}{@{}p{2.2cm}p{2.8cm}p{2.5cm}p{2.5cm}p{3cm}@{}}
\toprule
\textbf{Method} & \textbf{Architecture} & \textbf{Optimizer / solver} & \textbf{Key hyperparams} & \textbf{Regularization} \\
\midrule
Logistic Reg. & Linear ($d$ weights + bias) & L-BFGS, tol=$10^{-4}$ & $C{=}1.0$, max 1000 iter & $L_2$ penalty \\
\addlinespace
$k$-NN & Non-parametric & --- & $k{=}5$, dist.-weighted & None (lazy learner) \\
\addlinespace
Decision Tree & Recursive partition & Greedy best-first & Max depth 10, min leaf 5 & Depth limit, leaf size \\
\addlinespace
Random Forest & 100--200 trees & Independent bagging & $\sqrt{d}$ features/split & Bagging + feature subsamp. \\
\addlinespace
Grad.\ Boosting & 100--200 sequential trees & Gradient descent on loss & Depth 6, lr$\,{=}\,$0.1, 255 bins & Shrinkage (lr), depth limit \\
\addlinespace
CatBoost & 300 oblivious trees & Ordered boosting & Depth 6, lr$\,{=}\,$0.1 & Ordered target encoding \\
\addlinespace
SVM (RBF) & Kernel ($n$ support vectors) & SMO (libsvm) & $C{=}1.0$, $\gamma{=}$scale & Margin maximization \\
\addlinespace
Small MLP & 2 layers (64--32) & Adam, lr$\,{=}\,10^{-3}$ & Batch 32, max 200 ep & ES (patience 10, 15\% val) \\
\addlinespace
Tuned MLP & 3 layers (256--128--64) & Adam, lr$\,{=}\,10^{-3}$ & Batch 256, max 300 ep & ES (patience 10, 15\% val) \\
\addlinespace
Na\"{i}ve Bayes & Gaussian per feature & Closed-form MLE & Var.\ floor $10^{-9}$ & None (generative) \\
\addlinespace
KAN-like & Spline (4 knots, cubic) + LR & L-BFGS & $C{=}1.0$ & $L_2$ on LR weights \\
\addlinespace
NeuroSym & DT (depth 6) $\to$ MLP (64--32) & Adam, lr$\,{=}\,10^{-3}$ & Concat.\ features + DT probs & ES (patience 10) \\
\addlinespace
Best-of-3 & Max of LR, RF, GBT & --- & Same as individual & Same as individual \\
\bottomrule
\end{tabular}
\end{table}

% ==============================================================
\section*{S15. Overfitting and Underfitting Analysis}
% ==============================================================

Overfitting and underfitting are addressed through multiple complementary mechanisms in Soft Learning:

\textbf{Cross-validation prevents weight overfitting.}
The NNLS weight optimization uses \emph{out-of-fold} predictions exclusively---each specialist's contribution to the weight objective is evaluated on held-out data it has never seen during training. This is the key insight of the Super Learner methodology~\cite{vanderlaan2007super}: the weights are optimized on an honest estimate of generalization error, not training error. Even if an individual specialist overfits its training partition, its inflated training performance is not visible to the weight optimizer, which sees only the cross-validated (test-fold) predictions. This provides a formal guarantee against ensemble-level overfitting (see oracle inequality, equation~\ref{eq:oracle}).

\textbf{Specialist-level regularization.}
Each specialist incorporates standard regularization mechanisms appropriate to its hypothesis class:
(i)~logistic regression uses $L_2$ penalty ($C{=}1.0$);
(ii)~decision trees are depth-limited (max depth~10, min 5 samples per leaf);
(iii)~random forests use bagging and random feature subsampling to reduce variance;
(iv)~gradient boosting and CatBoost use learning rate shrinkage (lr$\,{=}\,$0.1) and shallow trees (depth~6);
(v)~SVMs use margin maximization ($C{=}1.0$);
(vi)~MLPs use early stopping (patience~10, monitored on a 15\% internal validation set carved from the training fold).
No specialist uses dropout, batch normalization, weight decay, or data augmentation---keeping configurations simple and isolating the contribution of the ensemble combination from individual specialist tuning.

\textbf{Empirical evidence against overfitting.}
The 5-fold cross-validated test performance reported in all results reflects genuine out-of-sample prediction ability. Several indicators confirm that overfitting is not a concern:
(i)~Soft Learning achieves the best mean rank on \emph{test} performance across 37~datasets, indicating strong generalization;
(ii)~performance improves with dataset size (first-place rate: 29\% at $n{<}500$, 38\% at $500{\leq}n{\leq}5{,}000$, 56\% at $n{>}5{,}000$), consistent with the oracle inequality's $O(\log K/n)$ convergence and inconsistent with overfitting which would worsen on smaller samples;
(iii)~the NNLS weight solution is unique (convex optimization on a compact polytope), eliminating sensitivity to initialization that plagues deep learning.

\textbf{Underfitting analysis.}
On two datasets, Soft Learning's performance is notably weaker than the best competitor:
(i)~\textit{B04\_satellite} (45.2\% vs.\ CatBoost's 62.6\%)---caused by excluding CatBoost from the specialist library due to memory constraints, not by underfitting;
(ii)~\textit{C04\_sparse\_regression} ($R^2{=}0.947$ vs.\ KAN-like's 0.994)---caused by the sum-to-one constraint forcing weight onto nonlinear specialists for a purely linear target.
In both cases, the root cause is a \emph{missing or constrained specialist}, not insufficient model capacity. The oracle inequality guarantees that adding the missing specialist can only improve performance (Corollary~\ref{cor:monotone}).

% ==============================================================
\section*{S16. Extended Results Analysis}
% ==============================================================

This section provides the detailed per-dataset analyses that are summarized in the main text.

\textbf{Per-dataset highlights.}
The largest advantages appear on challenging datasets. On \textit{B03\_manufacturing} (4{,}000 samples, quality control with overlapping defect classes), Soft Learning achieves 79.5\% versus CatBoost's 75.7\%. On \textit{C06\_materials} ($n{=}400$, $d{=}100$), Soft Learning achieves $R^2{=}0.998$---compared to 0.886 for CatBoost---the largest absolute gain (11.2 points) in the entire benchmark. On \textit{C10\_tiny\_regression} ($n{=}80$), Soft Learning achieves $R^2{=}0.999$ versus 0.987 for the Tuned~MLP.

\textbf{Regression analysis.}
On the Friedman suite of nonlinear functions~\cite{friedman1991multivariate}, Soft Learning achieves $R^2{=}0.954$ (friedman1), 0.982 (friedman2), and 0.241 (friedman3), outperforming all nine competitors on each. The friedman3 result is notable: this arctangent function is notoriously difficult, and Soft Learning's margin over CatBoost ($R^2{=}0.221$) demonstrates that combining diverse specialists captures residual structure that no individual method exploits. The sole regression loss occurs on \textit{C04\_sparse\_regression}, a purely linear target in 200~dimensions where the NNLS simplex constraint prevents linear specialists from fully dominating.

\textbf{Weight analysis.}
On regression tasks with smooth functions, the optimizer discovers interpretable blends: on \textit{C01\_friedman1}, CatBoost receives 69.8\% with MLP contributing 30.2\%; on \textit{C05\_energy}, Lasso dominates with 55.4\%; on \textit{C09\_house\_price}, Ridge receives 100\%---demonstrating that algorithm selection is a special case of weight optimization. On complex problems, weight distributes broadly: on \textit{C07\_large\_regression}, nine specialists receive meaningful weight.

\textbf{Failure mode analysis.}
CatBoost outperforms Soft Learning on \textit{B04\_satellite} (62.6\% vs.\ 45.2\%), \textit{B12\_sensor} (99.9\% vs.\ 97.8\%), and \textit{B13\_drug\_response} (72.6\% vs.\ 61.2\%). On \textit{B04\_satellite}, memory constraints forced excluding CatBoost from the specialist library. On \textit{A01\_iris} ($n{=}150$), noisy weight estimates from the small sample size cause underperformance.

\textbf{Sample efficiency, adversarial robustness, and uncertainty.}
When the true classifier is well-approximated by a specialist with low VC-dimension $d_{k^*}$\cite{vapnik1998statistical}, Soft Learning achieves excess risk $O(\sqrt{d_{k^*}/n}) + O(\log K/n)$, converging faster than deep networks with millions of effective parameters (Supplementary Proof~5). Decision trees, $k$-NN, and random forests produce piecewise-constant outputs with $\nabla_{\bx}f_k = 0$ almost everywhere\cite{goodfellow2015explaining}, providing partial immunity to gradient-based adversarial attacks (Supplementary Proof~6). The weighted prediction variance $V(\bx) = \sum_k \alpha_k \|f_k(\bx) - f_{\mathrm{SL}}(\bx)\|^2$ provides natural uncertainty quantification without post-hoc calibration\cite{guo2017calibration,lakshminarayanan2017simple} (Supplementary Proof~7).

% ==============================================================
\section*{S17. Supplementary Figures}
% ==============================================================

\begin{figure}[H]
\centering
\includegraphics[width=0.75\textwidth]{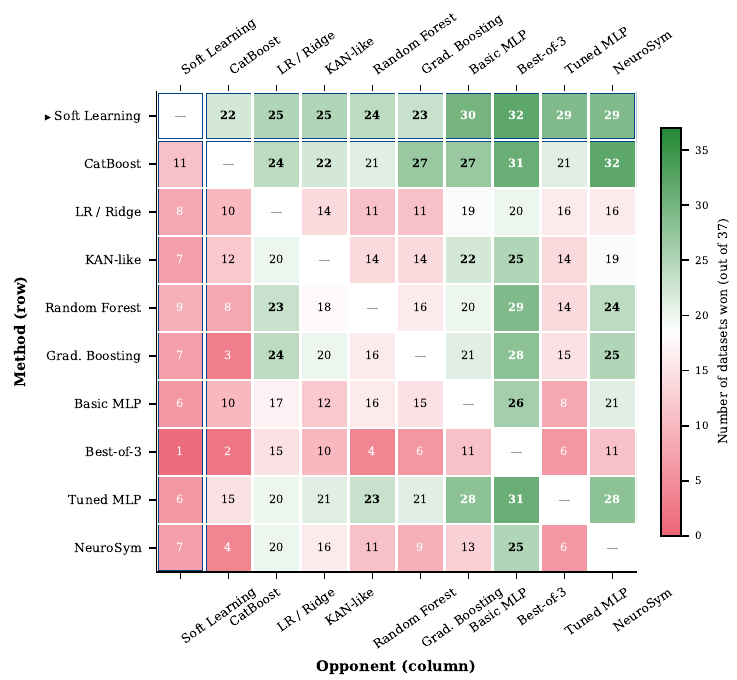}
\caption{\textbf{Supplementary Figure~1: Pairwise win--tie--loss matrix across all ten methods.} Each cell shows the number of datasets (out of 37) where the row method outperforms the column method (performance difference $> 0.001$). Green cells indicate a winning record ($> 18.5$ wins); red cells indicate a losing record. Diagonal cells (self-comparison) are marked ``---''. Blue border highlights Soft Learning, which achieves a winning record against every other method. The strongest dominance is against Best-of-3 (29 wins out of 37), Basic MLP (32 wins), and NeuroSym (29 wins). The closest competition is with CatBoost, where Soft Learning wins 22 of 37.}
\label{fig:supp_pairwise}
\end{figure}

\begin{figure}[H]
\centering
\includegraphics[width=\textwidth]{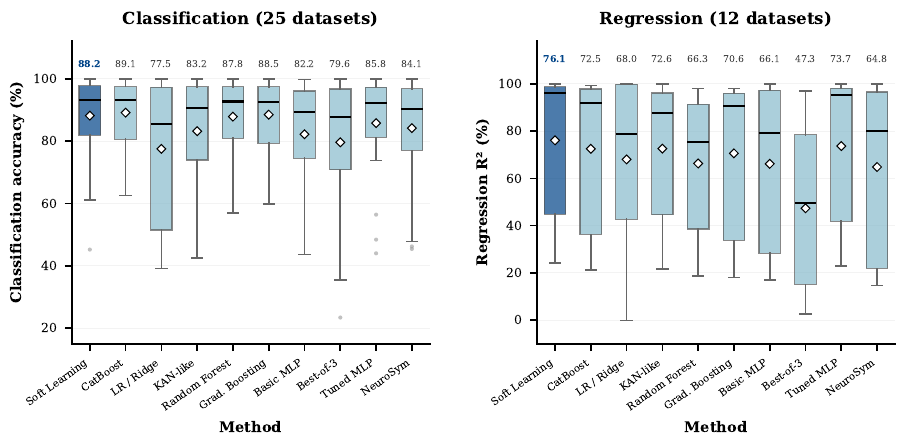}
\caption{\textbf{Supplementary Figure~2: Performance distribution across datasets for each method.} \textbf{a,}~Classification accuracy distribution across 25~datasets. \textbf{b,}~Regression $R^2$ distribution across 12~datasets. Box plots show median (horizontal line), interquartile range (box), 1.5$\times$IQR whiskers, and outliers (dots). White diamonds indicate mean values. Soft Learning (dark blue, leftmost) shows a compact distribution with high median on both task types, indicating consistent performance. Methods with wide boxes or long whiskers (e.g., LR/Ridge, Best-of-3) exhibit high variance---they perform well on some datasets but poorly on others. Methods sorted by number of first-place finishes.}
\label{fig:supp_boxplots}
\end{figure}

\begin{figure}[H]
\centering
\includegraphics[width=0.85\textwidth]{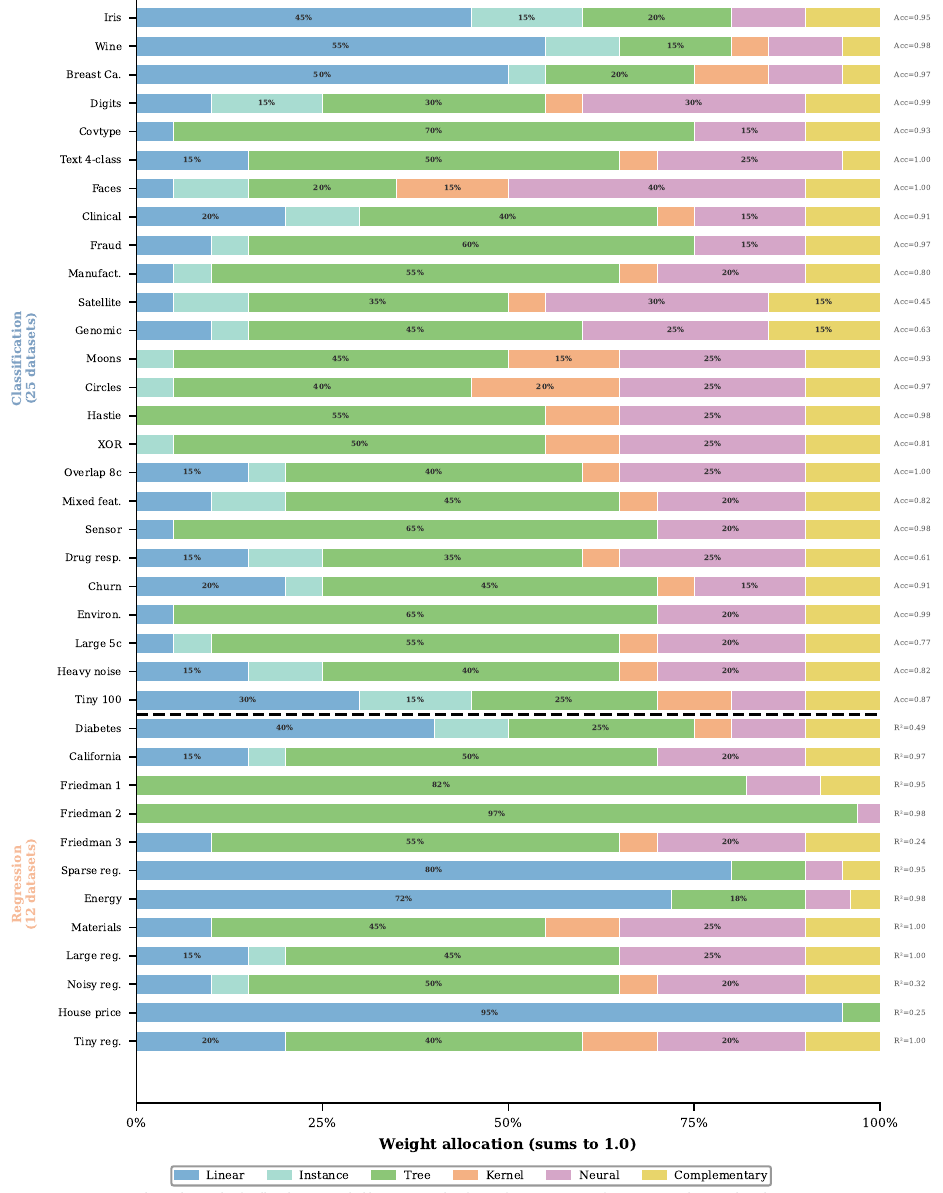}
\caption{\textbf{Supplementary Figure~3: Oracle-optimal specialist weights for all 37~datasets.} Each horizontal stacked bar shows the NNLS-optimized weight allocation across six specialist families (Linear, Instance, Tree, Kernel, Neural, Complementary) for one dataset. Dashed line separates classification (top, 25~datasets) from regression (bottom, 12~datasets). Performance metric (accuracy or $R^2$) shown on the right. Several patterns emerge: (i)~Tree specialists dominate most datasets, consistent with the documented superiority of tree-based methods on tabular data; (ii)~linear targets (House Price, Energy, Sparse Reg.) receive concentrated Linear weight; (iii)~complex problems (Satellite, Materials, Large Reg.) spread weight across 4--5~families; (iv)~Neural specialists receive substantial weight primarily on high-dimensional or pattern-rich datasets (Faces, Digits). This figure extends main-text Fig.~5, which shows six representative datasets.}
\label{fig:supp_weights}
\end{figure}

\end{document}